\documentclass[11pt]{article}
\pdfoutput=1
\usepackage{macro}

\usepackage{geometry}\geometry{margin = 1in}

\usepackage[english]{babel}
\usepackage[T1]{fontenc}
\usepackage[utf8]{inputenc}
\usepackage{authblk}

\usepackage{natbib}
\usepackage{url}
\usepackage{breqn}
\usepackage{centernot}
\usepackage{lscape}
\usepackage{longtable}
\usepackage{booktabs}
\usepackage{multirow}
\usepackage{lipsum}
\usepackage{algorithm, algorithmicx}
\usepackage{algpseudocode}

\usepackage{color}
\usepackage{enumerate}
\usepackage{graphicx, subfigure, caption}
\usepackage{subfigure}
\usepackage{amsfonts, amsmath, amsthm, bm, amssymb}
\usepackage{caption}
\usepackage{adjustbox}
\usepackage{threeparttable}
\usepackage{rotating}

\newcommand{\beginsupplement}{
	\newpage
	\setcounter{page}{1}
	\setcounter{section}{0}
	\renewcommand{\thesection}{S\arabic{section}}%
	\setcounter{theorem}{0}
	\setcounter{example}{0}
	\setcounter{property}{0}
	\setcounter{lemma}{0}
	\setcounter{proposition}{0}
	\setcounter{subtext}{0}
	\setcounter{remark}{0}
	\setcounter{corollary}{0}
	\setcounter{definition}{0}
	\setcounter{conjecture}{0}
	\setcounter{assumption}{0}
	\renewcommand{\thetheorem}{S\arabic{theorem}}
	\renewcommand{\thesubtext}{S\arabic{subtext}}
	\renewcommand{\theexample}{S\arabic{example}}
	\renewcommand{\theproperty}{S\arabic{property}}
	\renewcommand{\thelemma}{S\arabic{lemma}}
	\renewcommand{\theproposition}{S\arabic{proposition}}
	\renewcommand{\theremark}{S\arabic{remark}}
	\renewcommand{\thecorollary}{S\arabic{corollary}}
	\renewcommand{\thedefinition}{S\arabic{definition}}
	\renewcommand{\theconjecture}{S\arabic{conjecture}}
	\renewcommand{\theassumption}{S\arabic{assumption}}
	\setcounter{algorithm}{0}
	\setcounter{figure}{0}
	\setcounter{table}{0}
	\setcounter{equation}{0}
	\renewcommand{\thealgorithm}{S\arabic{algorithm}}
	\renewcommand{\thefigure}{S\arabic{figure}}
	\renewcommand{\thetable}{S\arabic{table}}
	\renewcommand{\theequation}{S\arabic{equation}}
	\onecolumn
	\pagestyle{empty}
}

\title{Robust Multiple Kernel $k$-means Clustering using Min-Max Optimization}
\author[1]{Seojin Bang\thanks{seojinb@cs.cmu.edu}}
\author[2]{Yaoliang Yu}
\author[1]{Wei Wu\thanks{weiwu2@cs.cmu.edu}}
\affil[1]{School of Computer Science, Carnegie Mellon University}
\affil[2]{School of Computer Science \& Waterloo AI Institute, University of Waterloo}

\begin{document}
\maketitle
\begin{abstract}
Multiple kernel learning is a type of multiview learning that combines different data modalities by capturing view-specific patterns using kernels. Although supervised multiple kernel learning has been extensively studied, until recently, only a few unsupervised approaches have been proposed. In the meanwhile, adversarial learning has recently received much attention. Many works have been proposed to defend against adversarial examples. However, little is known about the effect of adversarial perturbation in the context of multiview learning, and even less in the unsupervised case. In this study, we show that adversarial features added to a view can make the existing approaches with the $\min_{\Hv}$-$\min_{\thetav}$ formulation in multiple kernel clustering yield unfavorable clusters. To address this problem and inspired by recent works in adversarial learning, we propose a multiple kernel clustering method with the $\min_{\Hv}$-$\max_{\thetav}$ framework that aims to be robust to such adversarial perturbation. We evaluate the robustness of our method on simulation data under different types of adversarial perturbations and show that it outperforms several compared existing methods. In the real data analysis, We demonstrate the utility of our method on a real-world problem. 
\end{abstract}

\section{Introduction}
	In recent years, multiview (or multimodal) learning approaches have been developed to integrate abundant yet diverse data modality. Integrating diverse modalities is challenging because data from different sources (called \textit{views}) have different statistical properties. To address this problem, multiple kernel learning uses view-specific kernels to capture diverse patterns of multiple views \citep{lanckriet2004learning}. Then, it integrates views as a linear sum of multiple kernels weighted by kernel coefficients $\thetav$, and applies a standard classification or clustering algorithm to the combined kernel. Driven by advantages of using kernels, it has witnessed successes in various domains such
	as computer vision \citep{gehler2009feature} and document classification \citep{lanckriet2004learning}.
	
	While supervised multiple kernel learning has been extensively studied, only a few unsupervised approaches have been proposed until recently, among which, multiple kernel $k$-means clustering is one of the commonly used approaches. For simplicity, we limit our discussion here to the case of multiple kernel $k$-means clustering. Although details vary, they find clusters by alternately optimizing the kernel coefficients $\thetav$ and clustering assignment $\Hv$ as shown in Figure~\ref{fig:overview}. Existing works employ a $\min_{\Hv}$-$\min_{\thetav}$ (or $\max_{\Hv}$-$\max_{\thetav}$) framework. In particular, they first find a combination of views that reveals low within-cluster variance, and then find clusters minimizing such variance \citep{gonen2014localized,liu2017multiple,liu2016multiple,yu2012optimized,yao2018multiple}.
	
	Meanwhile, adversarial learning has received much attention in recent years. Plenty of studies have demonstrated that very small changes to input can make a model, in particular a deep learning model, to produce incorrect predictions \citep{biggio2013evasion,szegedy2014intriguing,goodfellow2015explaining}. This phenomenon is so-called \textit{adversarial example phenomenon}. 
	Many studies have proposed defence mechanisms resistant to adversarial example \citep{madry2018towards,sinha2018certifying,zhang2019theoretically}, in which they aim to minimize a loss under the maximum adversary. In particular, they use min-max framework that first finds adversarial examples that maximize a loss and then finds the model parameters that minimize the adversarial loss. In the context of deep learning, this min-max framework has become an effective approach to learn a robust model against adversarial attacks.
	
	Despite all these works in adversarial learning, little is known about adversary and robustness in the context of multiview learning, and even less in the unsupervised case. Inspired by recent works in adversarial learning, in this study, we show that adversarial features, e.g., a number of random noise or redundant variables, added to a certain view can deceive the existing $\min_{\Hv}$-$\min_{\thetav}$ methods. In particular, they make $\min_{\Hv}$-$\min_{\thetav}$ methods to ignore the view and find clusters that are largely determined by other views. For simplicity, we denote such features as \textit{adversarial perturbation}. 
	
	To address this problem, we propose a multiple kernel clustering method, \textit{multiple kernel $k$-means clustering with $\min_{\Hv}$-$\max_{\thetav}$ and $l_2$ regularization} (MML-MKKC). It aims to be robust to adversarial perturbation by using the $\min_{\Hv}$-$\max_{\thetav}$ formulation. In particular, the inner maximization finds a combination of multiple views in favor of a view (or views) that reveals high within-cluster variance, whereas the outer minimization finds clusters that minimize such variance. By capturing such variance while adversary is present, our method can mitigate the effect of adversarial perturbation (see details in Section~\ref{sec:robust}).
	
	We evaluate our method on the simulated multiview data with adversarial perturbations that allow us to assess robustness of our method. The result shows that our method outperforms the compared existing multiple kernel clustering methods and yields clusters by making good use of all views, including the view with the added perturbation. We also demonstrate the utility of our method on a real-world problem to identify cancer subtypes. 
	~\\~\\
	Our main contributions are as follow.
	\begin{itemize}
		\item To the best of our knowledge, this is the first work that studies adversarial perturbation in a unsupervised multiview setting. In particular, we examine the effect of potential adversaries on existing multiview clustering models.
		\item We found out that adversarial perturbation can make existing multiview clustering methods with the $\min_{\Hv}$-$\min_{\thetav}$ framework yield unfavorable results. They tend to ignore the view with adversarial perturbation and find clusters by relying largely on other views. 
		\item We propose a multiple kernel $k$-means clustering method MML-MKKC using a $\min_{\Hv}$-$\max_{\thetav}$ framework that aims to be robust to adversarial perturbations. This is achieved by minimizing the within-cluster variance in a combination of the views that reveals high within-cluster variance.
	\end{itemize}
	
\section{Method}\label{sec:method}
	In this section, we begin with introducing prior works. We then propose a multiple kernel clustering method that aims to be robust against adversarial perturbation.
	
	\subsection{Kernel $k$-means clustering}
	Let $\xv \in \Rc^p$ be a sample instance and $\phiv:\Rc^p \rightarrow \Fc$ is a nonlinear mapping of $\xv$ onto a reproducing kernel Hilbert space $\Fc$. By mapping to a higher dimensional feature space using $\phiv$, kernel $k$-means clustering linearly separates samples that were only non-linearly separable in the input space \citep{girolami2002mercer}. The optimization problem of kernel $k$-means clustering is the same as $k$-means clustering but replacing $\xv$ with a nonlinear mapping $\phiv(\xv) \in \Fc$, which is:
	\begin{align}
	\mini_{\Zv \in \cbb{0,1}^{n\times k}} \sum_{c=1}^{k}\sum_{i=1}^{n} z_{ic}|| \phiv \bb{\xv_i} - \muv_c ||^2_2 \quad
	\text{~s.t.~} \sum_{c=1}^{k}z_{ic} = 1 \nonumber
	\end{align}
	where $\xv_i$ is $i$-th sample instance, $z_{ic}$ is a binary cluster assignment for $i$-th sample and cluster $c$; $\muv_c=\sum_{i=1}^{n} z_{ic} \phiv \bb{\xv_i}/n_c$ is cluster center; $n_c=\sum_{i=1}^{n}z_{ic}$ is the size of cluster $c$; and $n$ is the number of samples. This is viewed as to minimize within-cluster variance in the feature space. This problem can be reformulated as a trace minimization \citep{zha2002spectral}: 
	\begin{align*}
	\mini_{\Zv \in \cbb{0,1}^{n\times k}} \textbf{tr} \bb{\Kv - \Lv^{1/2}\Zv^\top\Kv\Zv\Lv^{1/2}} \quad
	\text{~s.t.~} \Zv\mathbf{1}_k = \mathbf{1}_n
	\end{align*}
	where $\Zv = [z_{ic}]_{n\times k}$, $\Lv = \text{diag}\sbb{1/n_1, \cdots , 1/n_k}$, and $\Kv = \sbb{\phiv(\xv_i) \cdot \phiv(\xv_j)}_{n\times n}$. Unfortunately, this problem is NP-hard \citep{michael1979computers}. Note that $\Hv = \Zv\Lv^{1/2}$ represents normalized clustering assignment. Hence, we solve it by eliminating the discrete constraint on $\Hv$ while keeping the orthogonal constraint on $\Hv$:
	\begin{align}\label{exp:opt03}
	\mini_{\Hv \in \Rc^{n\times k}} \textbf{tr} \bb{\Kv - \Hv^\top\Kv\Hv}\quad
	\text{~s.t.~} \Hv^\top\Hv = \Iv_k
	\end{align}
	This is solved by a well-known result from \citet{fan1949theorem} (see Theorem~\ref{theo:kyfan}). The optimal solution is given by $\Hv = \Uv_k\Qv$ where each column of $\Uv_k = [\uv_1, \cdots, \uv_k]$ is eigenvectors of $\Kv$ involved with $k$ largest eigenvalues $\lambda_1 \geq \cdots \geq \lambda_k$ and $Q$ is an arbitrary orthogonal matrix. That is, the $k$ eigenvalues are one of the continuous solutions to the discrete cluster assignment \citep{ding2004k}. 
	After obtaining the continuous solution, the hard clustering assignment $\Zv$ is recovered by QR decomposition on $\Hv$ \citep{zha2002spectral} or by $k$-means clustering on normalized $\Hv$ \citep{ng2002spectral}.
	
	\subsection{Existing multiple kernel $k$-means clustering}\label{sec:general}
	Multiple kernel $k$-means clustering extends kernel $k$-means clustering, which has an additional procedure to combine multiple views. It captures view-specific similarity with different kernels and combines multiple kernels weighted by kernel coefficient $\thetav$. For example, it uses $\Kv_{\thetav} = \sum_{v=1}^{m}\theta^{(v)}\Kv^{(v)}$ or $\Kv_{\thetav} = \sum_{v=1}^{m}{\theta^{(v)}}^2\Kv^{(v)}$ where $\thetav^{(v)}$ is a (non-negative) kernel coefficient for view $v$. For a given $\Kv_{\thetav}$, it finds clusters that minimize within-cluster variance in the combined space. The problem is defined as follow:
	\begin{align}\label{exp:opt21}
	\mini_{\Hv \in \Rc^{n\times k}}&~\mini_{\thetav} \textbf{tr} \bb{\Kv_{\thetav} - \Hv^\top\Kv_{\thetav}\Hv}\\
	&\text{~s.t.~ } \Hv^\top\Hv = \Iv_k,~\thetav \geq \mathbf{0} \nonumber, f\bb{\thetav} \leq \mathbf{0} \nonumber
	\end{align}
	where $\thetav = \sbb{\theta^{(1)}, \cdots, \theta^{(m)}}^\top \in \Rc_+^m$, and $f\bb{\thetav} \leq \mathbf{0}$ is an \textit{appropriate} constraint on $\thetav$; without such constraint the inner minimization will have a trivial solution $\thetav = \mathbf{0}$. This problem is solved by alternately optimizing kernel coefficients $\thetav$ and clustering assignment matrix $\Hv$ given each other. 
	
	Existing methods are similar in that they all use the $\min_{\Hv}$-$\min_{\thetav}$ (or $\max_{\Hv}$-$\max_{\thetav}$) framework. \citet{gonen2014localized} captured the sample-specific characteristics by using sample-specific kernel coefficients. \citet{liu2017multiple} extended Gonen's approach to perform clustering under incomplete kernel matrices. \citet{liu2016multiple} used a matrix-induced $l_2$ regularization on $\thetav$ to avoid redundancy and improve the diversity of multiple kernels. \citet{yao2018multiple} incorporated a representative kernel selection process into multiple kernel $k$-means clustering to reduce redundancy and enhance the diversity of kernels. \citet{yu2012optimized} aimed to maximize between-cluster variance, hence, they used $\max_{\Hv}$-$\max_{\thetav}$, instead of $\min_{\Hv}$-$\min_{\thetav}$.
	
	\subsection{Robust multiple kernel $k$-means clustering}\label{sec:robust}
	We propose a multiple kernel $k$-means clustering method, MML-MKKC, that aims to be robust against adversarial perturbation. In order to achieve this, we use a $\min_{\Hv}$-$\max_{\thetav}$ formulation that combines views in a way to reveal high within-cluster variance in the combined space $\Kv_{\thetav}$ and then updates clusters by minimizing such variance. 

	The optimization problem of our method is:
	\begin{align}\label{exp:opt13}
	\mini_{\Hv \in \Rc^{n\times k}}& \maxi_{\thetav} \textbf{tr} \bb{\Kv_{\thetav} - \Hv^\top\Kv_{\thetav}\Hv}\\
	&\text{~s.t.~ } \Hv^\top\Hv = \Iv_k,~\thetav^\top\thetav \leq 1,~\thetav \geq \mathbf{0} \nonumber 
	\end{align}
	where $\Kv_{\thetav} =\sum_{v=1}^{m}\theta^{(v)}\Kv^{(v)}$. This problem can also be solved by alternately optimizing $\thetav$ and $\Hv$ given each other. 

	Note that we employ $l_2$ regularization on $\thetav$ to avoid sparse solutions. The advantages of using an $l_2$ constraint were described previously in situations when the sources of data were carefully selected and carried complementary information \citep{yu20102,kloft2009efficient,kloft2011lp}. 
	
	The $\min_{\Hv}$-$\max_{\thetav}$ framework is more favorable than $\min_{\Hv}$-$\min_{\thetav}$ in the context of multiview clustering. At every iteration, the inner maximization finds a combination of the views that maximizes within-cluster variance, while the outer minimization updates clusters that minimizes such variance. We argue that by revealing high within-cluster variance in the combined space, our method can capture more comprehensive patterns of multiple views and thus has a better opportunity to find `true' clusters. In the presence of adversarial perturbation, $\max_{\thetav}$ is particularly important because the effect of such perturbation can be mitigated when the method can tolerate a high within-cluster variance.

	In contrast, the $\min_{\Hv}$-$\min_{\thetav}$ framework aims to find a combination of the views that minimizes within-cluster variance, and then updates clusters that minimize such variance. That is, $\min_{\Hv}$-$\min_{\thetav}$ approach is not designed to tolerate the view with high within-cluster variance. This can be problematic because adversarial perturbation can cause the perturbed view(s) to have higher within-cluster variance, which makes $\min_{\Hv}$-$\min_{\thetav}$ to ignore such view(s).
	
	In Section~\ref{sec:minmax-discuss}, we illustrate with an example how adversarial perturbation affects multiview clustering and how the $\min_{\Hv}$-$\max_{\thetav}$ and $\min_{\Hv}$-$\min_{\thetav}$ frameworks behave under adversarial perturbations.
	
	\section{Algorithm}\label{sec:algorithm}
	We alternately optimize the kernel coefficients $\thetav$ and the continuous cluster assignment matrix $\Hv$ given each other: (i) given $\Hv$, $\thetav$ is optimized by solving a quadratically constrained linear programming (QCLP) problem, and (ii) given $\thetav$, $\Hv$ is optimized by solving the problem (\ref{exp:opt03}). \textsf{R} package implemented our method is freely available at \url{https://github.com/SeojinBang/MKKC}.\\
	
	Before the iteration, we center the combined mapping function $\phiv_{\thetav}\bb{\xv_i}$ by using a kernel trick $\Kv_{\thetav} \leftarrow \Kv_{\thetav}\text{ -- }\Jv_n \Kv_{\thetav}\text{ -- }\Kv_{\thetav} \Jv_n + \Jv_n \Kv_{\thetav} \Jv_n$ where $\Jv_n = \mathbf{1}_n\mathbf{1}_n^\top/n$ \citep{scholkopf1998nonlinear}. We scale each kernel matrix before combining them by ${\Kv}^{(v)} \leftarrow {\Kv^{(v)}}/\mathbf{tr} \bb{\Kv^{(v)}}$ to make multiple views comparable to each other \cite{ong2008automated,kloft2011lp}. We refer to Text \ref{centering_scaling} for a detailed discussion about centering and scaling.
	
	\subsection{Estimation of $\thetav$}\label{sec:estimationtheta} 
	Given $\Hv$, the optimization problem (\ref{exp:opt13}) is reformulated as:
	\begin{align}\label{exp:opt14}
	\maxi_{\thetav} & \sum_{v=1}^{m}{\theta^{(v)}} \textbf{tr} \bb{\Kv^{(v)}-\Hv^\top\Kv^{(v)}\Hv}\\
	&\text{~s.t.~ } 
	\frac{1}{2}\thetav^\top\Qv_m\thetav \leq 1,~\thetav \geq \mathbf{0} \nonumber 
	\end{align}
	where $\Qv_m = \text{diag}\sbb{2, \cdots, 2}$. Since $\Qv_m$ is a diagonal matrix, this problem is separable. Hence, the entire problem is solved as a conic quadratic program (i.e. second order cone program). It usually performs better than QCLP and is based on more solid duality theory \citep{mosektechreport}. Therefore, we translate QCLP to the conic formulation as follows:
	\begin{align}
	\maxi_{\thetav} &~\cv^\top\thetav\quad
	\text{~s.t.~ } \sbb{p, \thetav}^\top \in \Kc^q,~p = 1,~\mathbf{0} \leq \Iv_m\thetav\nonumber
	\end{align}
	where $\scriptstyle \cv^\top = [\textbf{tr} \bb{\Kv^{(v)}\text{-- }\Hv^\top\Kv^{(v)}\Hv}$ $\cdots$, $\scriptstyle \textbf{tr} \bb{\Kv^{(m)}\text{-- }\Hv^\top\Kv^{(m)}\Hv}]$ and $\scriptstyle \Kc^q = \cbb{p \geq \sqrt{\sum_{v=1}^{m}{\theta^{(v)}}^2}}$. This problem is analytically solved by existing software such as \texttt{mosek} \citep{mosek}. In fact, the optimization problem has a closed form solution (See Proposition~\ref{theta-closed-sol} and \ref{lem:trick2} for proof): 
	\begin{align*}
	\scriptstyle
	\thetav = \left( \tfrac{g^{(1)}}{\sqrt{\left(g^{(1)}\right)^2 + \cdots + \left(g^{(m)}\right)^2}}, \cdots, \tfrac{g^{(m)}}{\sqrt{\left(g^{(1)}\right)^2 + \cdots + \left(g^{(m)}\right)^2}} \right)
	\end{align*}
	where $g^{(v)}(\Hv) = \textbf{tr} \bb{\Kv^{(v)}-\Hv^\top\Kv^{(v)}\Hv}$ is the within-cluster variance in view $v$ and $g^{(v)}(\Hv) \geq 0$. More precisely, $\scriptstyle g^{(v)}(\Hv)$ is a sum of variance and covariance of view $v$ that are not explained by the previous clusters $\Hv$. Therefore, a view will have larger $\theta^{(v)}$ if its variability is not well explained by previous clsuters; and a combined view weighted by such $\thetav$ will have higher within-cluster variance
	by doing so, it updates $\thetav$ to find a combination of views with higher within-cluster variance.

    We mathematically prove it by showing 
	\begin{dmath*}
		\scriptstyle
		g^{(v)}(\Hv) = \textbf{tr} \bb{\Xv^{(v)}{\Xv^{(v)}}^\top} - \left( \textbf{tr} \bb{{\Vv_{1:k}^{(v)}}^\top{\Xv^{(v)}}^\top\Xv^{(v)}\Vv_{1:k}^{(v)}} + \sum_{w\neq v}\textbf{tr} \bb{{\Vv_{1:k}^{(v)}}^\top{\Xv^{(v)}}^\top\Xv^{(w)}\Vv_{1:k}^{(w)}} \right)
	\end{dmath*}
	where $\Xv^{(v)}$ is a $n\times p_v$ centered data matrix for view $v$, $\Vv_{1:k}^{(1)}$ is a matrix including the first $p_1$ rows of an orthogonal matrix $\Vv_{1:k}$ whose columns are the first $k$ right-singular vectors of $\Xv$, $\Vv_{1:k}^{(2)}$ is a matrix including the next $p_2$ rows of $\Vv_{1:k}$, and so on. Without loss of generality, we assume $\phiv(\xv) = \xv$. See Proposition~\ref{mkpca-conclusion} for proof. 
	
	This equation provides a more precise description about how $\thetav$ is estimated. Note that $\scriptstyle \textbf{tr} \bb{\Xv^{(v)}{\Xv^{(v)}}^T}$ is viewed as total variance of view $v$; $\scriptstyle \textbf{tr} \bb{{\Vv_{1:k}^{(v)}}^T{\Xv^{(v)}}^T\Xv^{(v)}\Vv_{1:k}^{(v)}}$ is viewed as variance of view $v$ explained by $\Hv$; $\scriptstyle \textbf{tr} \bb{{\Vv_{1:k}^{(v)}}^T{\Xv^{(v)}}^T\Xv^{(w)}\Vv_{1:k}^{(w)}}$ is viewed as covariance of view $v$ and view $w$ explained by $\Hv$. Consequently, above equation tells that $\scriptstyle g^{(v)}(\Hv)$ is a sum of unexplained variance and covariance of view $v$ given previous clusters $\Hv$. Considering $\theta^{(v)}$ is proportional to $\scriptstyle g^{(v)}(\Hv)$, we conclude that a view has a greater $\theta^{(v)}$ when its variability is not well explained by previous clusters.
	
	\subsection{Estimation of $\Hv$}
	Given $\thetav$, the optimization problem (\ref{exp:opt13}) is reduced to a simple kernel $k$-means clustering problem. This is the same with the problem (\ref{exp:opt03}) and the optimal solution is $\Hv = \Uv_k\Qv$. Columns of $\Uv_k$ are eigenvectors of $\Kv$ corresponding to the $k$ largest eigenvalues, and $\Qv$ is an arbitrary orthogonal matrix. Hence, any spectral clustering methods can be used to restore the binary clustering assignment matrix $\Zv$ from the continuous clustering assignment matrix $\Hv$. Here, we use a spectral clustering method proposed by \citet{ng2002spectral}.
	
	\subsection{Illustration with an example}\label{sec:minmax-discuss}
    We illustrate with an example how an adversarial feature affects multiview clustering and how the $\min_{\Hv}$-$\max_{\thetav}$ and $\min_{\Hv}$-$\min_{\thetav}$ frameworks behave under adversarial perturbation. 
    Consider a two-view data $\{(\xv_i^{(A)}, \xv_i^{(B)})\}_{i=1}^{N}$ and unobserved cluster labels $\{c_i\}_{i=1}^{N}$ ($c = 1, 2, 3$) for samples in the data where the two views, view A and view B, have complementary patterns from each other. More precisely, we consider:
	\begin{align*}
	\xv^{(A)}~|~c~\sim \mathcal{N}(\muv_1\cdot 1_{\{c = 1\}}+\muv_2\cdot 1_{\{c \neq 1\}},~\Sigma^2)\\
	\xv^{(B)}~|~c~\sim \mathcal{N}(\muv_1\cdot 1_{\{c = 3\}}+\muv_2\cdot 1_{\{c \neq 3\}},~\Sigma^2)
	\end{align*}
	where $\muv_1 \neq \muv_2$; $\xv_i^{(A)} \in \mathcal{R}^{p_A}$ and $\xv_i^{(B)}  \in \mathcal{R}^{p_B}$ where $p_A = p_B = p$ for simplicity. Note that $\xv_i^{(A)}$ can separate cluster 1 from others and $\xv_i^{(B)}$ can separate cluster 3 from others, hence views A and B together can separate all three clusters. We then add an adversarial feature, 
	\begin{align*}
	\epsilon \sim \mathcal{N}(0, 1)
	\end{align*} to view A, hence view A has $p + 1$ features, $\xv_i^{(A)}$ and $\epsilon$. Without loss of generality, we assume that $\phiv(\xv) = \xv$. We also center and scale features so that each feature has zero sample mean and unit variance. 
	
	Following \citet{ong2008automated,kloft2011lp}, we scale the kernels (discussed in Text \ref{centering_scaling}), so that the two views have the same total variance by doing $\scriptstyle {\Kv}^{(v)} \leftarrow {\Kv^{(v)}}/\mathbf{tr} \bb{\Kv^{(v)}}$. This is important because it allows the two views to be comparable to each other. Thus, we get $\scriptstyle \Kv^{(A)} = \left[\frac{1}{p+1}\left(\xv_i^{(A)}\cdot\xv_j^{(A)} + \epsilon_i\epsilon_j\right)\right]_{ij}$ and $\scriptstyle \Kv^{(B)} = \left[\frac{1}{p}\xv_i^{(B)}\cdot\xv_j^{(B)}\right]_{ij}$.
	At the initial step, with uniform kernel coefficients $\scriptstyle \theta^{(A)} = \theta^{(B)} = 1/2$, we have: 
	\begin{align}\label{eq:combinedkernel}
	\scriptstyle \Kv_{\thetav} = \scriptstyle\left[\frac{1}{p+1}\left(\xv_i^{(A)}\cdot\xv_j^{(A)} + \epsilon_i\epsilon_j\right) + \frac{1}{p}\xv_i^{(B)}\cdot\xv_j^{(B)}\right]_{ij} \times \frac{1}{2}.
	\end{align}
	Note that each element of $\Kv_{\thetav}$ represents dissimilarity between a pair of samples in the combined space. Eq.~(\ref{eq:combinedkernel}) shows that dissimilarity measured by $\xv^{(B)}$ contributes more than that measured by $\xv^{(A)}$, which can make the initial clusters that are largely based on $\xv^{(B)}$. This leads to the between-cluster variance of $\xv^{(B)}$ greater than that of $\xv^{(A)}$, i.e., $\scriptstyle \sum_{c=1}^{3} N_c\overline{\xv_c}^{(B)}\cdot\overline{\xv_c}^{(B)} \geq \sum_{c=1}^{3} N_c\overline{\xv_c}^{(A)}\cdot\overline{\xv_c}^{(A)}$ (in probability) where $\scriptstyle \overline{\xv_c} = \sum_{c_i = c}\xv_i/N_c$ is the cluster center and $N_c$ is the size of cluster $c$. 
	
	Note that within-cluster variance in each view is given as: 
	\begin{align*}
	\scriptstyle g^{(A)}(\Hv) &\scriptstyle = 1 - \frac{1}{p+1}\sum_{c=1}^{3} N_c\left(\overline{\xv}_c^{(A)}\cdot\overline{\xv}_c^{(A)} + \overline{\epsilon_c}^2\right)\\
	\scriptstyle g^{(B)}(\Hv) &\scriptstyle = 1 - \frac{1}{p}\sum_{c=1}^{3} N_c\overline{\xv_c}^{(B)}\cdot\overline{\xv}_c^{(B)}.
	\end{align*}
    From above, we can infer that $g^{(A)}(\Hv) \geq g^{(B)}(\Hv)$ (in probability) from $\scriptstyle \sum_{c=1}^{3} N_c\overline{\xv_c}^{(B)}\cdot\overline{\xv_c}^{(B)} \geq \sum_{c=1}^{3} N_c\overline{\xv_c}^{(A)}\cdot\overline{\xv_c}^{(A)}$ and $\scriptstyle \overline{\epsilon_c} = 0$ where $\scriptstyle\overline{\epsilon_c} = \sum_{c_i = c}\epsilon_i/N_c$ is the cluster mean of $\epsilon$. On the other hand, if there is no adversarial perturbation, the within-cluster variance of view A tends to be equal to that of view B, i.e., $g^{(A)}(\Hv) = g^{(B)}(\Hv)$ in probability. Thus, the adversarial feature added to view A causes the disparity of the within-cluster variance between views A and B (i.e. $\scriptstyle g^{(A)}(\Hv)$ and $\scriptstyle g^{(B)}(\Hv)$).

    Under the disparity originated from the adversarial feature added to view A, the $\min_{\Hv}$-$\max_{\thetav}$ first updates $\theta^{(A)}$ to have a larger value than $\theta^{(B)}$. This makes variance of view A is magnified (relative to view B) in the combined space so that $\Hv$ can explain more variability of view A than B. At every later iteration, $\min_{\Hv}$-$\max_{\thetav}$ alternately magnifies variance of each view while alleviating the disparity. As a result, it yields clusters at a saddle point where both views are almost equally favored, thus finding all three clusters by using complementary patterns from both views. 
    
    On the other hand, the $\min_{\Hv}$-$\min_{\thetav}$ first updates $\theta^{(B)}$ to have a larger value than $\theta^{(A)}$. This makes variance of view B is magnified (relative to view A) in the combined space so that $\Hv$ can explain more variability of view B than A. At every later iteration, $\min_{\Hv}$-$\min_{\thetav}$ keep magnifying variance of view B while aggravating the disparity. As a result, it yields clusters that are largely determined by view B.
	
	Furthermore, from this example, we can see that the $\min_{\Hv}$-$\min_{\thetav}$ framework is not preferable when: i) a view is adversarially perturbed; and/or ii) true clusters are determined by complement views that together provide comprehensive patterns about the clusters.
	
	This above-described difference between the $\min_{\Hv}$-$\max_{\thetav}$ and $\min_{\Hv}$-$\min_{\thetav}$ is also observed in simulation experiments (Figure~\ref{fig:thetavsiter}). 
	\begin{figure}[!ht]
		\centerline{\includegraphics[width=\linewidth]{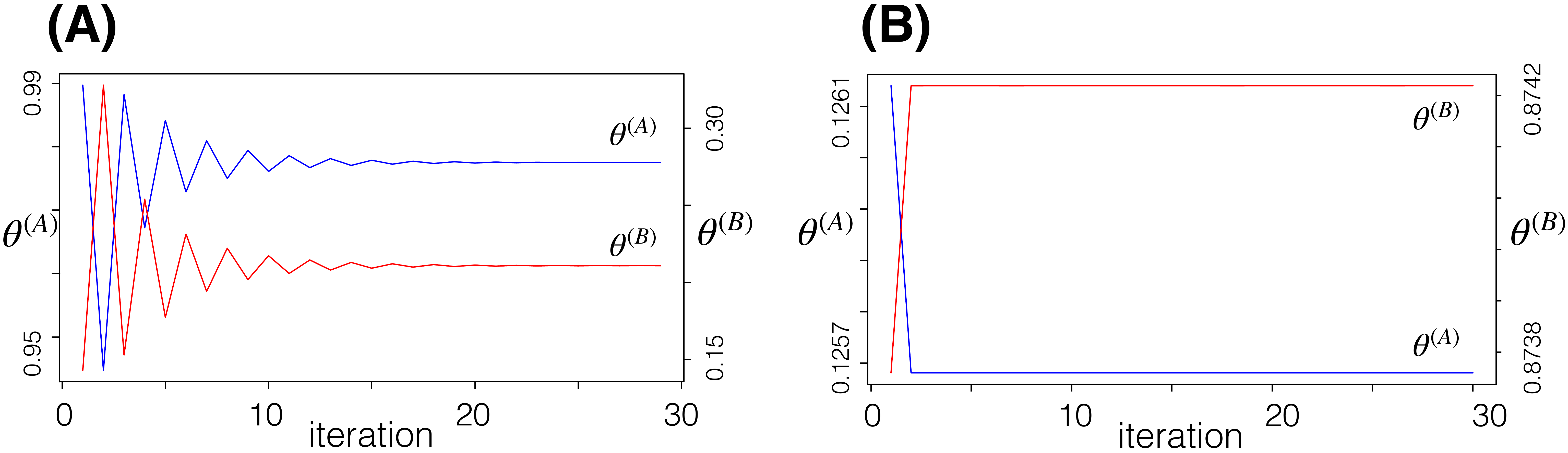}}
		\captionsetup{font=footnotesize,labelfont=footnotesize}
		\caption{Kernel coefficients $\thetav$ are updated by (A) the $\min_{\Hv}$-$\max_{\thetav}$ (our method) and (B) the $\min_{\Hv}$-$\min_{\thetav}$ (Gonen's MKK is shown here, but other compared methods show similar behaviors). One of our simulation data (B-Noise with five noise variables) is used.}\label{fig:thetavsiter}
	\end{figure}
	
	\subsection{Proof of convergence}\label{proofconv}
	We prove that if our alternating strategy converges, it will converge to the global optimal solution, which motivates our alternating strategy. First, recall the $\min_{\Hv}$-$\max_{\thetav}$ problem (\ref{exp:opt13}) and the optimization function $f(\Hv, \thetav) = \textbf{tr} \bb{\Kv_{\thetav} - \Hv^\top\Kv_{\thetav}\Hv}$ where $\Hv^\top\Hv = \Iv_k,~\thetav^\top\thetav \leq 1$, and $\thetav \geq \mathbf{0}$. A saddle point $(\Hv^*, \thetav^*)$ of this problem is defined as follows: 
	\begin{align*}
	&f(\Hv^*, \thetav^*) \geq f(\Hv^*, \thetav) \text{ for all } \thetav\\
	&f(\Hv^*, \thetav^*) \leq f(\Hv, \thetav^*) \text{ for all } \Hv.
	\end{align*} 
	That is, given $\Hv^*$, $\thetav^*$ is the maximum among all $\thetav$ and given $\thetav^*$, $\Hv^*$ is the minimum among all $\Hv$. From this definition, it is clear that if the alternating strategy converges, then it converges to a saddle point. Moreover, it is known that if $(\Hv^*, \thetav^*)$ is a saddle point, then: (1) $\Hv^*$ is a globally optimal solution for  $\min_{\Hv} f_1(\Hv)$ where $f_1(\Hv) = \max_{\thetav} f(\Hv, \thetav)$ and (2) $\thetav^*$ is a globally optimal solution for $\max_{\thetav} f_2(\thetav)$ where $f_2(\thetav) = \min_{\Hv} f(\Hv, \thetav)$. Therefore, if we want to find a global minimizer $\Hv$, we can try to find a saddle point $(\Hv^*, \thetav^*)$ whose definition motivates our alternating strategy. This alternating strategy is practically efficient and easy to implement because each of the two steps ($\min_{\Hv}$, $\max_{\thetav}$) has a closed-form solution which requires fewer iterations than a gradient approach to converge. Such alternating approach has also been used in other works such as Generative Adversarial Net \citep{goodfellow2014generative}, and robust models against adversarial examples \citep{madry2018towards,sinha2018certifying}.
	
	\section{Simulation experiments}\label{sec:simulation}
	\subsection{Adversarial perturbations}
	We evaluate robustness of our method against two types of adversarial features added to a view:
	\begin{itemize}
		\item Noise variables that are independently sampled from Gaussian distribution with zero-mean and unit-variance. We add different numbers ($N_{noise} = 0, 1, \cdots$) of noise variables.
		\item Redundant variables that are correlated with original variables. We add different numbers ($N_{redun} = 1, 2, \cdots$) of variables having different correlations ($cor = 1, 0.97, 0.90, 0.72, 0.45$) with the original variables.
	\end{itemize}
	Under these perturbations, we examine how our method makes use of complementary patterns of multiple views. For this purpose, we first generated multiview data in three scenarios A--C having two or three views that have complementary patterns necessary for identifying true clusters. Scenario A is composed of a complete view that has complete information to detect the three clusters and a partial view that only conveys partial information. Scenario B is composed of two different partial views so that each view alone cannot completely detect the three clusters. Both scenarios A \& B aim to test how the compared methods use the complementary information in two views. Scenario C is composed of two different partial views and a noise view. It aims to test further whether the methods robustly use complementary information from views even when one of the views contains only noise variables.
	
	Then, as illustrated in Figure~\ref{fig:multiviewscenAll}, we added different types and levels of adversarial features to one of the views. We denote the simulation data with the noise variables by A-Noise, B-Noise, and C-Noise, and the data with the redundant variables by A-Redun, B-Redun, and C-Redun. 

	For detailed explanation about data and preprocessing, see Text \ref{sim-detail}. 
	\begin{figure}[!ht]
		\centering
		\captionsetup{font=footnotesize,labelfont=footnotesize}
		\includegraphics[width=\columnwidth]{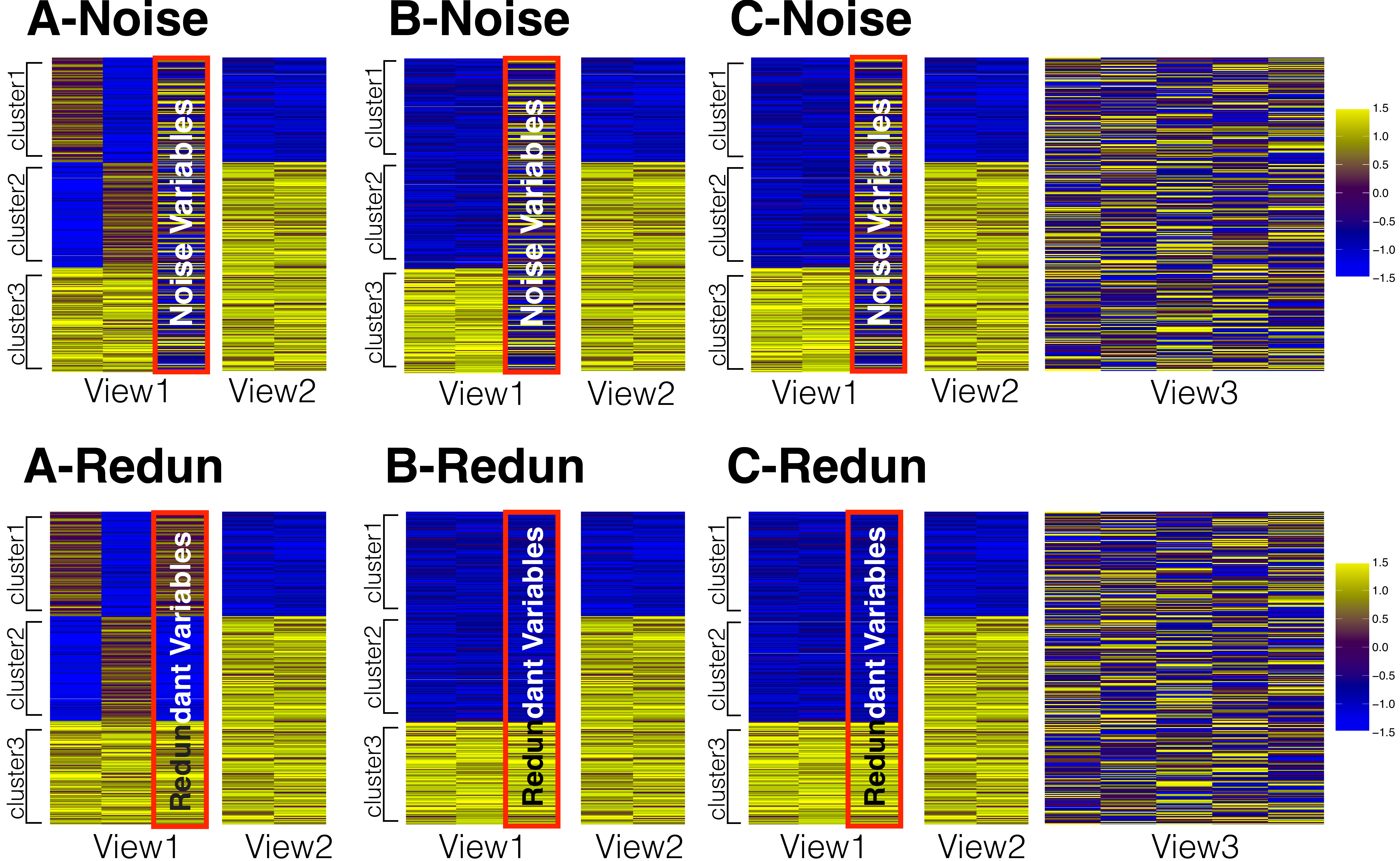}
		\caption{Simulation scenarios with two types of adversarial perturbations, noise and redundancy. The samples in the heatmaps are ordered by their true cluster index. Each simulation data contains 300 samples with 100 samples in each cluster, i.e. three true clusters for each data.
		}\label{fig:multiviewscenAll}
	\end{figure}
	
	\subsection{Compared methods}
	We compare our method with seven other methods: two baseline methods, \textbf{Single Best} and \textbf{Uniform Weight}; four multiple kernel $k$-means clustering methods, \textbf{Gonen's MKK} and \textbf{LMKK} \citep{gonen2014localized}, \textbf{Liu's MKK-MIR} \citep{liu2016multiple}, and \textbf{Yu's OKKC} \citep{yu2012optimized}); and one variant of our method, \textbf{MinMax-MinC}. Single Best uses the best view that has the smallest within-cluster variance. Uniform Weight gives the same weights to all views. Gonen's MKK, Gonen's LMKK, and Liu's MKK-MIR combine multiple kernels by $\Kv_{\thetav} = \sum_{v=1}^{m}{\theta^{(v)}}^2\Kv^{(v)}$, with $l_1$ constraint on $\thetav$, and use the $\min_{\Hv}$-$\min_{\thetav}$ framework. Yu's OKKC combines multiple kernels by $\Kv_{\thetav} = \sum_{v=1}^{m}\theta^{(v)}\Kv^{(v)}$ and uses $l_p$ constraint on $\thetav$ where $p \geq 1$, and uses the $\max_{\Hv}$-$\max_{\thetav}$ framework. MinMax-MinC is the $l_1$-regularization version of our method, which is included to examine the effect of $l_2$-regularization in our method. For a detailed description, see Text \ref{sim-detail}. 
	
	We evaluate how robustly the methods recover the true cluster against such perturbations. For evaluation, we use three metrics: Adjusted Rand Index (AdjRI, \citep{hubert1985comparing}), Normalized Mutual Information (NormMI, \citep{strehl2002cluster}), and Purity \citep{Manning2008information}.
	
	\subsection{Simulation results}
	\begin{figure*}[!ht]
		\centerline{\includegraphics[width=\textwidth]{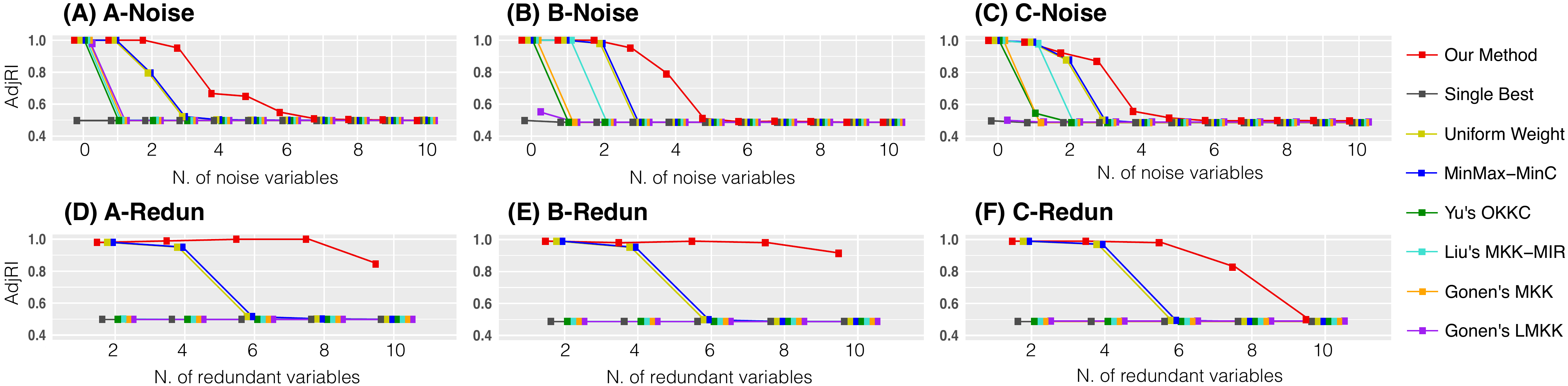}}
		\captionsetup{font=footnotesize,labelfont=footnotesize}
		\caption{Clustering performance. AdjRI versus the number of the noise (A--C) or redundant variables (D--F, $cor = 0.90$) added to view 1. The identified clusters are compared to the true clusters. }\label{fig:evalall}
	\end{figure*}
	
	\begin{figure*}[!ht]
		\centerline{\includegraphics[width=\textwidth]{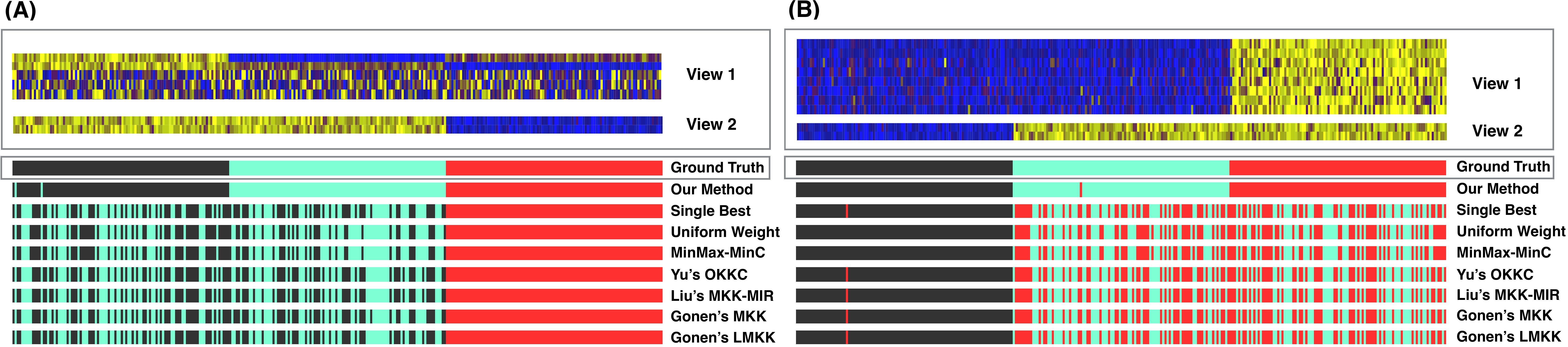}}
		\captionsetup{font=footnotesize,labelfont=footnotesize}
		\caption{Clustering results of (A) A-Noise with three noise variables and (B) B-Redun with three redundant variables and $cor = 0.7$. The heatmaps at the top panel illustrate the two-view data where the rows and columns represent variables and samples, respectively. The tile plots at the bottom panel illustrate the clustering results. The first tile plot shows the ground truth; the second to the last plots show results of the methods. Identified clusters are labeled in different colors. 
		}\label{fig:scatter}
	\end{figure*}
	
	We illustrate the results in Figures~\ref{fig:evalall} \&~\ref{fig:scatter}. As shown in Figure~\ref{fig:evalall}, when there is no adversarial perturbation, all methods accurately identifies the true clusters; however, when the adversarial perturbations are present, our method identifies the true clusters more robustly than others. Note that existing $\min_{\Hv}$-$\min_{\thetav}$ methods perform even worse than Uniform Weight. Figure~\ref{fig:scatter} illustrates that our result agrees well with the ground truth, which indicates that it better uses the complementary information in both views. However, the other $\min_{\Hv}$-$\min_{\thetav}$ methods fail to distinguish the first two clusters in Figure \ref{fig:scatter}A and the last two clusters in Figure \ref{fig:scatter}B, which suggests that they identify clusters mainly based on View 2 that is not perturbed. For further results, see Table~\ref{table:evalclusteringAnoise}--~\ref{table:evalclusteringCredun}.
	
	\section{Real data analysis}\label{sec:realdata}
	With the advent of various genome-wide technologies, a wide array of biomedical data has been available, which includes clinical characteristics, DNA copy number, and gene expression profiling. They contain complementary information that can together provide a comprehensive understanding and novel insight into biomedical problems. In this section, we present application of our method to a biomedical problem to identify cancer subtypes.
	
	\subsection{Identification of cancer subtype}
	We compared our method with other methods using two TCGA multi-omics cancer datasets. Each dataset includes 468 patients with human breast invasive carcinoma (BRCA) and 251 patients with glioblastoma multiforme (GBM), respectively \citep{weinstein2013cancer}. BRCA has three views: mRNA sequencings, miRNA sequencings, and copy number variations. GBM also has three views: gene expression microarray profiling, copy number variation, and methylation data. A radial basis function kernel is used for all views as suggested by \citet{lanckriet2004statistical}. See Text \ref{real-detail} for details about data preprocessing.
	
	Since there is no ground-truth subtype, we compared clinical properties of identified clusters that are observed independently from data and examined how distinct they are. In BRCA, we compared the \textit{AJCC neoplasm disease stage} (which describes the extent of both malignant and benign growths). In GBM, we compared the survival time (days to death) and the \textit{Karnofsky performance score} (a patient's prognosis by measuring a patient's ability to function). 
	
	Further, to help understand biological mechanisms underlying the clusters, we identified differentially expressed genes (DEGs) for each cluster. We used RNA sequencing data for BRCA and gene expression microarray data for GBM and performed the two-sample t-test. The p-values are adjusted using a Benjamini-Hochberg procedure to address multiple hypothesis testing problems \citep{benjamini1995controlling}. We performed gene set enrichment analysis \citep{subramanian2005gene} to find out the KEGG pathways enriched among the DEGs in each cluster. Then, we compared these enriched pathways with the BRCA- or GBM-related biological pathways provided by the KEGG Pathway Database (\url{https://www.kegg.jp}) that are defined independently from data (See Table \ref{table:pathwaylist}).
	
	\begin{table}[!ht]
		\caption[caption]{\textbf{Evaluation of clustering results.}} \label{table:evalclusteringBRCA}\label{table:evalclusteringGBM}
		\centering
		\begin{adjustbox}{max width=\linewidth}
			\def\arraystretch{0.95}
			\begin{threeparttable}
				\begin{tabular}{@{}lc|cc@{}}
					\toprule
					\small Cancer Type & \multicolumn{1}{c}{\small \textbf{BRCA}} & \multicolumn{2}{c}{\small \textbf{GBM}} \\ \midrule
					\small Method &  \begin{tabular}{@{}c@{}}\small AJCC disease stage \\ \small (p-value)\end{tabular} &  \begin{tabular}{@{}c@{}} \small Survival time \\ \small (p-value) \end{tabular}  & \begin{tabular}{@{}c@{}} \small Karnofsky score \\ \small (p-value) \end{tabular}  \cr \midrule 
					\small Single Best 		& 0.54						& $1.13\times 10^{-1}$			& 0.89	\cr
					\small Uni. Weight 		& 0.22          			& $\mathbf{1.37\times 10^{-5}}$	& 0.13	\cr
					\small MinMax-MinC		& 0.22			        	& $\mathbf{1.37\times 10^{-5}}$	& 0.13	\cr
					\small Yu's OKKC		& 0.53						& $2.51\times 10^{-5}$			& 0.26	\cr
					\small Liu's MKK-MIR 	& 0.42						& $2.21\times 10^{-5}$			& 0.16	\cr
					\small Gonen's MKK 		& 0.48						& $1.46\times 10^{-3}$			& 0.88	\cr
					\small Gonen's LMKK 	& 0.56 						& $6.68\times 10^{-2}$  		& 0.74	\cr
					\small MML-MKKC  		& \textbf{0.09}				& $\mathbf{1.37\times 10^{-5}}$	& \textbf{0.02}	\cr \bottomrule 
				\end{tabular}%
				\begin{tablenotes}
					\footnotesize
					\item Note: for BRCA, differences in the AJCC neoplasm disease stages 
					among the clusters are compared using the chi-square test. For GBM, differences in the survival curves and the Karnofsky performance scores among the clusters are compared using the log-rank test \citep{mantel1966evaluation} and the chi-square test, respectively.
				\end{tablenotes}
			\end{threeparttable}
		\end{adjustbox}	
	\end{table}
	
	\textbf{Results:} For \textbf{\textit{BRCA}}, we identified five clusters (92, 86, 83, 137, and 70 subjects for each cluster) using each methods. Table~\ref{table:evalclusteringBRCA} shows that our method identified clusters that have the most distinct disease stages. Further, our method identified clusters that have many enriched pathways such as \textit{cell cycle} and \textit{alanine} that are consistent with the BRCA-related pathways in the KEGG pathway database (See Table~\ref{table:pathwayBRCA}). For \textbf{\textit{GBM}}, we identified five clusters (58, 38, 39, 46, and 70 subjects for each cluster). Table~\ref{table:evalclusteringGBM} shows that our method identified clusters that have the most distinct survival time and the patients' ability to carry daily activities (as measured by the Karnofsky score). Further, our method identified clusters that have many enriched pathways such as \textit{adherens junction} and \textit{calcium signaling pathway} that are consistent with the GBM-related pathways in the KEGG pathway database (See Table~\ref{table:pathwayGBM}). Together, these results show that, compared to the other methods, our method better identifies distinct clusters that are relevant to the pathobiological mechanisms underlying the diseases.
	
	%
		
	\section{Conclusion}\label{sec:conclusion}
	In this paper, we investigate the effects of adversarial perturbation on multiple kernel k-means clustering. We show that such perturbation can make the existing methods with the $\min_{\Hv}$-$\min_{\thetav}$ formulation ignore the perturbed view and find clusters largely depend on other view(s). To address this problem, we propose a multiple kernel k-means clustering method, MML-MKKC, which aims to be robust to adversarial perturbation by using the $\min_{\Hv}$-$\max_{\thetav}$ formulation. 
	
	Our algorithm is practically efficient and easy to implement because it alternately optimizes $\thetav$ and $\Hv$ where each of the two steps ($\min_{\Hv}$ and $\max_{\thetav}$) has a closed-form solution that requires fewer iterations than a gradient approach to converge. In simulation experiments, we showed that our method is more robust to adversarial perturbation than other methods. In real data analysis, our method identified the most distinct clusters of cancer patients.
	
	
\bibliographystyle{natbib}
\bibliography{references.bib}

\begin{thebibliography}{}

\bibitem[Andersen(2016)Andersen]{mosektechreport}
Andersen, E.~D. (2016).
\newblock {\em On formulating quadratic functions in optimization models\/}.

\bibitem[Benjamini and Hochberg(1995)Benjamini and
  Hochberg]{benjamini1995controlling}
Benjamini, Y. and Hochberg, Y. (1995).
\newblock Controlling the false discovery rate: a practical and powerful
  approach to multiple testing.
\newblock {\em JRSS B\/}, pages 289--300.

\bibitem[Biggio {\em et~al.}(2013)Biggio, Corona, Maiorca, Nelson,
  {\v{S}}rndi{\'c}, Laskov, Giacinto, and Roli]{biggio2013evasion}
Biggio, B., Corona, I., Maiorca, D., Nelson, B., {\v{S}}rndi{\'c}, N., Laskov,
  P., Giacinto, G., and Roli, F. (2013).
\newblock Evasion attacks against machine learning at test time.
\newblock In {\em ECML PKDD\/}, pages 387--402. Springer.

\bibitem[Ding and He(2004)Ding and He]{ding2004k}
Ding, C. and He, X. (2004).
\newblock K-means clustering via principal component analysis.
\newblock In {\em Proceedings of the 21st International Conference on Machine
  learning\/}, page~29. ACM.

\bibitem[Fan(1949)Fan]{fan1949theorem}
Fan, K. (1949).
\newblock On a theorem of weyl concerning eigenvalues of linear transformations
  i.
\newblock {\em Proc Natl Acad Sci\/}, {\bf 35}(11).

\bibitem[Gehler and Nowozin(2009)Gehler and Nowozin]{gehler2009feature}
Gehler, P. and Nowozin, S. (2009).
\newblock On feature combination for multiclass object classification.
\newblock In {\em ICCV\/}, pages 221--228. IEEE.

\bibitem[Girolami(2002)Girolami]{girolami2002mercer}
Girolami, M. (2002).
\newblock Mercer kernel-based clustering in feature space.
\newblock {\em IEEE Trans Neural Netw\/}, {\bf 13}(3), 780--784.

\bibitem[G{\"o}nen and Margolin(2014)G{\"o}nen and
  Margolin]{gonen2014localized}
G{\"o}nen, M. and Margolin, A.~A. (2014).
\newblock Localized data fusion for kernel $k$-means clustering with
  application to cancer biology.
\newblock {\em NeurIPS\/}.

\bibitem[Goodfellow {\em et~al.}(2014)Goodfellow, Pouget-Abadie, Mirza, Xu,
  Warde-Farley, Ozair, Courville, and Bengio]{goodfellow2014generative}
Goodfellow, I., Pouget-Abadie, J., Mirza, M., Xu, B., Warde-Farley, D., Ozair,
  S., Courville, A., and Bengio, Y. (2014).
\newblock Generative adversarial nets.
\newblock {\em NeurIPS\/}.

\bibitem[Goodfellow {\em et~al.}(2015)Goodfellow, Shlens, and
  Szegedy]{goodfellow2015explaining}
Goodfellow, I.~J., Shlens, J., and Szegedy, C. (2015).
\newblock Explaining and harnessing adversarial examples.
\newblock {\em ICLR\/}.

\bibitem[Hastie {\em et~al.}(1999)Hastie, Tibshirani, Sherlock, Eisen, Brown,
  and Botstein]{hastie1999imputing}
Hastie, T., Tibshirani, R., Sherlock, G., Eisen, M., Brown, P., and Botstein,
  D. (1999).
\newblock Imputing missing data for gene expression arrays.

\bibitem[Hubert and Arabie(1985)Hubert and Arabie]{hubert1985comparing}
Hubert, L. and Arabie, P. (1985).
\newblock Comparing partitions.
\newblock {\em Journal of Classification\/}, {\bf 2}(1), 193--218.

\bibitem[Kloft {\em et~al.}(2009)Kloft, Brefeld, Laskov, M{\"u}ller, Zien, and
  Sonnenburg]{kloft2009efficient}
Kloft, M., Brefeld, U., Laskov, P., M{\"u}ller, K.-R., Zien, A., and
  Sonnenburg, S. (2009).
\newblock Efficient and accurate $l_p$-norm multiple kernel learning.
\newblock {\em NeurIPS\/}.

\bibitem[Kloft {\em et~al.}(2011)Kloft, Brefeld, Sonnenburg, and
  Zien]{kloft2011lp}
Kloft, M., Brefeld, U., Sonnenburg, S., and Zien, A. (2011).
\newblock $l_p$-norm multiple kernel learning.
\newblock {\em J Mach Learn Res\/}, {\bf 12}(Mar), 953--997.

\bibitem[Lanckriet {\em et~al.}(2004a)Lanckriet, Cristianini, Bartlett, Ghaoui,
  and Jordan]{lanckriet2004learning}
Lanckriet, G.~R., Cristianini, N., Bartlett, P., Ghaoui, L.~E., and Jordan,
  M.~I. (2004a).
\newblock Learning the kernel matrix with semidefinite programming.
\newblock {\em J Mach Learn Res\/}, {\bf 5}(Jan), 27--72.

\bibitem[Lanckriet {\em et~al.}(2004b)Lanckriet, De~Bie, Cristianini, Jordan,
  and Noble]{lanckriet2004statistical}
Lanckriet, G.~R., De~Bie, T., Cristianini, N., Jordan, M.~I., and Noble, W.~S.
  (2004b).
\newblock A statistical framework for genomic data fusion.
\newblock {\em Bioinformatics\/}, {\bf 20}(16), 2626--2635.

\bibitem[Liu {\em et~al.}(2016)Liu, Dou, Yin, Wang, and Zhu]{liu2016multiple}
Liu, X., Dou, Y., Yin, J., Wang, L., and Zhu, E. (2016).
\newblock Multiple kernel k-means clustering with matrix-induced
  regularization.
\newblock {\em AAAI\/}, pages 1888--1894.

\bibitem[Liu {\em et~al.}(2017)Liu, Li, Wang, Dou, Yin, and
  Zhu]{liu2017multiple}
Liu, X., Li, M., Wang, L., Dou, Y., Yin, J., and Zhu, E. (2017).
\newblock Multiple kernel k-means with incomplete kernels.
\newblock {\em AAAI\/}.

\bibitem[Madry {\em et~al.}(2018)Madry, Makelov, Schmidt, Tsipras, and
  Vladu]{madry2018towards}
Madry, A., Makelov, A., Schmidt, L., Tsipras, D., and Vladu, A. (2018).
\newblock Towards deep learning models resistant to adversarial attacks.
\newblock In {\em ICLR\/}.

\bibitem[Manning {\em et~al.}(2008)Manning, Raghavan, and
  Sch\"{u}tze]{Manning2008information}
Manning, C.~D., Raghavan, P., and Sch\"{u}tze, H. (2008).
\newblock {\em Introduction to Information Retrieval\/}.
\newblock Cambridge University Press.

\bibitem[Mantel(1966)Mantel]{mantel1966evaluation}
Mantel, N. (1966).
\newblock Evaluation of survival data and two new rank order statistics arising
  in its consideration.
\newblock {\em Cancer Chemother Reports\/}, {\bf 50}, 163--170.

\bibitem[Michael and David(1979)Michael and David]{michael1979computers}
Michael, R.~G. and David, S.~J. (1979).
\newblock Computers and intractability: a guide to the theory of
  np-completeness.
\newblock {\em WH Free. Co., San Fr\/}, pages 90--91.

\bibitem[MOSEK-ApS(2017)MOSEK-ApS]{mosek}
MOSEK-ApS (2017).
\newblock {\em MOSEK Rmosek Package Release 8.0.0.81\/}.

\bibitem[Ng {\em et~al.}(2002)Ng, Jordan, and Weiss]{ng2002spectral}
Ng, A.~Y., Jordan, M.~I., and Weiss, Y. (2002).
\newblock On spectral clustering: Analysis and an algorithm.
\newblock In {\em NeurIPS\/}.

\bibitem[Ong and Zien(2008)Ong and Zien]{ong2008automated}
Ong, C. and Zien, A. (2008).
\newblock An automated combination of kernels for predicting protein
  subcellular localization.
\newblock {\em Algorithms in Bioinformatics\/}, pages 186--197.

\bibitem[Sch{\"o}lkopf {\em et~al.}(1998)Sch{\"o}lkopf, Smola, and
  M{\"u}ller]{scholkopf1998nonlinear}
Sch{\"o}lkopf, B., Smola, A., and M{\"u}ller, K.-R. (1998).
\newblock Nonlinear component analysis as a kernel eigenvalue problem.
\newblock {\em Neural Computation\/}, {\bf 10}(5), 1299--1319.

\bibitem[Sinha {\em et~al.}(2018)Sinha, Namkoong, and
  Duchi]{sinha2018certifying}
Sinha, A., Namkoong, H., and Duchi, J. (2018).
\newblock Certifying some distributional robustness with principled adversarial
  training.
\newblock {\em ICLR\/}.

\bibitem[Strehl and Ghosh(2002)Strehl and Ghosh]{strehl2002cluster}
Strehl, A. and Ghosh, J. (2002).
\newblock Cluster ensembles---a knowledge reuse rramework for combining
  multiple partitions.
\newblock {\em J Mach Learn Res\/}, {\bf 3}(Dec), 583--617.

\bibitem[Subramanian {\em et~al.}(2005)Subramanian, Tamayo, Mootha, Mukherjee,
  Ebert, Gillette, Paulovich, Pomeroy, Golub, Lander, {\em
  et~al.}]{subramanian2005gene}
Subramanian, A., Tamayo, P., Mootha, V.~K., Mukherjee, S., Ebert, B.~L.,
  Gillette, M.~A., Paulovich, A., Pomeroy, S.~L., Golub, T.~R., Lander, E.~S.,
  {\em et~al.} (2005).
\newblock Gene set enrichment analysis: a knowledge-based approach for
  interpreting genome-wide expression profiles.
\newblock {\em Proc Natl Acad Sci\/}, {\bf 102}(43), 15545--15550.

\bibitem[Szegedy {\em et~al.}(2014)Szegedy, Zaremba, Sutskever, Bruna, Erhan,
  Goodfellow, and Fergus]{szegedy2014intriguing}
Szegedy, C., Zaremba, W., Sutskever, I., Bruna, J., Erhan, D., Goodfellow, I.,
  and Fergus, R. (2014).
\newblock Intriguing properties of neural networks.
\newblock {\em ICLR\/}.

\bibitem[Weinstein {\em et~al.}(2013)Weinstein, Collisson, Mills, Shaw,
  Ozenberger, Ellrott, Shmulevich, Sander, Stuart, Network, {\em
  et~al.}]{weinstein2013cancer}
Weinstein, J.~N., Collisson, E.~A., Mills, G.~B., Shaw, K. R.~M., Ozenberger,
  B.~A., Ellrott, K., Shmulevich, I., Sander, C., Stuart, J.~M., Network, C. G.
  A.~R., {\em et~al.} (2013).
\newblock The cancer genome atlas pan-cancer analysis project.
\newblock {\em Nature Genet\/}, {\bf 45}(10), 1113.

\bibitem[Wu {\em et~al.}(2019)Wu, Bang, Bleecker, Castro, Denlinger, Erzurum,
  Fahy, Fitzpatrick, Gaston, Hastie, Israel, Jarjour, Kerr, Levy, Moore,
  Peters, Phipatanakul, Sorkness, and Wenzel]{wu2018asthma}
Wu, W., Bang, S., Bleecker, E., Castro, M., Denlinger, L., Erzurum, S., Fahy,
  J., Fitzpatrick, A., Gaston, B., Hastie, A., Israel, E., Jarjour, N., Kerr,
  S., Levy, Bruce~Meyers, D., Moore, W., Peters, M., Phipatanakul, W.,
  Sorkness, R., and Wenzel, S. (2019).
\newblock Multiview cluster analysis identifies variable corticosteroid
  response phenotypes in severe asthma.
\newblock {\em Am J Respir Crit Care Med\/}.

\bibitem[Yao and Chen(2018)Yao and Chen]{yao2018multiple}
Yao, Y. and Chen, H. (2018).
\newblock Multiple kernel $ k $-means clustering by selecting representative
  kernels.
\newblock {\em arXiv:1811.00264\/}.

\bibitem[Yu {\em et~al.}(2010)Yu, Falck, Daemen, Tranchevent, Suykens, De~Moor,
  and Moreau]{yu20102}
Yu, S., Falck, T., Daemen, A., Tranchevent, L.-C., Suykens, J.~A., De~Moor, B.,
  and Moreau, Y. (2010).
\newblock L$_2$-norm multiple kernel learning and its application to biomedical
  data fusion.
\newblock {\em BMC Bioinformatics\/}, {\bf 11}(1), 309.

\bibitem[Yu {\em et~al.}(2012)Yu, Tranchevent, Liu, Glanzel, Suykens, De~Moor,
  and Moreau]{yu2012optimized}
Yu, S., Tranchevent, L., Liu, X., Glanzel, W., Suykens, J.~A., De~Moor, B., and
  Moreau, Y. (2012).
\newblock Optimized data fusion for kernel $k$-means clustering.
\newblock {\em IEEE Trans Pattern Anal Mach Intell\/}, {\bf 34}(5), 1031--1039.

\bibitem[Zha {\em et~al.}(2002)Zha, He, Ding, Gu, and Simon]{zha2002spectral}
Zha, H., He, X., Ding, C., Gu, M., and Simon, H.~D. (2002).
\newblock Spectral relaxation for k-means clustering.
\newblock {\em NeurIPS\/}.

\bibitem[Zhang {\em et~al.}(2019)Zhang, Yu, Jiao, Xing, Ghaoui, and
  Jordan]{zhang2019theoretically}
Zhang, H., Yu, Y., Jiao, J., Xing, E.~P., Ghaoui, L.~E., and Jordan, M.~I.
  (2019).
\newblock Theoretically principled trade-off between robustness and accuracy.
\newblock {\em arXiv:1901.08573\/}.

\end{thebibliography}


\begin{thebibliography}{111}
\raggedright

\bibitem[Fan(1949)Fan]{fan1949theorem}
Fan, K. (1949).
\newblock On a theorem of weyl concerning eigenvalues of linear transformations
i.
\newblock {\em Proceedings of the National Academy of Sciences\/}, {\bf
	35}(11), 652--655.

\end{thebibliography}

\clearpage
\beginsupplement
    \section*{Supplementary Materials}
    \begin{theorem} \label{theo:kyfan}
	\cite{fan1949theorem} Let $\Kv$ is a symmetric matrix where $\uv_1, \cdots, \uv_n$ are eigenvectors corresponding to eigenvalues $\lambda_1 \geq \cdots \geq \lambda_n$ of $\Kv$. Then, the optimal solution of the problem
	\begin{align*}
	\argmax_{\Hv \in \Rc^{n\times k}} \textbf{tr}\bb{\Hv^T\Kv\Hv}\\
	\text{subject to } 
	& \Hv^T\Hv = \Iv_k
	\end{align*}
	is given by $\Hv^* = \Uv_k\Qv$ where $\Uv_k = [\uv_1, \cdots, \uv_k]$ and $Q$ is an arbitrary $k\times k$ orthogonal matrix, and the maximum is given by 
	\begin{align*}
	\max_{\Hv \in \Rc^{n\times k}} \textbf{tr}\bb{\Hv^T\Kv\Hv} = \textbf{tr}\bb{{\Hv^*}^T\Kv\Hv^*} = \sum_{i = 1}^{k} \lambda_i
	\end{align*}
\end{theorem}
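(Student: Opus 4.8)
The plan is to diagonalize $\Kv$ and thereby reduce the maximization to a linear program over the squared row-norms of an auxiliary matrix with orthonormal columns. Since $\Kv$ is symmetric, write $\Kv = \Uv\Lambda\Uv^\top$ with $\Uv = [\uv_1,\dots,\uv_n]$ orthogonal and $\Lambda = \text{diag}(\lambda_1,\dots,\lambda_n)$. For any feasible $\Hv$ put $\mathbf{W} = \Uv^\top\Hv$; then $\mathbf{W}^\top\mathbf{W} = \Hv^\top\Uv\Uv^\top\Hv = \Hv^\top\Hv = \Iv_k$, so $\mathbf{W}$ again has orthonormal columns, and $\textbf{tr}\bb{\Hv^\top\Kv\Hv} = \textbf{tr}\bb{\mathbf{W}^\top\Lambda\mathbf{W}} = \sum_{i=1}^n \lambda_i\, d_i$, where $d_i = \sum_{j=1}^k w_{ij}^2$ is the squared Euclidean norm of the $i$th row of $\mathbf{W}$. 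Hence the objective depends on $\Hv$ only through the nonnegative vector $(d_1,\dots,d_n)$.

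Next I would identify the feasible region for $(d_1,\dots,d_n)$. Two facts are immediate: $d_i \ge 0$, and $\sum_{i=1}^n d_i = \textbf{tr}\bb{\mathbf{W}^\top\mathbf{W}} = \textbf{tr}\bb{\Iv_k} = k$. The crucial remaining constraint is the upper bound $d_i \le 1$ for every $i$, and I expect this to be the main obstacle, since it is the only step that genuinely uses orthonormality of the columns of $\mathbf{W}$ rather than just their total squared mass. I would establish it by completing $\mathbf{W}$ to a full $n\times n$ orthogonal matrix $\widetilde{\mathbf{W}}$ (possible precisely because its $k$ columns are orthonormal); every row of $\widetilde{\mathbf{W}}$ then has unit norm, and $d_i$ is the squared norm of the length-$k$ truncation of that row, so $d_i \le 1$.

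Finally, with the feasible $(d_1,\dots,d_n)$ lying in the polytope $\cbb{0 \le d_i \le 1,\ \sum_i d_i = k}$, the objective $\sum_i \lambda_i d_i$ is linear, so a rearrangement (greedy) argument finishes the upper bound: because $\lambda_1 \ge \cdots \ge \lambda_n$, allocating the entire budget $k$ to the largest eigenvalues — $d_1 = \cdots = d_k = 1$, $d_{k+1} = \cdots = d_n = 0$ — is optimal and yields $\sum_{i=1}^k \lambda_i$. To see the bound is attained, I would substitute $\Hv^* = \Uv_k\Qv$: here $\Uv^\top\Uv_k$ is $\Iv_k$ stacked over a zero block, so $\mathbf{W} = \Uv^\top\Uv_k\Qv$ has $\Qv$ in its top $k$ rows and zeros below, giving exactly $d_1 = \cdots = d_k = 1$ and $d_i = 0$ for $i > k$; hence $\textbf{tr}\bb{{\Hv^*}^\top\Kv\Hv^*} = \sum_{i=1}^k\lambda_i$. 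When $\lambda_k > \lambda_{k+1}$ the optimal $d$ is forced, so the maximizers are exactly the matrices $\Uv_k\Qv$ with $\Qv$ orthogonal, while under eigenvalue ties the maximizer set is correspondingly larger.
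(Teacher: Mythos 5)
Your proof is correct, but there is nothing in the paper to compare it against: the paper states this theorem with a citation to Fan (1949) and gives no proof at all, using the result as a black box (it is invoked, for instance, to prove Proposition~\ref{lem:trick2}, and through that Proposition~\ref{theta-closed-sol}). What you have written is the standard self-contained proof of Ky Fan's maximum principle. After the spectral substitution $\mathbf{W}=\Uv^\top\Hv$, the objective becomes $\sum_{i=1}^n\lambda_i d_i$ with $d_i$ the squared norm of the $i$th row of $\mathbf{W}$, and the feasible $d$ lie in the polytope $\cbb{0\le d_i\le 1,~\sum_i d_i = k}$; your device of completing $\mathbf{W}$ to a full orthogonal matrix is exactly the right way to obtain $d_i\le 1$, and you correctly flag it as the step where column-orthonormality is genuinely used. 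The only place you wave your hands is the linear-programming step, which deserves its one-line exchange argument: for any feasible $d$, since $\sum_{i\le k}(1-d_i)=\sum_{i>k}d_i$, one has
\begin{align*}
\sum_{i=1}^{k}\lambda_i-\sum_{i=1}^{n}\lambda_i d_i
=\sum_{i\le k}\lambda_i(1-d_i)-\sum_{i>k}\lambda_i d_i
\ge \bb{\lambda_k-\lambda_{k+1}}\sum_{i>k}d_i\ge 0,
\end{align*}
with attainment as you computed. Your closing caveat is moreover a genuine sharpening of the statement as quoted: the maximizers are exactly the matrices $\Uv_k\Qv$ only when $\lambda_k>\lambda_{k+1}$ (then $d$ is forced, the bottom $n-k$ rows of $\mathbf{W}$ vanish, and its top $k\times k$ block must be orthogonal), while under ties the maximizer set is strictly larger --- a point neither the theorem statement nor the paper acknowledges. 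In short, the paper buys brevity with a citation; your argument buys a self-contained supplement plus the precise condition under which the advertised maximizer set is exhaustive.
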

\bigskip
    ~\newpage
    \begin{proposition} \label{mkpca}
Suppose that $\Xv^{(v)}$ is a $n\times p_v$ centered data matrix from $n$ samples and $p_v$ random variables, that $\Xv = \sbb{\Xv^{(1)}, \cdots, \Xv^{(m)}}$ is an $n\times p$ multiview data matrix collected from $m$ multiple sources where $p = p_1 + \cdots + p_m$, and that $\Kv^{(v)} = \Xv^{(v)}{\Xv^{(v)}}^T$. Then,
\begin{align*}
    \textbf{tr} \bb{\Hv^T\Kv^{(v)}\Hv} = \textbf{tr} \bb{{\Vv_{1:k}^{(v)}}^T{\Xv^{(v)}}^T\Xv^{(v)}\Vv_{1:k}^{(v)}} + \sum_{w\neq v}\textbf{tr} \bb{{\Vv_{1:k}^{(v)}}^T{\Xv^{(v)}}^T\Xv^{(w)}\Vv_{1:k}^{(w)}}
\end{align*}
where $\Hv = \Uv_k\Qv$ is a $n\times k$ matrix where the column of $\Uv_k$ contains the first $k$ eigenvectors of $\Xv\Xv^T$ corresponding to $k$ largest eigenvalues, $\Qv$ is an arbitrary orthogonal matrix, and ${\Vv_{1:k}^{(v)}}^T$ is a $k \times p_v$ matrix including the top $k$ rows of ${\Vv^{(v)}}^T$ where ${\Vv^{(1)}}^T$ is a $p\times p_1$ matrix including the first $p_1$ eigenvector of $\Xv^T\Xv$, ${\Vv^{(2)}}^T$ is a $p\times p_2$ matrix including the next $p_2$ eigenvector of $\Xv^T\Xv$ and so on. 
\end{proposition}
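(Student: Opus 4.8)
The plan is to reduce both sides of the claimed identity to the same scalar by introducing a single \emph{matched} singular value decomposition of the stacked data matrix $\Xv$. Write $\Xv = \Uv\Sigma\Vv^\top$, where the columns of $\Uv$ are eigenvectors of $\Xv\Xv^\top$, the columns of $\Vv$ are eigenvectors of $\Xv^\top\Xv$, and $\Sigma$ carries the singular values $\sigma_1 \geq \sigma_2 \geq \cdots$ on its diagonal. With this choice, $\Uv_k$ is exactly the matrix of the first $k$ columns of $\Uv$, $\Vv_{1:k}$ the first $k$ columns of $\Vv$, and $\Sigma_k = \text{diag}\bb{\sigma_1,\ldots,\sigma_k}$. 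The one point needing care here is consistency: $\Uv_k$ and $\Vv_{1:k}$ must be chosen as a matched left/right pair of one SVD (not computed independently from the two Gram matrices), so that the SVD relations used below hold simultaneously.

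First I would clear the arbitrary orthogonal factor $\Qv$ from the left-hand side: since $\Hv = \Uv_k\Qv$ with $\Qv^\top\Qv = \Iv_k$, cyclicity of the trace gives $\textbf{tr}\bb{\Hv^\top\Kv^{(v)}\Hv} = \textbf{tr}\bb{\Qv^\top\Uv_k^\top\Kv^{(v)}\Uv_k\Qv} = \textbf{tr}\bb{\Uv_k^\top\Kv^{(v)}\Uv_k}$, so $\Qv$ plays no role. Next I would collapse the right-hand side into a single trace. Because $\Xv = \sbb{\Xv^{(1)},\ldots,\Xv^{(m)}}$ and $\Vv_{1:k}$ stacks the blocks $\Vv_{1:k}^{(1)},\ldots,\Vv_{1:k}^{(m)}$ row-wise, we have $\Xv\Vv_{1:k} = \sum_{w} \Xv^{(w)}\Vv_{1:k}^{(w)}$; hence the within-view term together with the sum of cross-view terms equals the single trace $\textbf{tr}\bb{{\Vv_{1:k}^{(v)}}^\top{\Xv^{(v)}}^\top\Xv\Vv_{1:k}}$.

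The key step, and the main obstacle, is a block-wise SVD identity. Restricting $\Xv^\top\Uv = \Vv\Sigma$ to its first $k$ columns gives $\Xv^\top\Uv_k = \Vv_{1:k}\Sigma_k$, and reading off the block of rows indexed by view $v$ yields ${\Xv^{(v)}}^\top\Uv_k = \Vv_{1:k}^{(v)}\Sigma_k$; this is precisely where the matched-pair requirement from the setup is essential. Substituting this into the left-hand side (using its transpose $\Uv_k^\top\Xv^{(v)} = \Sigma_k{\Vv_{1:k}^{(v)}}^\top$) turns it into $\textbf{tr}\bb{\Sigma_k{\Vv_{1:k}^{(v)}}^\top\Vv_{1:k}^{(v)}\Sigma_k}$, while using $\Xv\Vv_{1:k} = \Uv_k\Sigma_k$ and the same block identity on the right-hand side turns it into $\textbf{tr}\bb{{\Vv_{1:k}^{(v)}}^\top\Vv_{1:k}^{(v)}\Sigma_k^2}$; by cyclicity these two coincide, which proves the claim.

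The only remaining subtlety I would address is degeneracy: when some $\sigma_i = 0$ or the top-$k$ singular values are repeated, the singular vectors are not unique. I would dispatch this by observing that the identity $\Xv^\top\Uv_k = \Vv_{1:k}\Sigma_k$ holds for any matched SVD pair and never requires inverting $\Sigma_k$, so no nondegeneracy hypothesis is needed; a repeated eigenvalue only permits an internal rotation within its eigenspace, to which every trace appearing above is invariant.
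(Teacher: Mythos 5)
Your proof is correct. It rests on the same foundation as the paper's proof — the singular value decomposition $\Xv = \Uv\Dv\Vv^T$ of the stacked matrix, removal of $\Qv$ by cyclicity of the trace, and the block relation $\Uv_k^T\Xv^{(v)} = \Dv_k{\Vv_{1:k}^{(v)}}^T$ (your ${\Xv^{(v)}}^T\Uv_k = \Vv_{1:k}^{(v)}\Sigma_k$) — but it finishes differently. After reducing the left-hand side to $\textbf{tr}(\Sigma_k{\Vv_{1:k}^{(v)}}^T\Vv_{1:k}^{(v)}\Sigma_k)$, the paper handles the right-hand side by forming the $p\times p$ identity $\Xv^T\Xv\Vv_{1:k}\Vv_{1:k}^T = \Vv_{1:k}\Dv_k^T\Dv_k\Vv_{1:k}^T$, writing out the diagonal blocks of both sides, and equating traces of the $(v,v)$ blocks; that is where the within-view and cross-view terms appear. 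You avoid this block bookkeeping entirely: you collapse the sum into the single trace $\textbf{tr}({\Vv_{1:k}^{(v)}}^T{\Xv^{(v)}}^T\Xv\Vv_{1:k})$ using $\Xv\Vv_{1:k} = \sum_w\Xv^{(w)}\Vv_{1:k}^{(w)}$, then apply the forward relation $\Xv\Vv_{1:k} = \Uv_k\Sigma_k$ together with the block relation once more, so both sides become $\textbf{tr}({\Vv_{1:k}^{(v)}}^T\Vv_{1:k}^{(v)}\Sigma_k^2)$. Your route is shorter and makes transparent that nothing beyond the SVD relations is needed. Two of your side remarks also add value relative to the paper: (i) the proposition's statement defines $\Uv_k$ and $\Vv$ via two separate eigendecompositions, and your explicit requirement that they be taken as a matched pair from one SVD is exactly what licenses the key relation (the paper's proof satisfies this implicitly, since it derives everything from a single SVD); (ii) your observation that the argument never inverts $\Sigma_k$ correctly disposes of rank deficiency and repeated singular values, the only residual caveat being that the top-$k$ left and right singular subspaces must still come from the same SVD when $\sigma_k = \sigma_{k+1}$.
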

\begin{proof}~\\~\\
We consider the singular value decomposition of the $n\times p$ matrix $\Xv$:
\begin{align*}
    \Xv = \Uv\Dv\Vv^T
\end{align*}
where $\Uv$ is an $n\times n$ orthogonal matrix whose columns are the the left-singular vectors of $\Xv$, $\Dv$ is an $n\times p$ rectangular diagonal matrix with non-negative real values $\sigma_1, \cdots, \sigma_p$ known as singular values of $\Xv$ on the diagonal, and $\Vv$ is an $p\times p$ orthogonal matrix whose columns are the right-singular vectors of $\Xv$. Without loss of generality, we assume the singular values of $\Xv$, $\sigma_1 \geq \cdots \geq \sigma_p$, are ordered by largest to smallest and so the corresponding column vectors of $\Uv$ and $\Vv$ are as well.\\~\\
Note that the $c$-th column vector of $\Vv$ (i.e. $c$-th right-singular vector of $\Xv$) is equivalent to the $c$-th eigenvector corresponding to the $c$-th largest eigenvalue $\lambda_c$ of $\Xv^T\Xv$. 
Then we get:
\begin{align}
    &~\Xv = \Uv\Dv\Vv^T\nonumber\\  
	\Leftrightarrow~&~\sbb{\Xv^{(1)}, \cdots, \Xv^{(m)}} = \Uv\Dv\sbb{{\Vv^{(1)}}^T, \cdots, {\Vv^{(m)}}^T} \nonumber\\ 
    \Leftrightarrow~&~\Xv^{(v)} = \Uv\Dv{\Vv^{(v)}}^T \text{~~~~~~for~} v = 1, \cdots, m \nonumber\\
    \Leftrightarrow~&~\Uv^T\Xv^{(v)} = \Dv{\Vv^{(v)}}^T \text{~~~~~~for~} v = 1, \cdots, m \label{mkpca_trick1}
\end{align}
Note that the $c$-th column vector of $\Uv$ (i.e. $c$-th left-singular vector of $\Xv$) is equivalent to the $c$-th eigenvector corresponding to the $c$-th largest eigenvalue $\lambda_c$ of $\Xv\Xv^T$. Hence, $\Uv = \sbb{\Uv_k, \Uv_C} \nonumber$ where the columns of $\Uv_k$ and $\Uv_C$ are the first $k$ and the last $n-k$ eigenvectors of $\Xv\Xv^T$ respectively.\\~\\
From (\ref{mkpca_trick1}), we get
\begin{align}\label{mkpca_trick2}
    \Uv_k^T\Xv^{(v)} = \Dv_k{\Vv_{1:k}^{(v)}}^T \text{~~~~~~for~} v = 1, \cdots, m
\end{align}
where $\Dv_k$ is a $k \times k$ diagonal matrix with the first $k$ singular values $\sigma_1, \cdots, \sigma_k$ on the diagonal and ${\Vv_{1:k}^{(v)}}^T$ is a $k \times p_v$ matrix including the top $k$ rows of ${\Vv^{(v)}}^T$.\\~\\
Using (\ref{mkpca_trick2}), we get
\begin{align*}
	\Hv^T\Kv^{(v)}\Hv & = \Hv^T\Xv^{(v)}{\Xv^{(v)}}^T\Hv\\
	& = \Qv^T\Uv_k^T\Xv^{(v)}{\Xv^{(v)}}^T\Uv_k\Qv \\
	& = \Qv^T\Dv_k{\Vv_{1:k}^{(v)}}^T\Vv_{1:k}^{(v)}\Dv_k^T\Qv 
\end{align*}
Therefore,
\begin{align*}
    \textbf{tr} \bb{\Hv^T\Kv^{(v)}\Hv} = \textbf{tr} \bb{\Qv^T\Dv_k{\Vv_{1:k}^{(v)}}^T\Vv_{1:k}^{(v)}\Dv_k^T\Qv}
\end{align*}
By the invariant property under cyclic permutations of the trace function and the fact that $\Qv$ is an orthogonal matrix, we get
\begin{align}\label{mkpca4}
    \textbf{tr} \bb{\Hv^T\Kv^{(v)}\Hv} & = \textbf{tr} \bb{\Vv_{1:k}^{(v)}\Dv_k^T\Qv\Qv^T\Dv_k{\Vv_{1:k}^{(v)}}^T}\nonumber\\
    & = \textbf{tr} \bb{\Vv_{1:k}^{(v)}\Dv_k^T\Dv_k{\Vv_{1:k}^{(v)}}^T}
\end{align}
Note that the $c$-th column vector of $\Vv$ (i.e. $c$-th right-singular vector of $\Xv$) is equivalent to the $c$-th eigenvector corresponding to the $c$-th largest eigenvalue $\lambda_c$ of $\Xv^T\Xv$ (i.e. $\sigma_c^2 = \lambda_c$). Therefore, we have:
\begin{align*}
	\Xv^T\Xv\vv_c = \lambda_c\vv_c
\end{align*}
for $c = 1, \cdots, p$. Note that the $c$-th diagonal element of $\Dv$ is equivalent to the square roots of the $c$-th eigenvalue of $\Xv^T\Xv$. Therefore, we have:
\begin{align}\label{mkpca3}
	&\Xv^T\Xv\Vv_{1:k} = \Vv_{1:k}\Dv_k^T\Dv_k \nonumber\\
	\Rightarrow~&~\Xv^T\Xv\Vv_{1:k}\Vv_{1:k}^T = \Vv_{1:k}\Dv_{1:k}^T\Dv_{1:k}\Vv_{1:k}^T 
\end{align}
The light-hand-side of (\ref{mkpca3}) has $p_v\times p_v$ square matrices (blocks) in the main diagonal as follow:
\begin{align*}
    \Vv_{1:k}\Dv_{1:k}^T\Dv_{1:k}\Vv_{1:k}^T & = \sbb{\Dv_k{\Vv_{1:k}^{(1)}}^T, \cdots, \Dv_k{\Vv_{1:k}^{(m)}}^T}^T\sbb{\Dv_k{\Vv_{1:k}^{(1)}}^T, \cdots, \Dv_k{\Vv_{1:k}^{(m)}}^T} \\
    & = \begin{bmatrix}
       \Vv_{1:k}^{(1)}\Dv_k^T\Dv_k{\Vv_{1:k}^{(1)}}^T & ~ & \text{off-diagonal entries}\\[0.3em]
       ~    & \ddots & ~\\[0.3em]
       \text{off-diagonal entries}    & ~ & \Vv_{1:k}^{(m)}\Dv_k^T\Dv_k{\Vv_{1:k}^{(m)}}^T
     \end{bmatrix}
\end{align*}
The left-hand-side of (\ref{mkpca3}) has $p_v\times p_v$ square matrices (blocks) in the main diagonal as follow:
\begin{align*}
    & \Xv^T\Xv\Vv_{1:k}\Vv_{1:k}^T\\
    & = \begin{bmatrix}
       \scriptstyle{{\Xv^{(1)}}^T\Xv^{(1)}\Vv_{1:k}^{(1)}{\Vv_{1:k}^{(1)}}^T + \sum_{w\neq 1} {\Xv^{(1)}}^T\Xv^{(w)}\Vv_{1:k}^{(w)}{\Vv_{1:k}^{(1)}}^T} & ~ & \text{off-diagonal entries}\\[0.3em]
       ~    & \ddots & ~\\[0.3em]
       \text{off-diagonal entries}    & ~ & \scriptstyle{{\Xv^{(m)}}^T\Xv^{(m)}\Vv_{1:k}^{(m)}{\Vv_{1:k}^{(m)}}^T + \sum_{w\neq m} {\Xv^{(m)}}^T\Xv^{(w)}\Vv_{1:k}^{(w)}{\Vv_{1:k}^{(m)}}^T}
     \end{bmatrix}
\end{align*}
Therefore, we have
\begin{align*}
	&{\Xv^{(v)}}^T\Xv^{(v)}\Vv_{1:k}^{(v)}{\Vv_{1:k}^{(v)}}^T + \sum_{w \neq v} {\Xv^{(v)}}^T\Xv^{(w)}\Vv_{1:k}^{(w)}{\Vv_{1:k}^{(v)}}^T = \Vv_{1:k}^{(v)}\Dv_k^T\Dv_k{\Vv_{1:k}^{(v)}}^T \\
	\Rightarrow~&~\textbf{tr} \bb{{\Vv_{1:k}^{(v)}}^T{\Xv^{(v)}}^T\Xv^{(v)}\Vv_{1:k}^{(v)}} + \sum_{w \neq v} \textbf{tr} \bb{{\Vv_{1:k}^{(v)}}^T{\Xv^{(v)}}^T\Xv^{(w)}\Vv_{1:k}^{(w)}} = \textbf{tr} \bb{\Vv_{1:k}^{(v)}\Dv_k^T\Dv_k{\Vv_{1:k}^{(v)}}^T}
\end{align*}
for $v = 1, \cdots, m$. From (\ref{mkpca4}), that is:
\begin{align*}
    \textbf{tr} \bb{\Hv^T\Kv^{(v)}\Hv} = \textbf{tr} \bb{{\Vv_{1:k}^{(v)}}^T{\Xv^{(v)}}^T\Xv^{(v)}\Vv_{1:k}^{(v)}} + \sum_{w \neq v} \textbf{tr} \bb{{\Vv_{1:k}^{(v)}}^T{\Xv^{(v)}}^T\Xv^{(w)}\Vv_{1:k}^{(w)}}
\end{align*}
\end{proof}
\bigskip

    ~\newpage
    \begin{proposition} \label{mkpca-conclusion}
Suppose that $\Xv^{(v)}$ is a $n\times p_v$ centered data matrix from $n$ samples and $p_v$ random variables, that $\Xv = \sbb{\Xv^{(1)}, \cdots, \Xv^{(m)}}$ is an $n\times p$ multiview data matrix collected from $m$ multiple sources where $p = p_1 + \cdots + p_m$, and that $\Kv^{(v)} = \Xv^{(v)}{\Xv^{(v)}}^T$. Then,
\begin{align*}
    \textbf{tr} \bb{\Kv^{(v)}-\Hv^T\Kv^{(v)}\Hv} =~&~\textbf{tr} \bb{\Xv^{(v)}{\Xv^{(v)}}^T} \\
    &- \left( \textbf{tr} \bb{{\Vv_{1:k}^{(v)}}^T{\Xv^{(v)}}^T\Xv^{(v)}\Vv_{1:k}^{(v)}} + \sum_{w\neq v}\textbf{tr} \bb{{\Vv_{1:k}^{(v)}}^T{\Xv^{(v)}}^T\Xv^{(w)}\Vv_{1:k}^{(w)}} \right)
\end{align*}
where $\Hv = \Uv_k\Qv$ is a $n\times k$ matrix where the column of $\Uv_k$ contains the first $k$ eigenvectors of $\Xv\Xv^T$ corresponding to $k$ largest eigenvalues, $\Qv$ is an arbitrary orthogonal matrix, and ${\Vv_{1:k}^{(v)}}^T$ is a $k \times p_v$ matrix including the top $k$ rows of ${\Vv^{(v)}}^T$ where ${\Vv^{(1)}}^T$ is a $p\times p_1$ matrix including the first $p_1$ eigenvector of $\Xv^T\Xv$, ${\Vv^{(2)}}^T$ is a $p\times p_2$ matrix including the next $p_2$ eigenvector of $\Xv^T\Xv$ and so on. 

Hence, $\textbf{tr} \bb{\Kv^{(v)}-\Hv^T\Kv^{(v)}\Hv}$ can be interpreted as the unexplained variability (both the unexplained variance of the view $v$ and the unexplained covariance of the view $v$ with the other views $w$) by principal components of the joint representation.
\end{proposition}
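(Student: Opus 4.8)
The plan is to recognize that this statement is an immediate corollary of Proposition~\ref{mkpca}, which has already expressed $\textbf{tr}\bb{\Hv^T\Kv^{(v)}\Hv}$ in exactly the form of the explained variance-plus-covariance terms. First I would invoke the linearity of the trace to split the left-hand side into
\begin{align*}
\textbf{tr}\bb{\Kv^{(v)}-\Hv^T\Kv^{(v)}\Hv} = \textbf{tr}\bb{\Kv^{(v)}} - \textbf{tr}\bb{\Hv^T\Kv^{(v)}\Hv}.
\end{align*}
Substituting the definition $\Kv^{(v)} = \Xv^{(v)}{\Xv^{(v)}}^T$ into the first summand gives $\textbf{tr}\bb{\Kv^{(v)}} = \textbf{tr}\bb{\Xv^{(v)}{\Xv^{(v)}}^T}$, which is precisely the total-variance term appearing on the right-hand side of the claim.

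Next I would apply Proposition~\ref{mkpca} verbatim to rewrite the second summand, replacing $\textbf{tr}\bb{\Hv^T\Kv^{(v)}\Hv}$ by the diagonal term $\textbf{tr}\bb{{\Vv_{1:k}^{(v)}}^T{\Xv^{(v)}}^T\Xv^{(v)}\Vv_{1:k}^{(v)}}$ together with the off-diagonal sum $\sum_{w\neq v}\textbf{tr}\bb{{\Vv_{1:k}^{(v)}}^T{\Xv^{(v)}}^T\Xv^{(w)}\Vv_{1:k}^{(w)}}$. Collecting the three pieces and keeping track of the single minus sign introduced by the split yields exactly the asserted identity. Thus the entire argument reduces to one application of trace linearity followed by one citation of the previous proposition.

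There is no substantive obstacle here, since all the genuine work — the singular value decomposition $\Xv = \Uv\Dv\Vv^T$, the identification of $\Hv = \Uv_k\Qv$ with the leading left-singular directions, the block decomposition of $\Xv^T\Xv\Vv_{1:k}\Vv_{1:k}^T$, and the cyclic-invariance manipulations — was already discharged in the proof of Proposition~\ref{mkpca}. The only thing worth double-checking is that the signs align after the split, so that subtracting the explained variability from the total variance leaves exactly the unexplained variance and covariance. Finally, the interpretive sentence follows term by term: $\textbf{tr}\bb{\Xv^{(v)}{\Xv^{(v)}}^T}$ is the total variance of view $v$, the diagonal term is the portion of that variance captured by the joint principal directions $\Vv_{1:k}$, and each off-diagonal term is the cross-view covariance of view $v$ with view $w$ captured by those same directions; hence the difference is precisely the variability of view $v$ left unexplained by the joint representation.
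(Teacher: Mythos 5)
Your proposal is correct and matches the paper's own proof, which likewise splits the trace by linearity, identifies $\textbf{tr}\bb{\Kv^{(v)}} = \textbf{tr}\bb{\Xv^{(v)}{\Xv^{(v)}}^T}$, and cites Proposition~\ref{mkpca} to rewrite $\textbf{tr}\bb{\Hv^T\Kv^{(v)}\Hv}$ as the explained variance plus cross-view covariance terms. The interpretive conclusion you give also coincides with the paper's reading of the three terms as total, explained, and unexplained variability.
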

\begin{proof}~\\~\\
Suppose that $\Phiv^{(v)}$ is a $n \times d_v$ centered data matrix from $n$ samples and $d_v$ random variables in the nonlinear feature space $\Fc$, and that $\Phiv = \sbb{\Phiv^{(1)}, \cdots, \Phiv^{(v)}}$ is a $n\times d$ multiview data matrix collected from multiple sources $v = 1, \cdots, m$ where $d = d_1 + \cdots + d_m$ and $m$ is the number of views.

By Proposition \ref{mkpca}, we knows 
\begin{align*}
    \textbf{tr} \bb{\Hv^T\Kv^{(v)}\Hv} = \textbf{tr} \bb{{\Vv_{1:k}^{(v)}}^T{\Xv^{(v)}}^T\Xv^{(v)}\Vv_{1:k}^{(v)}} + \sum_{w\neq v}\textbf{tr} \bb{{\Vv_{1:k}^{(v)}}^T{\Xv^{(v)}}^T\Xv^{(w)}\Vv_{1:k}^{(w)}}
\end{align*}
And hence,
\begin{align*}
    \textbf{tr} \bb{\Kv^{(v)}-\Hv^T\Kv^{(v)}\Hv} =~&~\textbf{tr} \bb{\Xv^{(v)}{\Xv^{(v)}}^T} \\
    &- \left( \textbf{tr} \bb{{\Vv_{1:k}^{(v)}}^T{\Xv^{(v)}}^T\Xv^{(v)}\Vv_{1:k}^{(v)}} + \sum_{w\neq v}\textbf{tr} \bb{{\Vv_{1:k}^{(v)}}^T{\Xv^{(v)}}^T\Xv^{(w)}\Vv_{1:k}^{(w)}} \right)
\end{align*}
where $\scriptstyle \textbf{tr} \bb{\Xv^{(v)}{\Xv^{(v)}}^T}$ is the total variance of the view $v$; 
$\scriptstyle \textbf{tr} \bb{{\Vv_{1:k}^{(v)}}^T{\Xv^{(v)}}^T\Xv^{(v)}\Vv_{1:k}^{(v)}}$ is the variance of the view $v$ explained by the eigenvectors of the feature space; $\scriptstyle \textbf{tr} \bb{{\Vv_{1:k}^{(v)}}^T{\Xv^{(v)}}^T\Xv^{(w)}\Vv_{1:k}^{(w)}}$ is the covariance of the view $v$ with the view $w$ explained by the eigenvectors of the feature space. 
\end{proof}
\bigskip

    ~\newpage 
    \begin{proposition} \label{theta-closed-sol}
The optimization problem 
\begin{align}
\maxi_{\thetav} & \sum_{v=1}^{m}{\theta^{(v)}} \textbf{tr} \bb{\Kv^{(v)}-\Hv^T\Kv^{(v)}\Hv}\\
&\text{subject to } 
\frac{1}{2}\thetav^T\Qv_m\thetav \leq 1,~\thetav \geq \mathbf{0} \nonumber
\end{align}
has a closed form solution 
\begin{align*}
    \thetav = \left( \frac{a^{(1)}}{\sqrt{\left(a^{(1)}\right)^2 + \cdots + \left(a^{(m)}\right)^2}}, \cdots, \frac{a^{(m)}}{\sqrt{\left(a^{(1)}\right)^2 + \cdots + \left(a^{(m)}\right)^2}} \right)
\end{align*}
where $a^{(v)} = \textbf{tr} \bb{\Kv^{(v)}-\Hv^T\Kv^{(v)}\Hv}$ for $v = 1, \cdots, m$, $\Hv$ is a real valued $n\times k$ matrix $\Hv$ such that $\Hv^T\Hv = \Iv_k$, and $\Kv^{(v)}$ is a $n\times n$ positive semidefinite matrix.
\end{proposition}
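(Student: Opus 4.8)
The plan is to recognize problem as the maximization of a \emph{linear} objective over the intersection of the unit Euclidean ball with the non-negative orthant, and then to show that the non-negativity constraint is automatically inactive, so that the optimum follows directly from Cauchy--Schwarz. First I would simplify the quadratic constraint: since $\Qv_m = \text{diag}\sbb{2, \cdots, 2} = 2\Iv_m$, the constraint $\frac{1}{2}\thetav^\top\Qv_m\thetav \leq 1$ reduces to $\thetav^\top\thetav \leq 1$, i.e. $\thetav$ ranges over the unit ball. Collecting the coefficients into $\cv = \bb{a^{(1)}, \cdots, a^{(m)}}^\top$, the objective becomes $\cv^\top\thetav$, so the problem is exactly $\maxi\cbb{\cv^\top\thetav : \|\thetav\| \leq 1,~\thetav \geq \mathbf{0}}$.

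The key structural fact that makes the closed form work is that every coordinate of $\cv$ is non-negative, i.e. $a^{(v)} = \textbf{tr}\bb{\Kv^{(v)}-\Hv^\top\Kv^{(v)}\Hv} \geq 0$ for all $v$. I would establish this using Theorem~\ref{theo:kyfan}: for the positive semidefinite matrix $\Kv^{(v)}$ and any $\Hv$ with $\Hv^\top\Hv = \Iv_k$, Ky Fan's theorem gives $\textbf{tr}\bb{\Hv^\top\Kv^{(v)}\Hv} \leq \sum_{i=1}^{k}\lambda_i$, the sum of the $k$ largest eigenvalues of $\Kv^{(v)}$. Since $\textbf{tr}\bb{\Kv^{(v)}}$ equals the sum of \emph{all} eigenvalues, each of which is non-negative by positive semidefiniteness, we obtain $\textbf{tr}\bb{\Kv^{(v)}} \geq \textbf{tr}\bb{\Hv^\top\Kv^{(v)}\Hv}$, hence $a^{(v)} \geq 0$, so $\cv \geq \mathbf{0}$.

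With $\cv \geq \mathbf{0}$ in hand, the conclusion is immediate. By Cauchy--Schwarz, every feasible $\thetav$ satisfies $\cv^\top\thetav \leq \|\cv\|\,\|\thetav\| \leq \|\cv\|$, with equality in the first inequality precisely when $\thetav$ is a non-negative scalar multiple of $\cv$, and in the second precisely when $\|\thetav\| = 1$. The candidate $\thetav^* = \cv/\|\cv\|$, which is exactly the claimed closed form, attains this upper bound; moreover it is feasible, since it has unit norm and, because $\cv \geq \mathbf{0}$, it also satisfies $\thetav^* \geq \mathbf{0}$. This last point is where non-negativity of $\cv$ does the real work: it guarantees that the ball-constrained maximizer already lands inside the non-negative orthant, so the constraint $\thetav \geq \mathbf{0}$ never binds and may simply be dropped.

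I do not anticipate a genuine obstacle; the only delicate points are (i) the sign argument $a^{(v)} \geq 0$, which is the crux and for which Theorem~\ref{theo:kyfan} is the clean tool, and (ii) the degenerate case $\cv = \mathbf{0}$, in which all within-cluster variances vanish and the stated formula is undefined through division by zero. In that case the objective is identically zero and every feasible point is optimal, so I would either exclude it by assuming some $a^{(v)} > 0$ or note it explicitly as a trivial boundary case.
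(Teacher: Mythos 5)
Your proof is correct, and it takes a cleaner route than the paper's. The paper proves this proposition geometrically: it treats the objective as a family of hyperplanes $k = \theta^{(1)}a^{(1)} + \cdots + \theta^{(m)}a^{(m)}$, constructs the normal line through the origin $\thetav(t) = \bb{a^{(1)}t, \cdots, a^{(m)}t}$, solves for the value of $t$ at which this line meets the unit hypersphere, and argues that the objective increases in $t$ so the tangent point is optimal. Your Cauchy--Schwarz argument reaches the same point $\thetav^* = \cv/\|\cv\|$ but bounds the objective over \emph{all} feasible $\thetav$ at once, rather than only along the normal-line parametrization; this closes a small rigor gap in the paper's version, which never explicitly rules out feasible points off the normal line (that exclusion is implicit in the supporting-hyperplane picture but not spelled out). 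The crux is identical in both proofs: the sign fact $a^{(v)} = \textbf{tr}\bb{\Kv^{(v)} - \Hv^\top\Kv^{(v)}\Hv} \geq 0$, which the paper isolates as its Proposition~\ref{lem:trick2} and proves via Theorem~\ref{theo:kyfan} exactly as you do inline --- this is what licenses dropping the constraint $\thetav \geq \mathbf{0}$, since the unconstrained ball maximizer already lies in the non-negative orthant. Your handling of the degenerate case $\cv = \mathbf{0}$ (all within-cluster variances explained, formula undefined, every feasible point optimal) is a genuine addition: the paper silently assumes $\cv \neq \mathbf{0}$, and flagging it is worthwhile since the closed form is used at every iteration of the algorithm.
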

\begin{proof}~\\~\\
We solve the optimization problem with geometric perspective. First, we define an optimal plane:
\begin{align}\label{plane}
    k = \theta^{(1)}a^{(1)} + \cdots + \theta^{(m)}a^{(m)}
\end{align}
The plane is restricted to one of those that touch or pass through the hypersphere $\left(\theta^{(1)}\right)^2 + \cdots + \left(\theta^{(m)}\right)^2 = 1$ where $\thetav \geq 0$ and $a^{(v)}$s have non-negative real values by Proposition \ref{lem:trick2}. The optimal solution is obtained at which the plane maximizes $k > 0$. In order to obtain the tangent point, we first find the normal vector that is perpendicular to the surface of the plane and passes through the center of the hypersphere (i.e. $\thetav = \mathbf{0}$) as follow:
\begin{align*}
    \frac{\theta^{(1)}-0}{a^{(1)}} = \cdots = \frac{\theta^{(m)}-0}{a^{(m)}} = t
\end{align*}
which is equivalent to
\begin{align*}
    \thetav(t) = \left( a^{(1)}t, \cdots, a^{(m)}t \right)
\end{align*}
where $t$ is a real valued scalar variable. The plane touches the hypersphere at which the normal vector passing through the surface of the hypersphere.
\begin{align*}
    &~\left(\theta^{(1)}\right)^2 + \cdots + \left(\theta^{(m)}\right)^2 = 1 \\
    \Leftrightarrow~&~\left(a^{(1)}t\right)^2 + \cdots + \left(a^{(m)}t\right)^2 = 1\\
    \Leftrightarrow~&~\left(a^{(1)}\right)^2 + \cdots + \left(a^{(m)}\right)^2 = \frac{1}{t^2}
\end{align*}

(\rnum{1}) In order for the plane to be tangent to the hypersphere where $\thetav \geq 0$, $t$ should be a positive real value, hence, we get
\begin{align*}
    t = \frac{1}{\sqrt{\left(a^{(1)}\right)^2 + \cdots + \left(a^{(m)}\right)^2}}
\end{align*}

(\rnum{2}) In order for the plane to pass through the hypersphere where $\thetav \geq 0$, $t$ should be a positive real value such that $\left(\theta^{(1)}\right)^2 + \cdots + \left(\theta^{(m)}\right)^2 < 1$. Therefore,
\begin{align*}
    0 < t < \frac{1}{\sqrt{\left(a^{(1)}\right)^2 + \cdots + \left(a^{(m)}\right)^2}}
\end{align*}
Note that for any $t_s < t_l$, 
\begin{align*}
k(t_s) = \thetav(t_s)^T\av < \thetav(t_l)^T\av = k(t_l)
\end{align*}
where $\av = \left[ a^{(1)}, \cdots, a^{(m)} \right]^T$. Therefore, the plane has the maximum $k$ when it touches the hypersphere and the tangent point on its surface is the optimal solution of $\thetav$. The tangent point where the plane touches the hypersphere is at $t = \left(\sqrt{\left(a^{(1)}\right)^2 + \cdots + \left(a^{(m)}\right)^2}\right)^{-1}$, hence, the tangent point is:
\begin{align*}
    \thetav = \left( \frac{a^{(1)}}{\sqrt{\left(a^{(1)}\right)^2 + \cdots + \left(a^{(m)}\right)^2}}, \cdots, \frac{a^{(m)}}{\sqrt{\left(a^{(1)}\right)^2 + \cdots + \left(a^{(m)}\right)^2}} \right)\\
\end{align*}
which will be the optimal solution of the optimization problem \ref{theta-closed-sol}.
\end{proof}
\bigskip
    ~\newpage
\begin{proposition}  \label{lem:trick2}
	If $\Kv^{(v)}$ is a $n \times n$ positive semidefinite matrix, $\textbf{tr}\bb{\Kv^{(v)}-\Hv^T\Kv^{(v)}\Hv}$ is non-negative for any real valued $n \times k$ matrix $\Hv$ such that $\Hv^T\Hv = \Iv_k$
\end{proposition}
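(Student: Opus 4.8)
The plan is to reduce the claim to two standard facts: that the trace of a positive semidefinite matrix is the sum of its non-negative eigenvalues, and that the constrained maximum of $\textbf{tr}(\Hv^T\Kv^{(v)}\Hv)$ over $\Hv^T\Hv = \Iv_k$ is already controlled by Theorem~\ref{theo:kyfan}. Writing $\lambda_1 \geq \cdots \geq \lambda_n$ for the eigenvalues of the symmetric matrix $\Kv^{(v)}$, the positive semidefinite hypothesis is exactly the statement that $\lambda_n \geq 0$, and I would first record the elementary identity $\textbf{tr}(\Kv^{(v)}) = \sum_{i=1}^{n}\lambda_i$.

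Next, by linearity of the trace I would split $\textbf{tr}(\Kv^{(v)} - \Hv^T\Kv^{(v)}\Hv) = \textbf{tr}(\Kv^{(v)}) - \textbf{tr}(\Hv^T\Kv^{(v)}\Hv)$, so it suffices to bound the subtracted term from above. Applying Theorem~\ref{theo:kyfan} to $\Kv^{(v)}$ gives, for every feasible $\Hv$, the inequality $\textbf{tr}(\Hv^T\Kv^{(v)}\Hv) \leq \max_{\Hv^T\Hv = \Iv_k}\textbf{tr}(\Hv^T\Kv^{(v)}\Hv) = \sum_{i=1}^{k}\lambda_i$. Combining this with the trace identity yields $\textbf{tr}(\Kv^{(v)} - \Hv^T\Kv^{(v)}\Hv) \geq \sum_{i=1}^{n}\lambda_i - \sum_{i=1}^{k}\lambda_i = \sum_{i=k+1}^{n}\lambda_i$, and since each $\lambda_i \geq 0$ by positive semidefiniteness, this tail sum is non-negative, which closes the argument.

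As a self-contained alternative that avoids invoking Ky Fan, I would complete the orthonormal columns of $\Hv$ to a full orthogonal matrix $\mathbf{O} = [\Hv, \Hv_\perp] \in \Rc^{n\times n}$, where $\Hv_\perp$ is an $n\times(n-k)$ block of orthonormal columns spanning the orthogonal complement. Since $\mathbf{O}\mathbf{O}^T = \Iv_n$, the invariance of the trace under orthogonal similarity gives $\textbf{tr}(\Kv^{(v)}) = \textbf{tr}(\mathbf{O}^T\Kv^{(v)}\mathbf{O}) = \textbf{tr}(\Hv^T\Kv^{(v)}\Hv) + \textbf{tr}(\Hv_\perp^T\Kv^{(v)}\Hv_\perp)$, so that $\textbf{tr}(\Kv^{(v)} - \Hv^T\Kv^{(v)}\Hv) = \textbf{tr}(\Hv_\perp^T\Kv^{(v)}\Hv_\perp)$. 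The matrix $\Hv_\perp^T\Kv^{(v)}\Hv_\perp$ is a congruence of a positive semidefinite matrix and is therefore itself positive semidefinite, so its trace is non-negative.

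The argument is essentially routine and I do not expect a genuine obstacle; the only point demanding care is the precise place where the positive semidefinite hypothesis is consumed — the non-negativity of the tail eigenvalues $\lambda_{k+1},\dots,\lambda_n$ in the first approach, or equivalently the fact that congruence preserves positive semidefiniteness in the second. Without this hypothesis the conclusion genuinely fails, so I would make sure that step is stated explicitly rather than glossed over.
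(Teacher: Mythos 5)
Your first argument is exactly the paper's proof: both invoke Theorem~\ref{theo:kyfan} to identify $\max_{\Hv^T\Hv = \Iv_k}\textbf{tr}\bb{\Hv^T\Kv^{(v)}\Hv} = \sum_{i=1}^{k}\lambda_i$, subtract from $\textbf{tr}\bb{\Kv^{(v)}} = \sum_{i=1}^{n}\lambda_i$, and conclude that the expression is bounded below by the tail sum $\sum_{i=k+1}^{n}\lambda_i \geq 0$, with positive semidefiniteness consumed precisely there. The only cosmetic difference is that the paper phrases this as computing the constrained minimum of $\textbf{tr}\bb{\Kv^{(v)}-\Hv^T\Kv^{(v)}\Hv}$ and then comparing an arbitrary feasible $\Hv$ against that minimum, whereas you bound the subtracted term directly; these are the same argument. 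Your self-contained alternative, completing $\Hv$ to an orthogonal matrix $[\Hv, \Hv_\perp]$ and observing that $\textbf{tr}\bb{\Kv^{(v)}-\Hv^T\Kv^{(v)}\Hv} = \textbf{tr}\bb{\Hv_\perp^T\Kv^{(v)}\Hv_\perp} \geq 0$ because congruence preserves positive semidefiniteness, is a genuinely different and more elementary route that the paper does not take: it avoids Ky Fan's theorem entirely and even exhibits the quantity as the trace of an explicit positive semidefinite matrix, which is arguably more transparent. The paper's route has the advantage of reusing a theorem already stated and needed elsewhere in the text, but either proof is complete and correct.
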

\begin{proof}~\\~\\
	From Theorem \ref{theo:kyfan}, 
	\begin{align*}
		\min_{\Hv \in \Rc^{n\times k}, \Hv^T\Hv = \Iv_k} \textbf{tr}\bb{\Kv^{(v)}-\Hv^T\Kv^{(v)}\Hv} & = \textbf{tr}\bb{\Kv^{(v)}}-\max_{\Hv \in \Rc^{n\times k}, \Hv^T\Hv = \Iv_k}\textbf{tr}\bb{\Hv^T\Kv^{(v)}\Hv}\\
		& = \sum_{i=1}^{n} \lambda_i - \sum_{i = 1}^{k} \lambda_i \\
		& = \sum_{i=k+1}^{n} \lambda_i
	\end{align*} 
	Since $\Kv^{(v)}$ is positive semidefinite, all its eigenvalues are non-negative. Therefore, 
	\begin{align*}
	\min_{\Hv \in \Rc^{n\times k}, \Hv^T\Hv = \Iv_k} \textbf{tr}\bb{\Kv^{(v)}-\Hv^T\Kv^{(v)}\Hv} = \sum_{i=k+1}^{n} \lambda_i \geq 0
	\end{align*} 	
	Finally, for any real valued $n \times k$ matrix $\Hv$ such that $\Hv^T\Hv = \Iv_k$, 
	\begin{align*}
		\textbf{tr}\bb{\Kv^{(v)}-\Hv^T\Kv^{(v)}\Hv} \geq \min_{\Hv \in \Rc^{n\times k}, \Hv^T\Hv = \Iv_k} \textbf{tr}\bb{\Kv^{(v)}-\Hv^T\Kv^{(v)}\Hv} \geq 0
	\end{align*}
\end{proof}
\bigskip
    ~\newpage
    \begin{subtext}\label{centering_scaling}
\textbf{Centering and scaling.} 
\end{subtext}
    At every iteration, we must center the combined map $\phiv_{\thetav}\bb{\xv_i}$ around the origin before we perform (kernel) PCA and update the cluster assignments $\Hv$. That is, at every iteration, we must center the data by using the following kernel trick: $\Kv_{\thetav} \leftarrow \Kv_{\thetav}\text{ -- }\Jv_n \Kv_{\thetav}\text{ -- }\Kv_{\thetav} \Jv_n + \Jv_n \Kv_{\thetav} \Jv_n$ where $\Jv_n = \mathbf{1}_n\mathbf{1}_n^T/n$ \citep{scholkopf1998nonlinear}. 
    This is computationally inefficient. Therefore, we suggest the following proposition. 
    Using this proposition, we center $\Kv^{(v)}$ for each view only at the beginning of the algorithm instead of centering the combined kernel matrix $\Kv_{\thetav}$ at every iterations. 
    \begin{proposition} \label{proposition_centering_proof}
    Let $\widetilde{\Kv}_{\thetav}^* = \sum_{v=1}^{m}\theta^{(v)}\widetilde{\Kv}^{(v)}$ where $\widetilde{\Kv}^{(v)} = \Kv^{(v)} - \Jv_n\Kv^{(v)} - \Kv^{(v)}\Jv_n + \Jv_n\Kv^{(v)}\Jv_n$ for $v = 1, \cdots, m$. Then $\widetilde{\Kv}_{\thetav}^* = \widetilde{\Kv}_{\thetav}$ where $\widetilde{\Kv}_{\thetav} = \Kv_{\thetav} - \Jv_n \Kv_{\thetav} - \Kv_{\thetav} \Jv_n + \Jv_n \Kv_{\thetav} \Jv_n$ for any $\thetav \in \Rc^{m}$.
    \end{proposition}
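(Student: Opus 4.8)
The plan is to exploit the fact that the kernel-centering operation is linear in the kernel matrix, so it commutes with the scalar-weighted sum that defines $\Kv_{\thetav}$. The whole statement is then an immediate consequence of linearity, with no genuine obstacle beyond bookkeeping.

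First I would rewrite the centering formula as a single two-sided product. Since $\Jv_n = \mathbf{1}_n\mathbf{1}_n^\top/n$ is a fixed matrix that does not depend on $\thetav$ or on $v$, one checks by direct expansion that for any $n\times n$ matrix $\Kv$,
\begin{align*}
(\Iv_n - \Jv_n)\Kv(\Iv_n - \Jv_n) = \Kv - \Jv_n\Kv - \Kv\Jv_n + \Jv_n\Kv\Jv_n.
\end{align*}
Applying this with $\Kv = \Kv^{(v)}$ shows $\widetilde{\Kv}^{(v)} = (\Iv_n - \Jv_n)\Kv^{(v)}(\Iv_n - \Jv_n)$, and applying it with $\Kv = \Kv_{\thetav}$ shows $\widetilde{\Kv}_{\thetav} = (\Iv_n - \Jv_n)\Kv_{\thetav}(\Iv_n - \Jv_n)$. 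This recasting makes the linearity of the centering map manifest: it is nothing but left- and right-multiplication by the fixed matrix $\Iv_n - \Jv_n$.

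Next I would substitute the definition $\Kv_{\thetav} = \sum_{v=1}^{m}\theta^{(v)}\Kv^{(v)}$ into the two-sided expression for $\widetilde{\Kv}_{\thetav}$ and distribute the matrix products over the finite sum, pulling the scalars $\theta^{(v)}$ outside the products:
\begin{align*}
\widetilde{\Kv}_{\thetav} = (\Iv_n - \Jv_n)\left(\sum_{v=1}^{m}\theta^{(v)}\Kv^{(v)}\right)(\Iv_n - \Jv_n) = \sum_{v=1}^{m}\theta^{(v)}(\Iv_n - \Jv_n)\Kv^{(v)}(\Iv_n - \Jv_n) = \sum_{v=1}^{m}\theta^{(v)}\widetilde{\Kv}^{(v)} = \widetilde{\Kv}_{\thetav}^*.
\end{align*}
Since $\thetav \in \Rc^{m}$ was arbitrary, this establishes $\widetilde{\Kv}_{\thetav}^* = \widetilde{\Kv}_{\thetav}$ for all $\thetav$, which is the claim.

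The only point requiring any verification is the factorization identity in the first step, and this is a routine expansion using the distributive law for matrix multiplication; no positive-semidefiniteness or other structure of the $\Kv^{(v)}$ is needed. Consequently there is no real difficulty to flag — the content of the proposition is exactly that centering and the linear combination over views can be performed in either order, which is why one may center each $\Kv^{(v)}$ once at the outset rather than recentering $\Kv_{\thetav}$ at every iteration.
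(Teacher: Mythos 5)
Your proof is correct and in substance identical to the paper's: both arguments come down to the linearity of the centering operation in the kernel matrix, so that centering commutes with the $\thetav$-weighted sum over views. The only cosmetic difference is that the paper expands each $\widetilde{\Kv}^{(v)}$ into its four terms and regroups the resulting sums to recognize $\Kv_{\thetav}$ term by term, whereas you first package centering as the two-sided product $(\Iv_n - \Jv_n)\,\Kv\,(\Iv_n - \Jv_n)$ and then distribute it over the finite sum.
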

    \begin{proof}
    	\begin{align*}
    	\widetilde{\Kv_{\thetav}}^* & = \sum_{v=1}^{m} \theta^{(v)} \widetilde{\Kv}^{(v)}\\
    	& = \sum_{v=1}^{m} \theta^{(v)} \bb{\Kv^{(v)} - \Jv_n \Kv^{(v)} - \Kv^{(v)} \Jv_n + \Jv_n \Kv^{(v)} \Jv_n}\\
    	& = \sum_{v=1}^{m} \theta^{(v)} \Kv^{(v)} - \sum_{v=1}^{m} \theta^{(v)} \Jv_n \Kv^{(v)} - \sum_{v=1}^{m} \theta^{(v)} \Kv^{(v)} \Jv_n + \sum_{v=1}^{m} \theta^{(v)} \Jv_n \Kv^{(v)} \Jv_n\\
    	& = \Kv_{\thetav} - \Jv_n \bb{\sum_{v=1}^{m} \theta^{(v)} \Kv^{(v)}} - \bb{\sum_{v=1}^{m} \theta^{(v)} \Kv^{(v)}} \Jv_n + \Jv_n \bb{\sum_{v=1}^{m} \theta^{(v)} \Kv^{(v)}} \Jv_n\\
    	& = \Kv_{\thetav} - \Jv_n \Kv_{\thetav} - \Kv_{\thetav} \Jv_n + \Jv_n \Kv_{\thetav} \Jv_n\\
    	& = \widetilde{\Kv_{\thetav}}
    	\end{align*}
    \end{proof}
    It is known that estimation of kernel coefficients depends on how the kernel matrices are scaled \citep{kloft2011lp,ong2008automated}. In order to make multiple views comparable to each other, we suggest to scale each kernel matrix before combining them by ${\Kv}^{(v)} \leftarrow {\Kv^{(v)}}/\mathbf{tr} \bb{\Kv^{(v)}}$. Note that the trace of the centered kernel matrix is the sum of its eigenvalues, i.e. $\mathbf{tr} \bb{\Kv^{(v)}} = \sum_{i=1}^{n} \lambda_i^{(v)}$, which can be interpreted as the measure of variance explained by principal components of the feature space $\Fc$ within each view. Therefore, by scaling the kernel matrix, 
    the total variance explained within each view is set to be uniform, i.e. $\mathbf{tr}\bb{\Kv^{(1)}} = \cdots = \mathbf{tr}\bb{\Kv^{(m)}} = 1$.
\bigskip

    ~\newpage
    \begin{subtext}\label{sim-detail}
\textbf{Simulation Detail.}
\end{subtext}
We evaluate robustness of our method against two types of adversarial perturbations:
	\begin{itemize}
		\item Noise variables that are independently sampled from Gaussian distribution with zero-mean and unit-variance. We add different numbers ($N_{noise} = 0, 1, 2, \cdots$) of noise variables to a view.
		\item Redundant variables that are correlated with original variables. We add different numbers ($N_{redun} = 1, 2, \cdots$) of variables having different correlations ($cor = 1, 0.97, 0.90, 0.72, 0.45$) with the original variables to a view.
	\end{itemize}
Under these perturbations, we examine how our method make use of complementary patterns in multiple views. For this purpose, we first generated multiview data in three scenarios A--C with two or three views. Those views have complementary patterns necessary for identifying true clusters. Scenario A is composed of a complete view that has complete information to detect the three clusters and a partial view that only conveys partial information. Scenario B is composed of two different partial views so that each view alone cannot completely detect the three clusters. Both scenarios A \& B aim to test how the compared methods use the complementary information in two views. Scenario C is composed of two different partial views and a noise view. It aims to test further whether the methods robustly use complementary information from views even when one of the views contains only noise variables. Then, we added different types and levels of adversarial perturbations to one of the views. We denote the simulation data with the noise variables by A-Noise, B-Noise, and C-Noise, and the data with the redundant variables by A-Redun, B-Redun, and C-Redun.

All features were standardized so that they are centered around zero with standard deviations of one. A kernel function $\kv\bb{\xv, \yv} = \exp \bb{\text{-- }0.5||\xv\text{ -- }\yv||^2}$ was used for all the views. After obtaining continuous clustering indicator $\Hv^*$, we performed $k$-means clustering on the normalized $\Hv^*$ with 1000 random starts and reported the best result minimizing the objective function. We stopped the iteration if the stopping criteria $||\thetav_t\text{ -- }\thetav_{t\text{--}1}||_2 < 10^{\text{ --}4}$ is met within 500 iterations.

We compared MML-MKKC with seven other methods: two baseline methods, four recently proposed MKKC methods, and one variant of MML-MKKC. In particular, we included the following baseline methods: 
\begin{itemize}
	\item \textbf{Single Best} uses the best view that minimizes the kernel $k$-means objective function (\ref{exp:opt03}).
	\item \textbf{Uniform Weight} equally assigns all the kernel coefficients $\thetav$ to all views. It takes the combined kernel $\Kv_{\thetav}=\sum_{v=1}^{m}\Kv^{(v)}/m$ as an input $\Kv$ in the problem (\ref{exp:opt03}).
\end{itemize}
The following three MKKC methods are similar in that they all combine multiple kernels as: $\Kv_{\thetav} = \sum_{v=1}^{m}{\theta^{(v)}}^2\Kv^{(v)}$, with $l_1$ constraint on the kernel coefficients $\thetav$, and solve the problem (\ref{exp:opt21}) using the $\min_{\Hv}$-$\min_{\thetav}$ framework.
\begin{itemize}
	\item \textbf{Gonen's MKK} \citep{gonen2014localized} 
	\item \textbf{Gonen's LMKK} \citep{gonen2014localized}: This localized multiple kernel $k$-means clustering method aims to capture sample-specific characteristic of multiple data sources by estimating sample-specific kernel coefficients.	
	\item \textbf{Liu's MKK-MIR} \citep{liu2016multiple}: This method characterizes the correlation of each pair of kernels by integrating a matrix-induced quadratic regularization into the objective function. The regularization parameter $\lambda$ was set to 1 and the quadratic coefficient matrix $\Mv$ was defined as suggested by the paper. 
\end{itemize}
The fourth MKKC method combines the multiple kernels in a different way: 
\begin{itemize}
	\item \textbf{Yu's OKKC} \citep{yu2012optimized}: This method combines multiple views as $\Kv_{\thetav} = \sum_{v=1}^{m}\theta^{(v)}\Kv^{(v)}$ and uses $l_p$ constraint on $\thetav$ where $p \geq 1$, and optimize the problem (\ref{exp:opt21}) using the $\max_{\Hv}$-$\max_{\thetav}$ framework. However, rather than minimizing $\textbf{tr} \bb{\Kv_{\thetav}}\text{-- }\textbf{tr} \bb{\Hv^T\Kv_{\thetav}\Hv}$ as the general formula (\ref{exp:opt21}) does,
	it maximizes the objective function $\textbf{tr} \bb{\Hv^T\Kv_{\thetav}\Hv}$ so that it also leads to solutions favor assigning more weights to dominant views.
	The original algorithm iteratively optimizes the kernel coefficients $\thetav$ and discrete clustering assignment, which increases computational burden and costs more time. For a fair comparison, we updated the continuous cluster assignment $\Hv$ instead of retaining the discrete assignment at every iteration, and optimized it as QCLP, as proposed by all the other MKKC methods including ours.
\end{itemize}
Finally, we also include a variant of our method in the comparison:
\begin{itemize}
	\item \textbf{MinMax-MinC} is the $l_1$-regularization version of our method MML-MKKC, which is included to examine the effect of $l_2$-regularization in our method on clustering. It uses the same $\min_{\Hv}$-$\max_{\thetav}$ formulation in the problem (\ref{exp:opt13}) as our method but with $l_1$ instead of $l_2$ constraint on $\thetav$. Additionally, it uses $\thetav \geq \thetav_{min}$ where $\thetav_{min} = 0.5/m \mathbf{1}$ to avoid a sparse trivial solution.
\end{itemize}
    ~\newpage
    \begin{subtext}\label{real-detail}
\textbf{Data preprocessing and strategy.} 
\end{subtext}
The mRNA and methylation data are log-transformed. Variables have more than 5\% missing values are excluded, otherwise imputed using KNNimpute \citep{hastie1999imputing}. For each cancer, the top 100 features with largest median absolute deviation across the samples are used for each view. A radial basis function kernel is used for all views as suggested by 
To avoid the kernel matrices getting zero values due to a large number of features, we set the parameter of the radial basis function kernel as $\sigma = 1/(2 + p^2)$ where $p$ is the number of features. The kernel matrices were centered and scaled as described in Section \ref{centering_scaling}. 
%

    ~\newpage
    ~~	
    \begin{figure}[t]
		\begin{center}
			\includegraphics[width=0.5\linewidth]{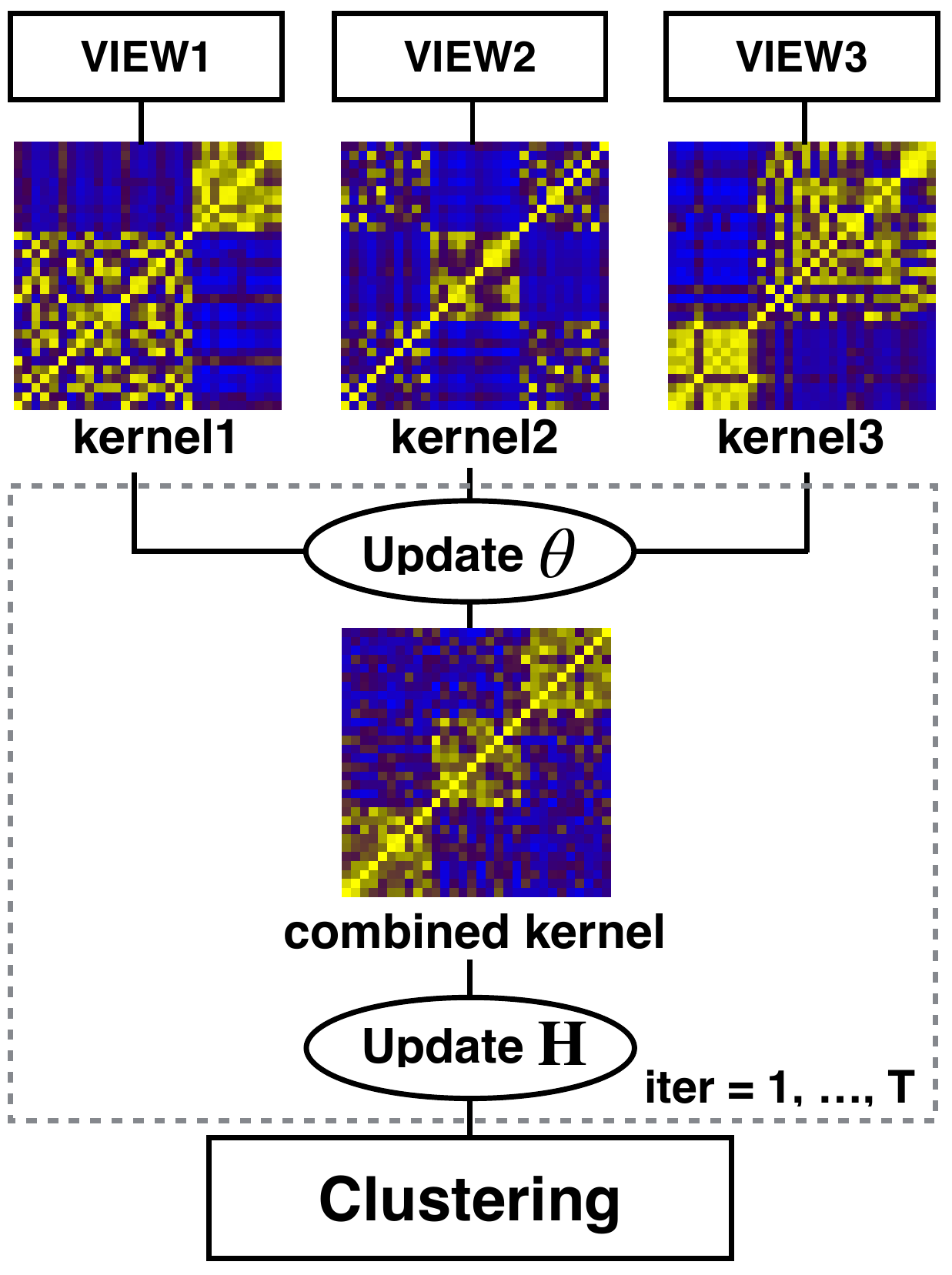}
		\end{center}
		\caption{Overview of multiple kernel clustering. It combines multiple views by taking a linear sum of multiple kernels where each kernel captures similarity between samples within each view. The kernel coefficients $\thetav$ and the cluster assignment matrix $\Hv$ are alternately optimized given each other.
		}\label{fig:overview}
	\end{figure}
    ~\newpage
    ~~
\begin{figure}[!ht]
	\centering
	\includegraphics[width=\textwidth]{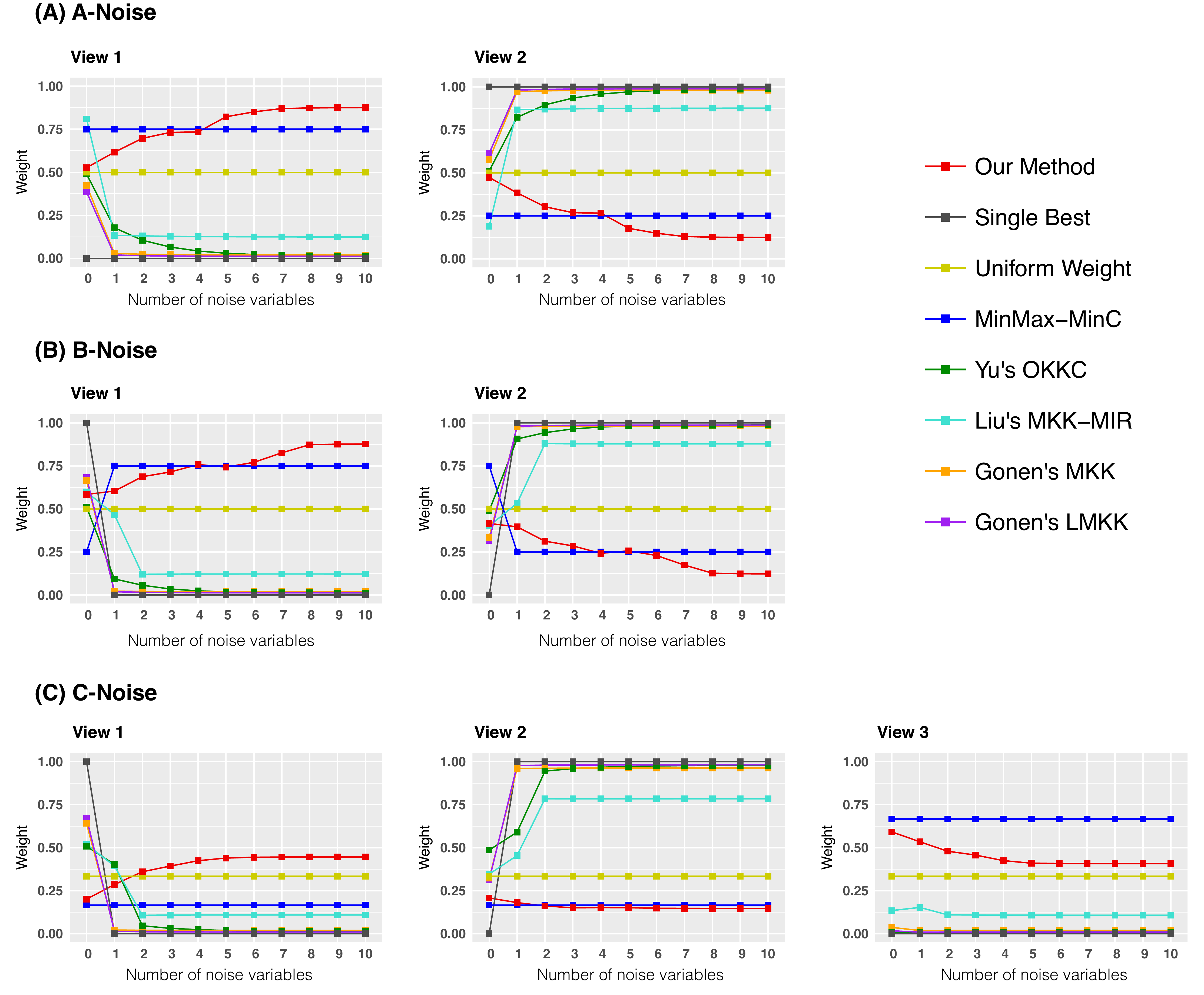}
	\caption{\textbf{Weights given by the compared methods to the views when Scenarios A-Noise, B-Noise, and C-Noise were used to identify clusters.} The weights on the views are plotted against the number of the noise variables $\Nv_{noise}$ added to the first view in each scenario. The x-axis represents the number of noise variables added to the first view. The y-axis represents the relative weight given by the compared methods. The methods are identified by different colors. For comparison purposes, we defined the weight as $\thetav / \thetav^T\mathbf{1}$ for the methods combining kernels using $\Kv_{\thetav} = \sum_{v=1}^{m}\theta^{(v)}\Kv^{(v)}$ (such as Uniform Weight, MinMax-MinC, Yu's OKKC, and our method) and as $\thetav^2 / {\thetav^2}^T\mathbf{1}$ for the methods using $\Kv_{\thetav} = \sum_{v=1}^{m}{\theta^{(v)}}^2\Kv^{(v)}$ (such as Liu's MIR, Gonen's MKK and LMKK).}\label{fig:thetanoise}
\end{figure}
\bigskip ~~

    ~
    \newpage
    
\begin{figure}[!ht]
	\centering
	\includegraphics[width=\textwidth]{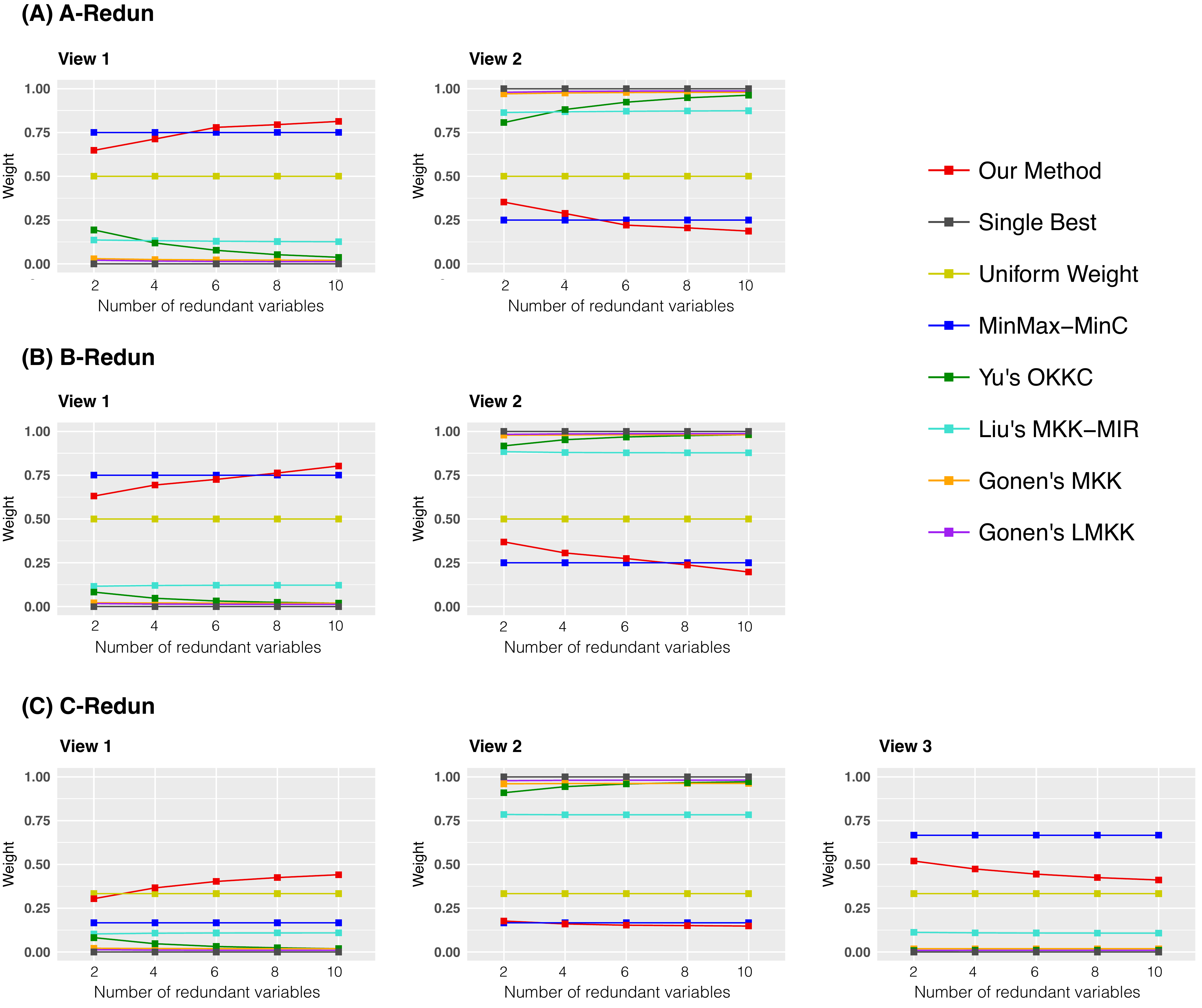}
	\caption{\textbf{Weights given by the compared methods to the views when Scenario A-2, B-2, and C-2 were used to identify clusters.} The weights on the views are plotted against the redundant variables $\Nv_{redun}$ where $cor = 0.90$ added to the first view in each scenario. The x-axis represents the number of noise variables added to the first view. The y-axis represents the relative weight given by the compared methods. The methods are identified by different colors. For comparison purposes, we defined the weight as $\thetav / \thetav^T\mathbf{1}$ for the methods combining kernels using $\Kv_{\thetav} = \sum_{v=1}^{m}\theta^{(v)}\Kv^{(v)}$ (such as Uniform Weight, MinMax-MinC, Yu's OKKC, and our method) and as $\thetav^2 / {\thetav^2}^T\mathbf{1}$ for the methods using $\Kv_{\thetav} = \sum_{v=1}^{m}{\theta^{(v)}}^2\Kv^{(v)}$ (such as Liu's MIR, Gonen's MKK and LMKK).}\label{fig:thetaredun}
\end{figure}
\bigskip ~~~

    ~
    \newpage
\begin{table}[ht]
	\centering
	\caption[caption]{\textbf{Evaluation of the Clutering Methods on Scenario A-Noise}} \label{table:evalclusteringAnoise}
	\begin{adjustbox}{max width=\textwidth}
		\def\arraystretch{0.95}
		\begin{threeparttable}
		\begin{tabular}{@{}p{2.5cm}lccccccccccc@{}}
			\toprule
			\multicolumn{13}{l}{\large \textbf{Scenario A-1}} \cr \midrule
			& $\Nv_{noise}$  & \textbf{0}     & \textbf{1}     & \textbf{2}     & \textbf{3}     & \textbf{4}     
			& \textbf{5}     & \textbf{6}     & \textbf{7}     & \textbf{8}     & \textbf{9}     & \textbf{10}  \cr \midrule 
			\multirow{3}{*}{\begin{tabular}{@{}l@{}}Single\cr Best\end{tabular}}       & adjRI  & 0.497 & 0.497 & 0.497 & 0.497 & 0.497 & 0.497 & 0.497 & 0.497 & 0.497 & 0.497 & 0.497\rule{0pt}{3.3ex} \cr
			& normMI & 0.837 & 0.837 & 0.837 & 0.837 & 0.837 & 0.837 & 0.837 & 0.837 & 0.837 & 0.837 & {\bf 0.837}\cr
			& purity & 0.680 & 0.680 & 0.680 & 0.680 & 0.680 & 0.680 & 0.680 & 0.680 & 0.680 & 0.680 & 0.680\vspace{1.5ex} \cr \midrule 
			\multirow{3}{*}{\begin{tabular}{@{}l@{}}Uniform\cr Weight\end{tabular}}       & adjRI  & \textbf{1.000} & \textbf{1.000} & 0.795 & 0.522 & 0.502 & 0.500 & 0.499 & 0.499 & 0.498 & 0.497 & {\bf 0.498}\rule{0pt}{3.3ex}\cr
			& normMI & \textbf{1.443} & \textbf{1.443} & 1.137 & 0.858 & 0.840 & 0.839 & 0.838 & 0.838 & 0.837 & 0.837 & {\bf 0.837} \cr
			& purity & \textbf{1.000} & \textbf{1.000} & 0.923 & 0.740 & 0.700 & 0.693 & 0.687 & 0.687 & 0.680 & 0.680 & \textbf{0.687}\vspace{1.5ex} \cr \midrule
			\multirow{3}{*}{\begin{tabular}{@{}l@{}}MinMax\cr MinC\end{tabular}}        & adjRI  & \textbf{1.000} & \textbf{1.000} & 0.795 & 0.522 & 0.502 & 0.500 & 0.499 & 0.499 & 0.498 & 0.497 & \textbf{0.498}\rule{0pt}{3.3ex}\cr
			& normMI & \textbf{1.443} & \textbf{1.443} & 1.137 & 0.858 & 0.840 & 0.839 & 0.838 & 0.838 & 0.837 & 0.837 & \textbf{0.837}\cr
			& purity & \textbf{1.000} & \textbf{1.000} & 0.923 & 0.740 & 0.700 & 0.693 & 0.687 & 0.687 & 0.680 & 0.680 & \textbf{0.687}\vspace{1.5ex} \cr \midrule
			\multirow{3}{*}{\begin{tabular}{@{}l@{}}Yu's\cr OKKC\end{tabular}}  & adjRI  & \textbf{1.000} & 0.497 & 0.497 & 0.497 & 0.497 & 0.497 & 0.497 & 0.497 & 0.497 & 0.497 & 0.497\rule{0pt}{3.3ex}\cr
			& normMI & \textbf{1.443} & 0.837 & 0.837 & 0.837 & 0.837 & 0.837 & 0.837 & 0.837 & 0.837 & 0.837 & \textbf{0.837}\cr
			& purity & \textbf{1.000} & 0.680 & 0.680 & 0.680 & 0.680 & 0.680 & 0.680 & 0.680 & 0.680 & 0.680 & 0.680\vspace{1.5ex} \cr \midrule
			\multirow{3}{*}{\begin{tabular}{@{}l@{}}Liu's\cr MKK-MIR\end{tabular}}      & adjRI  & \textbf{1.000} & 0.500 & 0.500 & 0.498 & 0.498 & 0.497 & 0.497 & 0.497 & 0.497 & 0.497 & 0.497\rule{0pt}{3.3ex}\cr
			& normMI & \textbf{1.443} & 0.839 & 0.839 & 0.837 & 0.837 & 0.837 & 0.837 & 0.837 & 0.837 & 0.837 & \textbf{0.837} \cr
			& purity & \textbf{1.000} & 0.693 & 0.693 & 0.683 & 0.683 & 0.680 & 0.680 & 0.680 & 0.680 & 0.680 & 0.680\vspace{1.5ex} \cr \midrule
			\multirow{3}{*}{\begin{tabular}{@{}l@{}}Gonen's\cr MKK \end{tabular}}    & adjRI  & \textbf{1.000} & 0.497 & 0.497 & 0.497 & 0.497 & 0.497 & 0.497 & 0.497 & 0.497 & 0.497 & 0.497\rule{0pt}{3.3ex}\cr
			& normMI & \textbf{1.443} & 0.837 & 0.837 & 0.837 & 0.837 & 0.837 & 0.837 & 0.837 & 0.837 & 0.837 & \textbf{0.837} \cr
			& purity & \textbf{1.000} & 0.680 & 0.680 & 0.680 & 0.680 & 0.680 & 0.680 & 0.680 & 0.680 & 0.680 & 0.680\vspace{1.5ex} \cr \midrule
			\multirow{3}{*}{\begin{tabular}{@{}l@{}}Gonen's\cr LMKK \end{tabular}} & adjRI  & 0.980 & 0.498 & 0.498 & 0.498 & 0.498 & 0.498 & 0.498 & 0.498 & 0.498 & 0.498 & \textbf{0.498}\rule{0pt}{3.3ex}\cr
			& normMI & 1.400 & 0.837 & 0.837 & 0.837 & 0.837 & 0.837 & 0.837 & 0.837 & 0.837 & 0.837 & \textbf{0.837}\cr
			& purity & 0.993 & 0.683 & 0.683 & 0.683 & 0.683 & 0.683 & 0.683 & 0.683 & 0.683 & 0.683 & 0.683\vspace{1.5ex}\cr \midrule
			\multirow{3}{*}{Our Method}      & adjRI  & \textbf{1.000} & \textbf{1.000} & \textbf{1.000} & \textbf{0.951} & \textbf{0.666} & \textbf{0.649} & \textbf{0.548} & \textbf{0.508} & \textbf{0.503} & \textbf{0.501} & \textbf{0.498}\rule{0pt}{3.3ex}\cr
			& normMI & \textbf{1.443} & \textbf{1.443} & \textbf{1.443} & \textbf{1.355} & \textbf{0.993} & \textbf{0.976} & \textbf{0.882} & \textbf{0.846} & \textbf{0.841} & \textbf{0.840} & \textbf{0.837}\cr
			& purity & \textbf{1.000} & \textbf{1.000} & \textbf{1.000} & \textbf{0.983} & \textbf{0.860} & \textbf{0.850} & \textbf{0.773} & \textbf{0.717} & \textbf{0.703} & \textbf{0.697} & 0.683\vspace{1.5ex}\cr \bottomrule 
		\end{tabular}%
	    \begin{tablenotes}
	        \footnotesize
	      	\item Clustering performance of the methods are evaluated by three widely-used metrics: Adjusted Rand Index (adjRI), Normalized Mutual Information (normMI), and purity. A higher value of the metrics indicates better clustering performance. Each column represents a simulated data set where the corresponding number indicates the number of the noise variables ($\Nv_{noise}$) added to the complete view (View 1). The bolded numbers are the maximum value for each the evalution measure within a simulation data set.
	    \end{tablenotes}
		\end{threeparttable}
	\end{adjustbox}	
\end{table}

    ~
    \newpage
\begin{table}[ht]
	\centering
	\caption[caption]{\textbf{Evaluation of the Clutering Methods on Scenario B-Noise}} \label{table:evalclusteringBnoise}
	\begin{adjustbox}{max width=\textwidth}
		\def\arraystretch{0.95}
		\begin{threeparttable}
		\begin{tabular}{@{}p{2.5cm}lccccccccccc@{}}
			\toprule
			\multicolumn{13}{l}{\large \textbf{Scenario B-1}} \cr \midrule
			& $\Nv_{noise}$  & \textbf{0}     & \textbf{1}     & \textbf{2}     & \textbf{3}     & \textbf{4}     
			& \textbf{5}     & \textbf{6}     & \textbf{7}     & \textbf{8}     & \textbf{9}     & \textbf{10}  \cr \midrule 
			\multirow{3}{*}{\begin{tabular}{@{}l@{}}Single\cr Best\end{tabular}}
			& adjRI  & 0.497 & 0.487 & 0.487 & 0.487 & 0.487 & 0.487 & 0.487 & 0.487 & 0.487 & \textbf{0.487} & \textbf{0.487} \rule{0pt}{3.3ex} \cr
			& normMI & 0.837 & 0.811 & 0.811 & 0.811 & 0.811 & 0.811 & 0.811 & 0.811 & 0.811 & 0.811 & \textbf{0.811} \cr
			& purity & 0.667 & 0.663 & 0.663 & 0.663 & 0.663 & 0.663 & 0.663 & 0.663 & 0.663 & 0.663 & 0.663 \vspace{1.5ex} \cr \midrule 
			\multirow{3}{*}{\begin{tabular}{@{}l@{}}Uniform\cr Weight\end{tabular}}  
			& adjRI  & \textbf{1.000} & \textbf{1.000} & 0.980 & 0.487 & 0.487 & 0.487 & 0.487 & 0.487 & 0.487 & \textbf{0.487} & \textbf{0.487} \rule{0pt}{3.3ex}\cr
			& normMI & \textbf{1.443} & \textbf{1.443} & 1.400 & 0.811 & 0.811 & 0.811 & 0.811 & 0.811 & 0.811 & 0.811 & \textbf{0.811} \cr
			& purity & \textbf{1.000} & \textbf{1.000} & 0.993 & 0.670 & 0.663 & 0.663 & 0.663 & 0.667 & 0.663 & 0.663 & 0.663 \cr \midrule
			\multirow{3}{*}{\begin{tabular}{@{}l@{}}MinMax \cr MinC\end{tabular}} 
			& adjRI  & \textbf{1.000} & \textbf{1.000} & 0.980 & 0.487 & 0.487 & 0.487 & 0.487 & 0.487 & 0.487 & \textbf{0.487} & \textbf{0.487} \rule{0pt}{3.3ex}\cr
			& normMI & \textbf{1.443} & \textbf{1.443} & 1.400 & 0.811 & 0.811 & 0.811 & 0.811 & 0.811 & 0.811 & 0.811 & \textbf{0.811} \cr
			& purity & \textbf{1.000} & \textbf{1.000} & 0.993 & 0.670 & 0.663 & 0.663 & 0.663 & 0.667 & 0.663 & 0.663 & 0.663 \cr \midrule
			\multirow{3}{*}{\begin{tabular}{@{}l@{}}Yu's\cr OKKC\end{tabular}}
            & adjRI  & \textbf{1.000} & 0.487 & 0.487 & 0.487 & 0.487 & 0.487 & 0.487 & 0.487 & 0.487 & \textbf{0.487} & \textbf{0.487} \rule{0pt}{3.3ex}\cr
			& normMI & \textbf{1.443} & 0.811 & 0.811 & 0.811 & 0.811 & 0.811 & 0.811 & 0.811 & 0.811 & 0.811 & \textbf{0.811} \cr
			& purity & \textbf{1.000} & 0.663 & 0.663 & 0.663 & 0.663 & 0.663 & 0.663 & 0.663 & 0.663 & 0.663 & 0.663 \cr \midrule
			\multirow{3}{*}{\begin{tabular}{@{}l@{}}Liu's\cr MKK-MIR\end{tabular}}
			& adjRI  & \textbf{1.000} & \textbf{1.000} & 0.487 & 0.487 & 0.487 & 0.487 & 0.487 & 0.487 & 0.487 & \textbf{0.487} & \textbf{0.487} \rule{0pt}{3.3ex}\cr
			& normMI & \textbf{1.443} & \textbf{1.443} & 0.811 & 0.811 & 0.811 & 0.811 & 0.811 & 0.811 & 0.811 & 0.811 & \textbf{0.811} \cr
			& purity & \textbf{1.000} & \textbf{1.000} & 0.663 & 0.663 & 0.663 & 0.663 & 0.663 & 0.663 & 0.663 & 0.663 & 0.663 \cr \midrule
            \multirow{3}{*}{\begin{tabular}{@{}l@{}}Gonen's\cr MKK \end{tabular}}
			& adjRI  & \textbf{1.000} & 0.487 & 0.487 & 0.487 & 0.487 & 0.487 & 0.487 & 0.487 & 0.487 & \textbf{0.487} & \textbf{0.487} \rule{0pt}{3.3ex}\cr
			& normMI & \textbf{1.443} & 0.811 & 0.811 & 0.811 & 0.811 & 0.811 & 0.811 & 0.811 & 0.811 & 0.811 & \textbf{0.811} \cr
			& purity & \textbf{1.000} & 0.663 & 0.663 & 0.663 & 0.663 & 0.663 & 0.663 & 0.663 & 0.663 & 0.663 & 0.663 \cr \midrule
            \multirow{3}{*}{\begin{tabular}{@{}l@{}}Gonen's\cr LMKK \end{tabular}} 
			& adjRI  & 0.552 & 0.487 & 0.487 & 0.487 & 0.487 & 0.487 & 0.487 & 0.487 & 0.487 & \textbf{0.487} & \textbf{0.487} \rule{0pt}{3.3ex}\cr
			& normMI & 0.832 & 0.812 & 0.811 & 0.811 & 0.811 & 0.811 & 0.811 & 0.811 & 0.811 & 0.811 & \textbf{0.811} \cr
			& purity & 0.820 & 0.670 & 0.663 & 0.663 & 0.663 & 0.663 & 0.663 & 0.663 & 0.663 & 0.663 & 0.663 \cr \midrule
			\multirow{3}{*}{Our Method} 
			& adjRI  & \textbf{1.000} & \textbf{1.000} & \textbf{1.000} & \textbf{0.951} & \textbf{0.789} & \textbf{0.510} & \textbf{0.490} & \textbf{0.493} & \textbf{0.490} & \textbf{0.487} & \textbf{0.487} \rule{0pt}{3.3ex}\cr
			& normMI & \textbf{1.443} & \textbf{1.443} & \textbf{1.443} & \textbf{1.355} & \textbf{1.159} & \textbf{0.847} & \textbf{0.814} & \textbf{0.817} & \textbf{0.814} & \textbf{0.812} & \textbf{0.811} \cr
			& purity & \textbf{1.000} & \textbf{1.000} & \textbf{1.000} & \textbf{0.983} & \textbf{0.920} & \textbf{0.720} & \textbf{0.687} & \textbf{0.700} & \textbf{0.690} & \textbf{0.673} & \textbf{0.670} \vspace{1.5ex}\cr \bottomrule 
		\end{tabular}%
	    \begin{tablenotes}
	        \footnotesize
	      	\item Clustering performance of the methods are evaluated by three widely-used metrics: Adjusted Rand Index (adjRI), Normalized Mutual Information (normMI), and purity. A higher value of the metrics indicates better clustering performance. Each column represents a simulated data set where the corresponding number indicates the number of the noise variables ($\Nv_{noise}$) added to the first partial view (View 1). The bolded numbers are the maximum value for each the evalution measure within a simulation data set.
	    \end{tablenotes}
		\end{threeparttable}
	\end{adjustbox}	
\end{table}
    ~
    \newpage
\begin{table}[ht]
	\centering
	\caption[caption]{\textbf{Evaluation of the Clutering Methods on Scenario C-Noise}} \label{table:evalclusteringCnoise}
	\begin{adjustbox}{max width=\textwidth}
		\def\arraystretch{0.95}
		\begin{threeparttable}
		\begin{tabular}{@{}p{2.5cm}lccccccccccc@{}}
			\toprule
			\multicolumn{13}{l}{\large \textbf{Scenario C-1}} \cr \midrule
			& $\Nv_{noise}$  & \textbf{0}     & \textbf{1}     & \textbf{2}     & \textbf{3}     & \textbf{4}     
			& \textbf{5}     & \textbf{6}     & \textbf{7}     & \textbf{8}     & \textbf{9}     & \textbf{10}  \cr \midrule 
			\multirow{3}{*}{\begin{tabular}{@{}l@{}}Single\cr Best\end{tabular}}
			& adjRI  & 0.497 & 0.487 & 0.487 & 0.487 & 0.487 & 0.487 & 0.487 & 0.487 & 0.487 & 0.487 & 0.487 \rule{0pt}{3.3ex} \cr
			& normMI & 0.837 & 0.811 & 0.811 & 0.811 & 0.811 & 0.811 & 0.811 & 0.811 & 0.811 & 0.811 & 0.811 \cr
			& purity & 0.667 & 0.663 & 0.663 & 0.663 & 0.663 & 0.663 & 0.663 & 0.663 & 0.663 & 0.663 & 0.663 \vspace{1.5ex} \cr \midrule
			\multirow{3}{*}{\begin{tabular}{@{}l@{}}Uniform\cr Weight\end{tabular}}
			& adjRI  & \textbf{1.000} & \textbf{0.990} & 0.878 & 0.500 & 0.487 & 0.487 & 0.487 & 0.487 & 0.487 & 0.487 & 0.487 \rule{0pt}{3.3ex} \cr
			& normMI & \textbf{1.443} & \textbf{1.418} & 1.234 & 0.839 & 0.811 & 0.812 & 0.811 & 0.811 & 0.812 & 0.812 & 0.812 \cr
			& purity & \textbf{1.000} & \textbf{0.997} & 0.957 & 0.693 & 0.667 & 0.670 & 0.667 & 0.667 & 0.667 & 0.667 & 0.667 \vspace{1.5ex}\cr \midrule
			\multirow{3}{*}{\begin{tabular}{@{}l@{}}MinMax \cr MinC\end{tabular}} 
			& adjRI  & \textbf{1.000} & \textbf{0.990} & 0.878 & 0.500 & 0.487 & 0.487 & 0.487 & 0.487 & 0.487 & 0.487 & 0.487 \rule{0pt}{3.3ex} \cr
			& normMI & \textbf{1.443} & \textbf{1.418} & 1.234 & 0.839 & 0.811 & 0.812 & 0.811 & 0.811 & 0.812 & 0.812 & 0.812 \cr
			& purity & \textbf{1.000} & \textbf{0.997} & 0.957 & 0.693 & 0.667 & 0.670 & 0.667 & 0.667 & 0.667 & 0.667 & 0.667 \vspace{1.5ex}\cr \midrule
			\multirow{3}{*}{\begin{tabular}{@{}l@{}}Yu's\cr OKKC\end{tabular}} 
			& adjRI  & \textbf{1.000} & 0.545 & 0.487 & 0.487 & 0.487 & 0.487 & 0.487 & 0.487 & 0.487 & 0.487 & 0.487 \rule{0pt}{3.3ex} \cr
			& normMI & \textbf{1.443} & 0.879 & 0.811 & 0.811 & 0.811 & 0.811 & 0.811 & 0.811 & 0.811 & 0.811 & 0.811 \cr
			& purity & \textbf{1.000} & 0.770 & 0.663 & 0.663 & 0.663 & 0.663 & 0.663 & 0.663 & 0.663 & 0.663 & 0.663 \vspace{1.5ex}\cr \midrule
			\multirow{3}{*}{\begin{tabular}{@{}l@{}}Liu's\cr MKK-MIR\end{tabular}}
			& adjRI  & \textbf{1.000} & 0.980 & 0.487 & 0.487 & 0.487 & 0.487 & 0.487 & 0.487 & 0.487 & 0.487 & 0.487 \rule{0pt}{3.3ex} \cr
			& normMI & \textbf{1.443} & 1.394 & 0.811 & 0.811 & 0.811 & 0.811 & 0.811 & 0.811 & 0.811 & 0.811 & 0.811 \cr
			& purity & \textbf{1.000} & 0.993 & 0.663 & 0.663 & 0.663 & 0.663 & 0.663 & 0.663 & 0.663 & 0.663 & 0.663 \vspace{1.5ex}\cr \midrule
			\multirow{3}{*}{\begin{tabular}{@{}l@{}}Gonen's\cr MKK \end{tabular}}
			& adjRI  & \textbf{1.000} & 0.487 & 0.487 & 0.487 & 0.487 & 0.487 & 0.487 & 0.487 & 0.487 & 0.487 & 0.487 \rule{0pt}{3.3ex} \cr
			& normMI & \textbf{1.443} & 0.811 & 0.811 & 0.811 & 0.811 & 0.811 & 0.811 & 0.811 & 0.811 & 0.811 & 0.811 \cr
			& purity & \textbf{1.000} & 0.663 & 0.663 & 0.663 & 0.663 & 0.663 & 0.663 & 0.663 & 0.663 & 0.663 & 0.663 \vspace{1.5ex}\cr \midrule
			\multirow{3}{*}{\begin{tabular}{@{}l@{}}Gonen's\cr LMKK \end{tabular}}
			& adjRI  & 0.500 & 0.490 & 0.490 & 0.490 & 0.490 & 0.490 & 0.490 & 0.490 & 0.490 & 0.490 & 0.490 \rule{0pt}{3.3ex} \cr
			& normMI & 0.775 & 0.814 & 0.814 & 0.814 & 0.814 & 0.814 & 0.814 & 0.814 & 0.814 & 0.814 & 0.814 \cr
			& purity & 0.790 & 0.683 & 0.683 & 0.683 & 0.683 & 0.683 & \textbf{0.683} & \textbf{0.683} & 0.683 & \textbf{0.683} & \textbf{0.683} \vspace{1.5ex}\cr \midrule
			\multirow{3}{*}{Our Method}
			& adjRI  & \textbf{1.000} & \textbf{0.990} & \textbf{0.923} & \textbf{0.869} & \textbf{0.556} & \textbf{0.515} & \textbf{0.498} & \textbf{0.498} & \textbf{0.499} & \textbf{0.498} & \textbf{0.497} \rule{0pt}{3.3ex} \cr
			& normMI & \textbf{1.443} & \textbf{1.418} & \textbf{1.300} & \textbf{1.223} & \textbf{0.890} & \textbf{0.852} & \textbf{0.838} & \textbf{0.837} & \textbf{0.838} & \textbf{0.838} & \textbf{0.837} \cr
			& purity & \textbf{1.000} & \textbf{0.997} & \textbf{0.973} & \textbf{0.953} & \textbf{0.780} & \textbf{0.730} & 0.680 & 0.680 & \textbf{0.690} & \textbf{0.683} & 0.677 \vspace{1.5ex}\cr \bottomrule 
		\end{tabular}%
	    \begin{tablenotes}
	        \footnotesize
	      	\item Clustering performance of the methods are evaluated by three widely-used metrics: Adjusted Rand Index (adjRI), Normalized Mutual Information (normMI), and purity. A higher value of the metrics indicates better clustering performance. Each column represents a simulated data set where the corresponding number indicates the number of the noise variables ($\Nv_{noise}$) added to the first partial view (View 1). The bolded numbers are the maximum value for each the evalution measure within a simulation data set.
	    \end{tablenotes}
		\end{threeparttable}
	\end{adjustbox}	
\end{table}
    \newpage
\begin{sidewaystable}
	\centering
	\caption[caption]{\textbf{Evaluation of the Clutering Methods on Scenario A-Redun}} \label{table:evalclusteringAredun}
	\begin{adjustbox}{max width=\textwidth}
		\def\arraystretch{0.95}
		\begin{threeparttable}
			\begin{tabular}{@{}llccccccccccccccccccccccccc@{}}
				\toprule
				\multicolumn{27}{l}{\large \textbf{Scenario A-2}} \cr \midrule
				& {\bf cor}
				& \multicolumn{5}{c}{{\bf 0.45}}   & \multicolumn{5}{c}{\bf 0.72}  & \multicolumn{5}{c}{\bf 0.90}    
				& \multicolumn{5}{c}{\bf 0.97}   & \multicolumn{5}{c}{\bf 1}     \cr \cmidrule(lr){3-7} \cmidrule(lr){8-12} \cmidrule(lr){13-17} \cmidrule(lr){18-22} \cmidrule(lr){23-27} 
				& $\Nv_{redun}$     
				& \textbf{2}    & \textbf{4}    & \textbf{6}    & \textbf{8}    & \textbf{10}    & \textbf{2}    & \textbf{4}    & \textbf{6}    & \textbf{8}    & \textbf{10}    & \textbf{2}    & \textbf{4}    & \textbf{6}    & \textbf{8}    & \textbf{10}    & \textbf{2}    & \textbf{4}    & \textbf{6}    & \textbf{8}    & \textbf{10}    & \textbf{2}    & \textbf{4}    & \textbf{6}    & \textbf{8}    & \textbf{10}    \cr \midrule 
				\multirow{3}{*}{\begin{tabular}{@{}l@{}}Single\cr Best\end{tabular}}              
				& adjRI       & 0.497      & 0.497 & 0.497 & 0.497 & 0.497 & 0.497 & 0.497 & 0.497 & 0.497 & 0.497 & 0.497 & 0.497 & 0.497 & 0.497 & 0.497 & 0.497 & 0.497 & 0.497 & 0.497 & 0.497 & 0.497 & 0.497 & 0.497 & 0.497 & 0.497\rule{0pt}{3.3ex}\cr
				& normMI      & 0.837      & 0.837 & \textbf{0.837} & \textbf{0.837} & \textbf{0.837} & 0.837 & 0.837 & 0.837 & 0.837 & \textbf{0.837} & 0.837 & 0.837 & 0.837 & 0.837 & 0.837 & 0.837 & 0.837 & 0.837 & 0.837 & 0.837 & 0.837 & 0.837 & 0.837 & 0.837 & 0.837 \cr
				& purity       & 0.680      & 0.680 & 0.680 & 0.680 & 0.680 & 0.680 & 0.680 & 0.680 & 0.680 & 0.680 & 0.680 & 0.680 & 0.680 & 0.680 & 0.680 & 0.680 & 0.680 & 0.680 & 0.680 & 0.680 & 0.680 & 0.680 & 0.680 & 0.680 & 0.680\vspace{1.5ex} \cr \midrule
				\multirow{3}{*}{\begin{tabular}{@{}l@{}}Uniform\cr Weight\end{tabular}}              
				& adjRI        & \textbf{0.507}      & 0.497 & 0.497 & 0.497 & 0.497 & 0.795 & 0.499 & 0.497 & 0.497 & 0.497 & \textbf{0.980} & 0.951 & 0.515 & 0.501 & 0.499 & \textbf{1.000} & \textbf{1.000} & \textbf{1.000} & \textbf{1.000} & 0.904 & \textbf{1.000} & \textbf{1.000} & \textbf{1.000} & \textbf{1.000} & \textbf{0.990}\rule{0pt}{3.3ex}\cr
				& normMI       & \textbf{0.845}      & 0.837 & \textbf{0.837} & \textbf{0.837} & \textbf{0.837} & 1.132 & 0.839 & 0.836 & 0.837 & \textbf{0.837} & \textbf{1.400} & 1.341 & 0.852 & 0.840 & 0.838 & \textbf{1.443} & \textbf{1.443} & \textbf{1.443} & \textbf{1.443} & 1.270 & \textbf{1.443} & \textbf{1.443} & \textbf{1.443} & \textbf{1.443} & \textbf{1.418} \cr
				& purity       & \textbf{0.713}      & 0.680 & 0.680 & 0.680 & 0.680 & 0.923 & 0.683 & 0.677 & 0.680 & 0.680 & \textbf{0.993} & 0.983 & 0.730 & 0.697 & 0.687 & \textbf{1.000} & \textbf{1.000} & \textbf{1.000} & \textbf{1.000} & 0.967 & \textbf{1.000} & \textbf{1.000} & \textbf{1.000} & \textbf{1.000} & \textbf{0.997}\vspace{1.5ex} \cr \midrule
				\multirow{3}{*}{\begin{tabular}{@{}l@{}}MinMax \cr MinC\end{tabular}}              
				& adjRI        & \textbf{0.507}      & 0.497 & 0.497 & 0.497 & 0.497 & 0.795 & 0.499 & 0.497 & 0.497 & 0.497 & \textbf{0.980} & 0.951 & 0.515 & 0.501 & 0.499 & \textbf{1.000} & \textbf{1.000} & \textbf{1.000} & \textbf{1.000} & 0.904 & \textbf{1.000} & \textbf{1.000} & \textbf{1.000} & \textbf{1.000} & \textbf{0.990}\rule{0pt}{3.3ex}\cr
				& normMI       & \textbf{0.845}      & 0.837 & \textbf{0.837} & \textbf{0.837} & \textbf{0.837} & 1.132 & 0.839 & 0.836 & 0.837 & \textbf{0.837} & \textbf{1.400} & 1.341 & 0.852 & 0.840 & 0.838 & \textbf{1.443} & \textbf{1.443} & \textbf{1.443} & \textbf{1.443} & 1.270 & \textbf{1.443} & \textbf{1.443} & \textbf{1.443} & \textbf{1.443} & \textbf{1.418} \cr
				& purity       & \textbf{0.713}      & 0.680 & 0.680 & 0.680 & 0.680 & 0.923 & 0.683 & 0.677 & 0.680 & 0.680 & \textbf{0.993} & 0.983 & 0.730 & 0.697 & 0.687 & \textbf{1.000} & \textbf{1.000} & \textbf{1.000} & \textbf{1.000} & 0.967 & \textbf{1.000} & \textbf{1.000} & \textbf{1.000} & \textbf{1.000} & \textbf{0.997}\vspace{1.5ex} \cr \midrule
				\multirow{3}{*}{\begin{tabular}{@{}l@{}}Yu's\cr OKKC\end{tabular}}          
				& adjRI        & 0.497      & 0.497 & 0.497 & 0.497 & 0.497 & 0.497 & 0.497 & 0.497 & 0.497 & 0.497 & 0.497 & 0.497 & 0.497 & 0.497 & 0.497 & \textbf{1.000} & 0.497 & 0.497 & 0.497 & 0.497 & \textbf{1.000} & \textbf{1.000} & 0.519 & 0.497 & 0.497\rule{0pt}{3.3ex}\cr
				& normMI       & 0.837      & 0.837 & \textbf{0.837} & \textbf{0.837} & \textbf{0.837} & 0.837 & 0.837 & 0.837 & 0.837 & \textbf{0.837} & 0.837 & 0.837 & 0.837 & 0.837 & 0.837 & \textbf{1.443} & 0.837 & 0.837 & 0.837 & 0.837 & \textbf{1.443} & \textbf{1.443} & 0.856 & 0.837 & 0.837 \cr
				& purity       & 0.680      & 0.680 & 0.680 & 0.680 & 0.680 & 0.680 & 0.680 & 0.680 & 0.680 & 0.680 & 0.680 & 0.680 & 0.680 & 0.680 & 0.680 & \textbf{1.000} & 0.680 & 0.680 & 0.680 & 0.680 & \textbf{1.000} & \textbf{1.000} & 0.737 & 0.680 & 0.680\vspace{1.5ex} \cr \midrule
				\multirow{3}{*}{\begin{tabular}{@{}l@{}}Liu's\cr MKK-MIR\end{tabular}}             
				& adjRI        & 0.497      & 0.497 & 0.497 & 0.497 & 0.497 & 0.498 & 0.497 & 0.497 & 0.497 & 0.497 & 0.501 & 0.497 & 0.497 & 0.497 & 0.497 & \textbf{1.000} & 0.498 & 0.497 & 0.497 & 0.497 & \textbf{1.000} & \textbf{1.000} & 0.500 & 0.499 & 0.497\rule{0pt}{3.3ex}\cr
				& normMI       & 0.837      & 0.837 & \textbf{0.837} & \textbf{0.837} & \textbf{0.837} & 0.837 & 0.837 & 0.837 & 0.837 & \textbf{0.837} & 0.840 & 0.837 & 0.837 & 0.837 & 0.837 & \textbf{1.443} & 0.837 & 0.837 & 0.837 & 0.837 & \textbf{1.443} & \textbf{1.443} & 0.839 & 0.838 & 0.837 \cr
				& purity       & 0.680      & 0.680 & 0.680 & 0.680 & 0.680 & 0.680 & 0.680 & 0.680 & 0.680 & 0.680 & 0.697 & 0.680 & 0.680 & 0.680 & 0.680 & \textbf{1.000} & 0.687 & 0.680 & 0.680 & 0.680 & \textbf{1.000} & \textbf{1.000} & 0.693 & 0.690 & 0.680\vspace{1.5ex} \cr \midrule
				\multirow{3}{*}{\begin{tabular}{@{}l@{}}Gonen's\cr MKK \end{tabular}}           
				& adjRI        & 0.497      & 0.497 & 0.497 & 0.497 & 0.497 & 0.497 & 0.497 & 0.497 & 0.497 & 0.497 & 0.497 & 0.497 & 0.497 & 0.497 & 0.497 & 0.497 & 0.497 & 0.497 & 0.497 & 0.497 & 0.497 & 0.497 & 0.497 & 0.497 & 0.497\rule{0pt}{3.3ex}\cr
				& normMI       & 0.837      & 0.837 & \textbf{0.837} & \textbf{0.837} & \textbf{0.837} & 0.837 & 0.837 & 0.837 & 0.837 & \textbf{0.837} & 0.837 & 0.837 & 0.837 & 0.837 & 0.837 & 0.837 & 0.837 & 0.837 & 0.837 & 0.837 & 0.837 & 0.837 & 0.837 & 0.837 & 0.837\cr
				& purity       & 0.680      & 0.680 & 0.680 & 0.680 & 0.680 & 0.680 & 0.680 & 0.680 & 0.680 & 0.680 & 0.680 & 0.680 & 0.680 & 0.680 & 0.680 & 0.680 & 0.680 & 0.680 & 0.680 & 0.680 & 0.680 & 0.680 & 0.680 & 0.680 & 0.680\vspace{1.5ex} \cr \midrule
				\multirow{3}{*}{\begin{tabular}{@{}l@{}}Gonen's\cr LMKK \end{tabular}}        
				& adjRI        & 0.498      & 0.498 & \textbf{0.498} & \textbf{0.498} & \textbf{0.498} & 0.498 & 0.498 & \textbf{0.498} & 0.498 & \textbf{0.498} & 0.497 & 0.497 & 0.498 & 0.498 & 0.498 & 0.497 & 0.497 & 0.498 & 0.498 & 0.498 & 0.497 & 0.497 & 0.497 & 0.497 & 0.497\rule{0pt}{3.3ex}\cr
				& normMI       & 0.837      & 0.837 & \textbf{0.837} & \textbf{0.837} & \textbf{0.837} & 0.837 & 0.837 & 0.837 & 0.837 & \textbf{0.837} & 0.837 & 0.837 & 0.837 & 0.837 & 0.837 & 0.837 & 0.837 & 0.837 & 0.837 & 0.837 & 0.837 & 0.837 & 0.837 & 0.837 & 0.837 \cr
				& purity       & 0.683      & 0.683 & \textbf{0.683} & \textbf{0.683} & \textbf{0.683} & 0.683 & 0.683 & \textbf{0.683} & 0.683 & \textbf{0.683} & 0.680 & 0.680 & 0.683 & 0.683 & 0.683 & 0.680 & 0.680 & 0.683 & 0.683 & 0.683 & 0.680 & 0.680 & 0.680 & 0.680 & 0.680\vspace{1.5ex} \cr \midrule
				\multirow{3}{*}{Our Method}             
				& adjRI        & 0.497      & \textbf{0.502} & 0.497 & 0.497 & 0.497 & \textbf{0.941} & \textbf{0.788} & \textbf{0.498} & \textbf{0.499} & 0.497 & \textbf{0.980} & \textbf{0.990} & \textbf{1.000} & \textbf{1.000} & \textbf{0.844} & \textbf{1.000} & \textbf{1.000} & \textbf{1.000} & 0.980 & \textbf{0.970} & \textbf{1.000} & \textbf{1.000} & \textbf{1.000} & \textbf{1.000} & 0.970\rule{0pt}{3.3ex}\cr
				& normMI       & 0.836      & \textbf{0.841} & \textbf{0.837} & \textbf{0.837} & \textbf{0.837} & \textbf{1.325} & \textbf{1.127} & \textbf{0.838} & \textbf{0.838} & 0.836 & \textbf{1.400} & \textbf{1.418} & \textbf{1.443} & \textbf{1.443} & \textbf{1.203} & \textbf{1.443} & \textbf{1.443} & \textbf{1.443} & 1.400 & \textbf{1.384} & \textbf{1.443} & \textbf{1.443} & \textbf{1.443} & \textbf{1.443} & 1.384 \cr
				& purity       & 0.667      & \textbf{0.700} & 0.680 & 0.680 & 0.680 & \textbf{0.980} & \textbf{0.920} & \textbf{0.683} & \textbf{0.690} & 0.677 & \textbf{0.993} & \textbf{0.997} & \textbf{1.000} & \textbf{1.000} & \textbf{0.943} & \textbf{1.000} & \textbf{1.000} & \textbf{1.000} & 0.993 & \textbf{0.990} & \textbf{1.000} & \textbf{1.000} & \textbf{1.000} & \textbf{1.000} & 0.990\vspace{1.5ex} \cr \bottomrule
			\end{tabular}%
			\begin{tablenotes}
				\footnotesize
				\item Clustering performance of the methods are evaluated by three widely-used metrics: Adjusted Rand Index (adjRI), Normalized Mutual Information (normMI), and purity. A higher value of the metrics indicates better clustering performance. Each column represents a simulated data set where the corresponding number indicates the number of the redundant variables ($\Nv_{redun}$) added to the complete view (View 1) and correlation between each the redundant variables and the original variables. The bolded numbers are the maximum value for each the evalution measure within a simulation data set.
			\end{tablenotes}
		\end{threeparttable}
	\end{adjustbox}	
\end{sidewaystable}

    \newpage
\begin{sidewaystable}
	\centering
	\caption[caption]{\textbf{Evaluation of the Clutering Methods on Scenario B-Redun}} \label{table:evalclusteringBredun}
	\begin{adjustbox}{max width=\textwidth}
		\def\arraystretch{0.95}
		\begin{threeparttable}
			\begin{tabular}{@{}llccccccccccccccccccccccccc@{}}
				\toprule
				\multicolumn{27}{l}{\large \textbf{Scenario B-2}} \cr \midrule
				& {\bf cor}
				& \multicolumn{5}{c}{{\bf 0.45}}   & \multicolumn{5}{c}{\bf 0.72}  & \multicolumn{5}{c}{\bf 0.90}    
				& \multicolumn{5}{c}{\bf 0.97}   & \multicolumn{5}{c}{\bf 1}     \cr \cmidrule(lr){3-7} \cmidrule(lr){8-12} \cmidrule(lr){13-17} \cmidrule(lr){18-22} \cmidrule(lr){23-27} 
				& $\Nv_{redun}$     
				& \textbf{2}    & \textbf{4}    & \textbf{6}    & \textbf{8}    & \textbf{10}    & \textbf{2}    & \textbf{4}    & \textbf{6}    & \textbf{8}    & \textbf{10}    & \textbf{2}    & \textbf{4}    & \textbf{6}    & \textbf{8}    & \textbf{10}    & \textbf{2}    & \textbf{4}    & \textbf{6}    & \textbf{8}    & \textbf{10}    & \textbf{2}    & \textbf{4}    & \textbf{6}    & \textbf{8}    & \textbf{10}    \cr \midrule 
				\multirow{3}{*}{\begin{tabular}{@{}l@{}}Single\cr Best\end{tabular}}
				& adjRI   & 0.487      & \textbf{0.487} & \textbf{0.487} & \textbf{0.487} & \textbf{0.487} & 0.487 & 0.487 & \textbf{0.487} & \textbf{0.487} & \textbf{0.487} & 0.487 & 0.487 & 0.487 & 0.487 & 0.487 & 0.487 & 0.487 & 0.487 & 0.487 & 0.487 & 0.487 & 0.487 & 0.487 & 0.487 & 0.487 \rule{0pt}{3.3ex}\cr
				& normMI  & 0.811      & 0.811 & \textbf{0.811} & \textbf{0.811} & \textbf{0.811} & 0.811 & 0.811 & \textbf{0.811} & 0.811 & 0.811 & 0.811 & 0.811 & 0.811 & 0.811 & 0.811 & 0.811 & 0.811 & 0.811 & 0.811 & 0.811 & 0.811 & 0.811 & 0.811 & 0.811 & 0.811 \cr
				& purity  & 0.663      & 0.663 & 0.663 & \textbf{0.663} & \textbf{0.663} & 0.663 & 0.663 & 0.663 & 0.663 & 0.663 & 0.663 & 0.663 & 0.663 & 0.663 & 0.663 & 0.663 & 0.663 & 0.663 & 0.663 & 0.663 & 0.663 & 0.663 & 0.663 & 0.663 & 0.663 \vspace{1.5ex} \cr \midrule
				\multirow{3}{*}{\begin{tabular}{@{}l@{}}Uniform\cr Weight\end{tabular}}
				& adjRI   & 0.487      & \textbf{0.487} & \textbf{0.487} & \textbf{0.487} & \textbf{0.487} & 0.914 & 0.487 & \textbf{0.487} & \textbf{0.487} & \textbf{0.487} & \textbf{0.990} & 0.951 & 0.497 & 0.487 & 0.487 & \textbf{1.000} & \textbf{1.000} & \textbf{1.000} & \textbf{0.990} & 0.961 & \textbf{1.000} & \textbf{1.000} & \textbf{1.000} & \textbf{1.000} & \textbf{1.000} \rule{0pt}{3.3ex}\cr
				& normMI  & 0.812      & \textbf{0.812} & \textbf{0.811} & \textbf{0.811} & \textbf{0.811} & 1.288 & 0.811 & \textbf{0.811} & 0.811 & 0.811 & \textbf{1.418} & 1.355 & 0.837 & 0.811 & 0.811 & \textbf{1.443} & \textbf{1.443} & \textbf{1.443} & \textbf{1.418} & 1.359 & \textbf{1.443} & \textbf{1.443} & \textbf{1.443} & \textbf{1.443} & \textbf{1.443} \cr
				& purity  & 0.670      & 0.670 & 0.663 & \textbf{0.663} & \textbf{0.663} & 0.970 & 0.670 & \textbf{0.667} & 0.663 & 0.663 & \textbf{0.997} & 0.983 & 0.680 & 0.667 & 0.667 & \textbf{1.000} & \textbf{1.000} & \textbf{1.000} & \textbf{0.997} & 0.987 & \textbf{1.000} & \textbf{1.000} & \textbf{1.000} & \textbf{1.000} & \textbf{1.000} \vspace{1.5ex} \cr \midrule
				\multirow{3}{*}{\begin{tabular}{@{}l@{}}MinMax \cr MinC\end{tabular}}
				& adjRI   & 0.487      & \textbf{0.487} & \textbf{0.487} & \textbf{0.487} & \textbf{0.487} & 0.914 & 0.487 & \textbf{0.487} & \textbf{0.487} & \textbf{0.487} & \textbf{0.990} & 0.951 & 0.497 & 0.487 & 0.487 & \textbf{1.000} & \textbf{1.000} & \textbf{1.000} & \textbf{0.990} & 0.961 & \textbf{1.000} & \textbf{1.000} & \textbf{1.000} & \textbf{1.000} & \textbf{1.000} \rule{0pt}{3.3ex}\cr
				& normMI  & 0.812      & \textbf{0.812} & \textbf{0.811} & \textbf{0.811} & \textbf{0.811} & 1.288 & 0.811 & \textbf{0.811} & 0.811 & 0.811 & \textbf{1.418} & 1.355 & 0.837 & 0.811 & 0.811 & \textbf{1.443} & \textbf{1.443} & \textbf{1.443} & \textbf{1.418} & 1.359 & \textbf{1.443} & \textbf{1.443} & \textbf{1.443} & \textbf{1.443} & \textbf{1.443} \cr
				& purity  & 0.670      & 0.670 & 0.663 & \textbf{0.663} & \textbf{0.663} & 0.970 & 0.670 & \textbf{0.667} & 0.663 & 0.663 & \textbf{0.997} & 0.983 & 0.680 & 0.667 & 0.667 & \textbf{1.000} & \textbf{1.000} & \textbf{1.000} & \textbf{0.997} & 0.987 & \textbf{1.000} & \textbf{1.000} & \textbf{1.000} & \textbf{1.000} & \textbf{1.000} \vspace{1.5ex} \cr \midrule
				\multirow{3}{*}{\begin{tabular}{@{}l@{}}Yu's\cr OKKC\end{tabular}}
				& adjRI   & 0.487      & \textbf{0.487} & \textbf{0.487} & \textbf{0.487} & \textbf{0.487} & 0.487 & 0.487 & \textbf{0.487} & \textbf{0.487} & \textbf{0.487} & 0.487 & 0.487 & 0.487 & 0.487 & 0.487 & 0.869 & 0.487 & 0.487 & 0.487 & 0.487 & \textbf{1.000} & 0.519 & 0.487 & 0.487 & 0.487 \rule{0pt}{3.3ex}\cr
				& normMI  & 0.811      & 0.811 & \textbf{0.811} & \textbf{0.811} & \textbf{0.811} & 0.811 & 0.811 & \textbf{0.811} & 0.811 & 0.811 & 0.811 & 0.811 & 0.811 & 0.811 & 0.811 & 1.221 & 0.811 & 0.811 & 0.811 & 0.811 & \textbf{1.443} & 0.855 & 0.811 & 0.811 & 0.811 \cr
				& purity  & 0.663      & 0.663 & 0.663 & \textbf{0.663} & \textbf{0.663} & 0.663 & 0.663 & 0.663 & 0.663 & 0.663 & 0.663 & 0.663 & 0.663 & 0.663 & 0.663 & 0.953 & 0.663 & 0.663 & 0.663 & 0.663 & \textbf{1.000} & 0.737 & 0.663 & 0.663 & 0.663 \vspace{1.5ex} \cr \midrule
				\multirow{3}{*}{\begin{tabular}{@{}l@{}}Liu's\cr MKK-MIR\end{tabular}}
				& adjRI   & 0.487      & \textbf{0.487} & \textbf{0.487} & \textbf{0.487} & \textbf{0.487} & 0.487 & 0.487 & \textbf{0.487} & \textbf{0.487} & \textbf{0.487} & 0.487 & 0.487 & 0.487 & 0.487 & 0.487 & \textbf{1.000} & 0.487 & 0.487 & 0.487 & 0.487 & \textbf{1.000} & \textbf{1.000} & \textbf{1.000} & 0.487 & 0.487 \rule{0pt}{3.3ex}\cr
				& normMI  & 0.811      & 0.811 & \textbf{0.811} & \textbf{0.811} & \textbf{0.811} & 0.811 & 0.811 & \textbf{0.811} & 0.811 & 0.811 & 0.812 & 0.811 & 0.811 & 0.811 & 0.811 & \textbf{1.443} & 0.812 & 0.811 & 0.811 & 0.811 & \textbf{1.443} & \textbf{1.443} & \textbf{1.443} & 0.812 & 0.812 \cr
				& purity  & 0.663      & 0.663 & 0.663 & \textbf{0.663} & \textbf{0.663} & 0.663 & 0.663 & 0.663 & 0.663 & 0.663 & 0.670 & 0.663 & 0.663 & 0.663 & 0.663 & \textbf{1.000} & 0.670 & 0.663 & 0.663 & 0.663 & \textbf{1.000} & \textbf{1.000} & \textbf{1.000} & 0.670 & 0.670 \vspace{1.5ex} \cr \midrule
				\multirow{3}{*}{\begin{tabular}{@{}l@{}}Gonen's\cr MKK \end{tabular}}  
				& adjRI   & 0.487      & \textbf{0.487} & \textbf{0.487} & \textbf{0.487} & \textbf{0.487} & 0.487 & 0.487 & \textbf{0.487} & \textbf{0.487} & \textbf{0.487} & 0.487 & 0.487 & 0.487 & 0.487 & 0.487 & 0.487 & 0.487 & 0.487 & 0.487 & 0.487 & 0.487 & 0.487 & 0.487 & 0.487 & 0.487 \rule{0pt}{3.3ex}\cr
				& normMI  & 0.811      & 0.811 & \textbf{0.811} & \textbf{0.811} & \textbf{0.811} & 0.811 & 0.811 & \textbf{0.811} & 0.811 & 0.811 & 0.811 & 0.811 & 0.811 & 0.811 & 0.811 & 0.811 & 0.811 & 0.811 & 0.811 & 0.811 & 0.811 & 0.811 & 0.811 & 0.811 & 0.811 \cr
				& purity  & 0.663      & 0.663 & 0.663 & \textbf{0.663} & \textbf{0.663} & 0.663 & 0.663 & 0.663 & 0.663 & 0.663 & 0.663 & 0.663 & 0.663 & 0.663 & 0.663 & 0.663 & 0.663 & 0.663 & 0.663 & 0.663 & 0.663 & 0.663 & 0.663 & 0.663 & 0.663 \vspace{1.5ex} \cr \midrule
				\multirow{3}{*}{\begin{tabular}{@{}l@{}}Gonen's\cr LMKK \end{tabular}}  
				& adjRI   & 0.487      & \textbf{0.487} & \textbf{0.487} & \textbf{0.487} & \textbf{0.487} & 0.487 & 0.487 & \textbf{0.487} & \textbf{0.487} & \textbf{0.487} & 0.487 & 0.487 & 0.487 & 0.487 & 0.487 & 0.487 & 0.487 & 0.487 & 0.487 & 0.487 & 0.614 & 0.618 & 0.487 & 0.487 & 0.487 \rule{0pt}{3.3ex}\cr
				& normMI  & 0.811      & 0.811 & \textbf{0.811} & \textbf{0.811} & \textbf{0.811} & 0.811 & 0.811 & \textbf{0.811} & 0.811 & 0.811 & 0.811 & 0.811 & 0.811 & 0.811 & 0.811 & 0.812 & 0.812 & 0.811 & 0.811 & 0.811 & 0.964 & 0.965 & 0.811 & 0.811 & 0.811 \cr
				& purity  & 0.663      & 0.663 & 0.663 & \textbf{0.663} & \textbf{0.663} & 0.663 & 0.663 & 0.663 & 0.663 & 0.663 & 0.663 & 0.663 & 0.663 & 0.663 & 0.663 & 0.670 & 0.670 & 0.663 & 0.663 & 0.663 & 0.860 & 0.863 & 0.663 & 0.663 & 0.663 \vspace{1.5ex} \cr \midrule
				\multirow{3}{*}{Our Method} 
				& adjRI   & \textbf{0.632}      & \textbf{0.487} & \textbf{0.487} & \textbf{0.487} & \textbf{0.487} & \textbf{0.961} & \textbf{0.590} & \textbf{0.487} & \textbf{0.487} & \textbf{0.487} & \textbf{0.990} & \textbf{0.980} & \textbf{0.990} & \textbf{0.980} & \textbf{0.914} & \textbf{1.000} & \textbf{1.000} & \textbf{1.000} & \textbf{0.990} & \textbf{0.980} & \textbf{1.000} & \textbf{1.000} & \textbf{1.000} & \textbf{1.000} & \textbf{1.000} \rule{0pt}{3.3ex}\cr
				& normMI  & \textbf{1.009}      & \textbf{0.812} & \textbf{0.811} & \textbf{0.811} & \textbf{0.811} & \textbf{1.369} & \textbf{0.921} & \textbf{0.811} & \textbf{0.812} & \textbf{0.812} & \textbf{1.418} & \textbf{1.394} & \textbf{1.418} & \textbf{1.400} & \textbf{1.308} & \textbf{1.443} & \textbf{1.443} & \textbf{1.443} & \textbf{1.418} & \textbf{1.394} & \textbf{1.443} & \textbf{1.443} & \textbf{1.443} & \textbf{1.443} & \textbf{1.443} \cr
				& purity  & \textbf{0.840}      & \textbf{0.673} & \textbf{0.667} & \textbf{0.663} & \textbf{0.663} & \textbf{0.987} & \textbf{0.810} & \textbf{0.667} & \textbf{0.670} & \textbf{0.673} & \textbf{0.997} & \textbf{0.993} & \textbf{0.997} & \textbf{0.993} & \textbf{0.970} & \textbf{1.000} & \textbf{1.000} & \textbf{1.000} & \textbf{0.997} & \textbf{0.993} & \textbf{1.000} & \textbf{1.000} & \textbf{1.000} & \textbf{1.000} & \textbf{1.000} \vspace{1.5ex} \cr \bottomrule
			\end{tabular}%
			\begin{tablenotes}
				\footnotesize
				\item Clustering performance of the methods are evaluated by three widely-used metrics: Adjusted Rand Index (adjRI), Normalized Mutual Information (normMI), and purity. A higher value of the metrics indicates better clustering performance. Each column represents a simulated data set where the number indicates the number of the redundant variables ($\Nv_{redun}$) added to the first partial view (View 1) and correlation between each the redundant variables and the original variables. The bolded numbers are the maximum value for each the evaluation measure within a simulation data set.
			\end{tablenotes}
		\end{threeparttable}
	\end{adjustbox}	
\end{sidewaystable}
    \newpage
\begin{sidewaystable}
	\centering
	\caption[caption]{\textbf{Evaluation of the Clutering Methods on Scenario C-Redun}} \label{table:evalclusteringCredun}
	\begin{adjustbox}{max width=\textwidth}
		\def\arraystretch{0.95}
		\begin{threeparttable}
			\begin{tabular}{@{}llccccccccccccccccccccccccc@{}}
				\toprule
				\multicolumn{27}{l}{\large \textbf{Scenario C-2}} \cr \midrule
				& {\bf cor}
				& \multicolumn{5}{c}{{\bf 0.45}}   & \multicolumn{5}{c}{\bf 0.72}  & \multicolumn{5}{c}{\bf 0.90}    
				& \multicolumn{5}{c}{\bf 0.97}   & \multicolumn{5}{c}{\bf 1}     \cr \cmidrule(lr){3-7} \cmidrule(lr){8-12} \cmidrule(lr){13-17} \cmidrule(lr){18-22} \cmidrule(lr){23-27} 
				& $\Nv_{redun}$     
				& \textbf{1}    & \textbf{2}    & \textbf{3}    & \textbf{4}    & \textbf{5}    & \textbf{1}    & \textbf{2}    & \textbf{3}    & \textbf{4}    & \textbf{5}    & \textbf{1}    & \textbf{2}    & \textbf{3}    & \textbf{4}    & \textbf{5}    & \textbf{1}    & \textbf{2}    & \textbf{3}    & \textbf{4}    & \textbf{5}    & \textbf{1}    & \textbf{2}    & \textbf{3}    & \textbf{4}    & \textbf{5}    \cr \midrule 
                \multirow{3}{*}{\begin{tabular}{@{}l@{}}Single\cr Best\end{tabular}}
				& adjRI   & 0.487      & 0.487 & 0.487 & 0.487 & 0.487 & 0.487 & 0.487 & 0.487 & 0.487 & 0.487 & 0.487 & 0.487 & 0.487 & 0.487 & 0.487 & 0.487 & 0.487 & 0.487 & 0.487 & 0.487 & 0.487 & 0.487 & 0.487 & 0.487 & 0.487 \rule{0pt}{3.3ex}\cr
				& normMI  & 0.811      & 0.811 & 0.811 & 0.811 & 0.811 & 0.811 & 0.811 & 0.811 & 0.811 & 0.811 & 0.811 & 0.811 & 0.811 & 0.811 & 0.811 & 0.811 & 0.811 & 0.811 & 0.811 & 0.811 & 0.811 & 0.811 & 0.811 & 0.811 & 0.811 \cr
				& purity  & 0.663      & 0.663 & 0.663 & 0.663 & 0.663 & 0.663 & 0.663 & 0.663 & 0.663 & 0.663 & 0.663 & 0.663 & 0.663 & 0.663 & 0.663 & 0.663 & 0.663 & 0.663 & 0.663 & 0.663 & 0.663 & 0.663 & 0.663 & 0.663 & 0.663 \vspace{1.5ex} \cr \midrule
				\multirow{3}{*}{\begin{tabular}{@{}l@{}}Uniform\cr Weight\end{tabular}}
				& adjRI   & 0.488      & 0.487 & 0.487 & 0.487 & 0.487 & 0.914 & 0.487 & 0.487 & 0.487 & 0.487 & \textbf{0.990} & 0.970 & 0.494 & 0.487 & 0.487 & \textbf{1.000} & \textbf{1.000} & \textbf{1.000} & \textbf{0.990} & 0.970 & \textbf{1.000} & \textbf{1.000} & \textbf{1.000} & \textbf{1.000} & \textbf{1.000} \rule{0pt}{3.3ex}\cr
				& normMI  & 0.812      & 0.811 & 0.811 & 0.811 & 0.811 & 1.295 & 0.812 & 0.812 & 0.812 & 0.811 & \textbf{1.418} & 1.384 & 0.818 & 0.812 & 0.812 & \textbf{1.443} & \textbf{1.443} & \textbf{1.443} & \textbf{1.418} & 1.375 & \textbf{1.443} & \textbf{1.443} & \textbf{1.443} & \textbf{1.443} & \textbf{1.443} \cr
				& purity  & 0.673      & 0.667 & 0.670 & 0.667 & 0.667 & 0.970 & 0.677 & 0.667 & 0.667 & 0.667 & \textbf{0.997} & 0.990 & 0.703 & 0.673 & 0.677 & \textbf{1.000} & \textbf{1.000} & \textbf{1.000} & \textbf{0.997} & 0.990 & \textbf{1.000} & \textbf{1.000} & \textbf{1.000} & \textbf{1.000} & \textbf{1.000} \vspace{1.5ex} \cr \midrule
				\multirow{3}{*}{\begin{tabular}{@{}l@{}}MinMax \cr MinC\end{tabular}}
				& adjRI   & 0.488      & 0.487 & 0.487 & 0.487 & 0.487 & 0.914 & 0.487 & 0.487 & 0.487 & 0.487 & \textbf{0.990} & 0.970 & 0.494 & 0.487 & 0.487 & \textbf{1.000} & \textbf{1.000} & \textbf{1.000} & \textbf{0.990} & 0.970 & \textbf{1.000} & \textbf{1.000} & \textbf{1.000} & \textbf{1.000} & \textbf{1.000} \rule{0pt}{3.3ex}\cr
				& normMI  & 0.812      & 0.811 & 0.811 & 0.811 & 0.811 & 1.295 & 0.812 & 0.812 & 0.812 & 0.811 & \textbf{1.418} & 1.384 & 0.818 & 0.812 & 0.812 & \textbf{1.443} & \textbf{1.443} & \textbf{1.443} & \textbf{1.418} & 1.375 & \textbf{1.443} & \textbf{1.443} & \textbf{1.443} & \textbf{1.443} & \textbf{1.443} \cr
				& purity  & 0.673      & 0.667 & 0.670 & 0.667 & 0.667 & 0.970 & 0.677 & 0.667 & 0.667 & 0.667 & \textbf{0.997} & 0.990 & 0.703 & 0.673 & 0.677 & \textbf{1.000} & \textbf{1.000} & \textbf{1.000} & \textbf{0.997} & 0.990 & \textbf{1.000} & \textbf{1.000} & \textbf{1.000} & \textbf{1.000} & \textbf{1.000} \vspace{1.5ex} \cr \midrule
				\multirow{3}{*}{\begin{tabular}{@{}l@{}}Yu's\cr OKKC\end{tabular}} 
				& adjRI   & 0.487      & 0.487 & 0.487 & 0.487 & 0.487 & 0.487 & 0.487 & 0.487 & 0.487 & 0.487 & 0.487 & 0.487 & 0.487 & 0.487 & 0.487 & 0.869 & 0.487 & 0.487 & 0.487 & 0.487 & \textbf{1.000} & 0.519 & 0.487 & 0.487 & 0.487 \rule{0pt}{3.3ex}\cr
				& normMI  & 0.811      & 0.811 & 0.811 & 0.811 & 0.811 & 0.811 & 0.811 & 0.811 & 0.811 & 0.811 & 0.811 & 0.811 & 0.811 & 0.811 & 0.811 & 1.221 & 0.811 & 0.811 & 0.811 & 0.811 & \textbf{1.443} & 0.855 & 0.811 & 0.811 & 0.811 \cr
				& purity  & 0.663      & 0.663 & 0.663 & 0.663 & 0.663 & 0.663 & 0.663 & 0.663 & 0.663 & 0.663 & 0.663 & 0.663 & 0.663 & 0.663 & 0.663 & 0.953 & 0.663 & 0.663 & 0.663 & 0.663 & \textbf{1.000} & 0.737 & 0.663 & 0.663 & 0.663 \vspace{1.5ex} \cr \midrule
				\multirow{3}{*}{\begin{tabular}{@{}l@{}}Liu's\cr MKK-MIR\end{tabular}}
				& adjRI   & 0.487      & 0.487 & 0.487 & 0.487 & 0.487 & 0.487 & 0.487 & 0.487 & 0.487 & 0.487 & 0.487 & 0.487 & 0.487 & 0.487 & 0.487 & \textbf{1.000} & 0.487 & 0.487 & 0.487 & 0.487 & \textbf{1.000} & \textbf{1.000} & \textbf{1.000} & 0.487 & 0.487 \rule{0pt}{3.3ex}\cr
				& normMI  & 0.811      & 0.811 & 0.811 & 0.811 & 0.811 & 0.811 & 0.811 & 0.811 & 0.811 & 0.811 & 0.812 & 0.812 & 0.811 & 0.811 & 0.811 & \textbf{1.443} & 0.812 & 0.811 & 0.811 & 0.811 & \textbf{1.443} & \textbf{1.443} & \textbf{1.443} & 0.812 & 0.812 \cr
				& purity  & 0.663      & 0.663 & 0.663 & 0.663 & 0.663 & 0.667 & 0.663 & 0.663 & 0.663 & 0.663 & 0.670 & 0.670 & 0.667 & 0.663 & 0.663 & \textbf{1.000} & 0.670 & 0.663 & 0.663 & 0.663 & \textbf{1.000} & \textbf{1.000} & \textbf{1.000} & 0.670 & 0.670 \vspace{1.5ex} \cr \midrule
				\multirow{3}{*}{\begin{tabular}{@{}l@{}}Gonen's\cr MKK \end{tabular}}
				& adjRI   & 0.487      & 0.487 & 0.487 & 0.487 & 0.487 & 0.487 & 0.487 & 0.487 & 0.487 & 0.487 & 0.487 & 0.487 & 0.487 & 0.487 & 0.487 & 0.487 & 0.487 & 0.487 & 0.487 & 0.487 & 0.487 & 0.487 & 0.487 & 0.487 & 0.487 \rule{0pt}{3.3ex}\cr
				& normMI  & 0.811      & 0.811 & 0.811 & 0.811 & 0.811 & 0.811 & 0.811 & 0.811 & 0.811 & 0.811 & 0.811 & 0.811 & 0.811 & 0.811 & 0.811 & 0.811 & 0.811 & 0.811 & 0.811 & 0.811 & 0.811 & 0.811 & 0.811 & 0.811 & 0.811 \cr
				& purity  & 0.663      & 0.663 & 0.663 & 0.663 & 0.663 & 0.663 & 0.663 & 0.663 & 0.663 & 0.663 & 0.663 & 0.663 & 0.663 & 0.663 & 0.663 & 0.663 & 0.663 & 0.663 & 0.663 & 0.663 & 0.663 & 0.663 & 0.663 & 0.663 & 0.663 \vspace{1.5ex} \cr \midrule
				\multirow{3}{*}{\begin{tabular}{@{}l@{}}Gonen's\cr LMKK \end{tabular}}
				& adjRI   & 0.490      & 0.490 & 0.490 & 0.490 & 0.490 & 0.490 & 0.490 & 0.490 & 0.490 & 0.490 & 0.490 & 0.490 & 0.490 & 0.490 & 0.490 & 0.487 & 0.490 & 0.490 & 0.490 & 0.490 & 0.614 & 0.652 & 0.490 & 0.490 & 0.490 \rule{0pt}{3.3ex}\cr
				& normMI  & 0.814      & 0.814 & 0.814 & 0.814 & 0.814 & 0.814 & 0.814 & 0.814 & 0.814 & 0.814 & 0.814 & 0.814 & 0.814 & 0.814 & \textbf{0.814} & 0.811 & 0.814 & 0.814 & 0.814 & 0.814 & 0.964 & 0.984 & 0.814 & 0.814 & 0.814 \cr
				& purity  & 0.683      & \textbf{0.683} & \textbf{0.683} & \textbf{0.683} & \textbf{0.683} & 0.683 & 0.683 & \textbf{0.683} & \textbf{0.683} & \textbf{0.683} & 0.683 & 0.683 & 0.683 & 0.683 & 0.683 & 0.663 & 0.683 & 0.683 & 0.683 & 0.683 & 0.860 & 0.870 & 0.683 & 0.683 & 0.683 \vspace{1.5ex} \cr \midrule
				\multirow{3}{*}{Our Method}
				& adjRI   & \textbf{0.592}      & \textbf{0.498} & \textbf{0.498} & \textbf{0.498} & \textbf{0.498} & \textbf{0.941} & \textbf{0.656} & \textbf{0.498} & \textbf{0.498} & \textbf{0.498} & \textbf{0.990} & \textbf{0.990} & \textbf{0.980} & \textbf{0.827} & \textbf{0.499} & \textbf{1.000} & \textbf{1.000} & 0.990 & \textbf{0.990} & \textbf{0.980} & \textbf{1.000} & \textbf{1.000} & \textbf{1.000} & \textbf{1.000} & \textbf{1.000} \rule{0pt}{3.3ex}\cr
				& normMI  & \textbf{0.913}      & \textbf{0.837} & \textbf{0.837} & \textbf{0.837} & \textbf{0.837} & \textbf{1.343} & \textbf{0.988} & \textbf{0.838} & \textbf{0.837} & \textbf{0.837} & \textbf{1.418} & \textbf{1.418} & \textbf{1.400} & \textbf{1.168} & 0.801 & \textbf{1.443} & \textbf{1.443} & 1.418 & \textbf{1.418} & \textbf{1.400} & \textbf{1.443} & \textbf{1.443} & \textbf{1.443} & \textbf{1.443} & \textbf{1.443} \cr
				& purity  & \textbf{0.813}      & 0.680 & 0.680 & 0.680 & 0.680 & \textbf{0.980} & \textbf{0.853} & \textbf{0.683} & 0.673 & 0.680 & \textbf{0.997} & \textbf{0.997} & \textbf{0.993} & \textbf{0.937} & \textbf{0.743} & \textbf{1.000} & \textbf{1.000} & 0.997 & \textbf{0.997} & \textbf{0.993} & \textbf{1.000} & \textbf{1.000} & \textbf{1.000} & \textbf{1.000} & \textbf{1.000} \vspace{1.5ex} \cr \bottomrule
			\end{tabular}%
			\begin{tablenotes}
				\footnotesize
				\item Clustering performance of the methods are evaluated by three widely-used metrics: Adjusted Rand Index (adjRI), Normalized Mutual Information (normMI), and purity. A higher value of the metrics indicates better clustering performance. Each column represents a simulated data set where the number indicates the number of the redundant variables ($\Nv_{redun}$) added to the complete view and correlation between each the redundant variables and the original variables. The bolded numbers are the maximum value for each the evalution measure within a simulation data set.
			\end{tablenotes}
		\end{threeparttable}
	\end{adjustbox}	
\end{sidewaystable}

    \newpage
    \begin{table}[ht]
	\centering
	\caption[caption]{\textbf{List of BRCA/GBM related KEGG pathway}} \label{table:pathwaylist}
	\begin{adjustbox}{max width=\textwidth}
		\def\arraystretch{0.95}
		\begin{threeparttable}
			\begin{tabular}{@{}lcp{10cm}@{}}
				\toprule
				\small Group &  Size & Pathway\cr \midrule 
				\small Breast caner & 
						8	& 
						Estrogen signaling pathway;
						PI3K-Akt signaling pathway;
						Notch signaling pathway;
						Wnt signaling pathway;
						Homologous recombination;
						MAPK signaling pathway;
						p53 signaling pathway\cr
						Cell cycle
				\small Glioma & 
						7	&
						MAPK signaling pathway;
						p53 signaling pathway;
						Cell cycle;
						Cytokine-cytokine receptor interaction;
						ErbB signaling pathway;
						Calcium signaling pathway;
						mTOR signaling pathway\cr
				\small Pathways in cancer & 
						20 &
						Estrogen signaling pathway;
						PI3K-Akt signaling pathway;
						Notch signaling pathway;
						Wnt signaling pathway;
						MAPK signaling pathway;
						p53 signaling pathway;
						Cell cycle;
						Adherens junction;
						ECM-receptor interaction;
						Focal adhesion;
						cAMP signaling pathway;
						Jak-STAT signaling pathway;
						Hedgehog signaling pathway;
						HIF-1 signaling pathway;
						VEGF signaling pathway;
						Apoptosis;
						TGF-beta signaling pathway;
						Cytokine-cytokine receptor interaction;
						Calcium signaling pathway;
						mTOR signaling pathway
						\cr
				\small Central carbon metabolism in cancer & 
						10 &
						MAPK signaling pathway;
						PI3K-Akt signaling pathway;
						mTOR signaling pathway;
						HIF-1 signaling pathway;
						Alanine, aspartate and glutamate metabolism;
						Citrate cycle (TCA cycle);
						Fatty acid biosynthesis;
						Glycolysis / Gluconeogenesis;
						Glycine, serine and threonine metabolism;
						Oxidative phosphorylation
						\cr \bottomrule 
			\end{tabular}%
			\begin{tablenotes}
				\footnotesize
				\item The BRCA/GBM related biological pathways are provided by KEGG Pathway Database (\url{https://www.kegg.jp}), which defined independently from our data analysis. The list of BRCA related pathways is consist of pathways from Breast caner, Pathways in cancer, and Central carbon metabolism in caner. The list of GBM related pathways is consist of pathways from Glioma, Pathways in cancer, and Central carbon metabolism in cancer. 
			\end{tablenotes}
		\end{threeparttable}
	\end{adjustbox}	
\end{table}
    ~
    \newpage
    \begin{table}[ht]
	\centering
	\caption[caption]{\textbf{BRCA related KEGG pathways identified by our method}} \label{table:pathwayBRCA}
	\begin{adjustbox}{max width=\textwidth}
		\def\arraystretch{0.95}
		\begin{threeparttable}
			\begin{tabular}{@{}lcp{10cm}@{}}
				\toprule
				P-value & Cluster & Enriched Pathway \cr \midrule 
				0.080 & 3 & Cell cycle \cr 
				0.098 & 4 & Alanine, aspartate and glutamate metabolism \cr 
				0.037 & 5 & Focal adhesion \cr
				0.043 & 5 & Cytokine-cytokine receptor interaction \cr
				0.049 & 5 & ECM-receptor interaction \cr \bottomrule 
			\end{tabular}%
			\begin{tablenotes}
				\footnotesize
				\item BRCA related KEGG pathways identified by our method ($pvalue<0.1$) for each cluster and p-values from the pathway enrichment analysis are reported. P-values were adjusted to control the false discovery rate using the Benjamini-Hochberg procedure \cite{benjamini1995controlling}. 
			\end{tablenotes}
		\end{threeparttable}
	\end{adjustbox}	
\end{table}

    ~
    \newpage
    \begin{table}[ht]
	\centering
	\caption[caption]{\textbf{GBM related KEGG pathways identified by our method}} \label{table:pathwayGBM}
	\begin{adjustbox}{max width=\textwidth}
		\def\arraystretch{0.95}
		\begin{threeparttable}
			\begin{tabular}{@{}lcp{10cm}@{}}
				\toprule
				P-value & Cluster & Enriched Pathway \cr \midrule 
				0.000 & 1 & Focal adhesion \cr 
				0.001 & 1 & ECM-receptor interaction \cr 
				0.022 & 1 & MAPK signaling pathway \cr 
				0.056 & 1 & ErbB signaling pathway \cr 
				0.057 & 1 & Calcium signaling pathway \cr 
				0.095 & 1 & Adherens junction \cr 
				0.007 & 2 & Focal adhesion \cr 
				0.034 & 2 & ECM-receptor interaction \cr 
				0.090 & 3 & Calcium signaling pathway \cr 
				0.030 & 4 & Calcium signaling pathway \cr 
				0.072 & 4 & MAPK signaling pathway \cr \bottomrule 
			\end{tabular}%
			\begin{tablenotes}
				\footnotesize
				\item BRCA related KEGG pathways identified by our method ($pvalue<0.1$) for each cluster and p-values from the pathway enrichment analysis are reported. P-values were adjusted to control the false discovery rate using the Benjamini-Hochberg procedure \cite{benjamini1995controlling}. 
			\end{tablenotes}
		\end{threeparttable}
	\end{adjustbox}	
\end{table}
\clearpage

\end{document}